\def\eqref#1{equation~\ref{#1}}
\def\1{\bm{1}}
\DeclareMathAlphabet{\mathsfit}{\encodingdefault}{\sfdefault}{m}{sl}
\SetMathAlphabet{\mathsfit}{bold}{\encodingdefault}{\sfdefault}{bx}{n}
\newcommand{\inject}[1]{{#1}}
\newcommand{\google}{$^\ddagger$}
\newcommand{\stanford}{$^\dagger$}
\newtheorem{prop}{Proposition}
\title{Weakly Supervised Disentanglement\\with Guarantees}
\author{
Rui Shu\stanford\thanks{Work done during an internship at Google Brain.}\,, 
Yining Chen\stanford,
Abhishek Kumar\google,
Stefano Ermon\stanford \&
Ben Poole\google\\
\stanford Stanford University, \google Google Brain\\
\stanford \texttt{\{ruishu,cynnjjs,ermon\}@stanford.edu} \\
\google \texttt{\{abhishk,pooleb\}@google.com} 
}
\begin{document}
\maketitle
\begin{abstract}
Learning \emph{disentangled} representations that correspond to  factors of variation in real-world data is critical to interpretable and human-controllable machine learning. Recently, concerns about the viability of learning disentangled representations in a purely unsupervised manner has spurred a shift toward the incorporation of weak supervision. However, there is currently no formalism that identifies when and how weak supervision will guarantee disentanglement. To address this issue, we provide a theoretical framework to assist in analyzing the disentanglement guarantees (or lack thereof) conferred by weak supervision when coupled with learning algorithms based on distribution matching.
We empirically verify the guarantees and limitations of several weak supervision methods (restricted labeling, match-pairing, and rank-pairing),
demonstrating the predictive power and usefulness of our theoretical framework.
\end{abstract}
\ificlrfinal\else\vspace{40pt}\fi

\section{Introduction}
%
%
%

Many real-world datasets can be intuitively described via a data-generating process that first samples an underlying set of interpretable factors, and then---conditional on those factors---generates an observed data point. For example, in image generation, one might first sample the object identity and pose, and then render an image with the object in the correct pose.
The goal of disentangled representation learning is to learn a representation where each dimension of the representation corresponds to a distinct \emph{factor of variation} in the dataset \citep{bengio2013representation}. Learning such representations that align with the underlying factors of variation may be critical to the development of machine learning models that are explainable or human-controllable \citep{gilpin2018explaining,lee2019drit,klys2018learning}.

In recent years, disentanglement research has focused on the learning of such representations in an \emph{unsupervised} fashion, using only independent samples from the data distribution without access to the true factors of variation \citep{higgins2017beta,chen2018isolating,kim2018disentangling,esmaeili2018structured}. 
However, \citet{locatello2018challenging} demonstrated that many existing methods for the unsupervised learning of disentangled representations are brittle, 
requiring careful supervision-based hyperparameter tuning. To build robust disentangled representation learning methods that do not require large amounts of supervised data, recent work has turned to forms of weak supervision \citep{chen2019weakly,gabbay2019latent}. Weak supervision can allow one to build models that have interpretable representations even when human labeling is challenging (e.g., hair style in face generation, or style in music generation).
While existing methods based on weakly-supervised learning demonstrate empirical gains, there is no existing formalism for describing the theoretical guarantees conferred by different forms of weak supervision \citep{kulkarni2015deep,reed2015deep,bouchacourt2018multi}. 


In this paper, we present a comprehensive theoretical framework for weakly supervised disentanglement, and evaluate our framework on several datasets. Our contributions are several-fold.
\begin{enumerate}
    \item We formalize weakly-supervised learning as distribution matching in an extended space.
    \item We propose a set of definitions for disentanglement that can handle correlated factors and are inspired by many existing definitions in the literature \citep{higgins2018towards,suter2018interventional,ridgeway2018learning}. 
    \item Using these definitions, we provide a conceptually useful and theoretically rigorous calculus of disentanglement.
    \item We apply our theoretical framework of disentanglement to analyze three notable classes of weak supervision methods (restricted labeling, match pairing, and rank pairing). We show that although certain weak supervision methods (e.g., style-labeling in style-content disentanglement) do not guarantee disentanglement, our calculus can determine whether disentanglement is guaranteed when multiple sources of weak supervision are combined.
    \item Finally, we perform extensive experiments to systematically and empirically verify our predicted guarantees.\footnote{\ificlrfinal
    Code available at
    \href{https://github.com/google-research/google-research/tree/master/weak\_disentangle}{https://github.com/google-research/google-research/tree/master/weak\_disentangle}
    \else
    Anonymized code at
    \href{https://drive.google.com/open?id=1VjMNOD3uFrCx4nZSgLlKrVLVbluPnGPa}{https://drive.google.com/open?id=1VjMNOD3uFrCx4nZSgLlKrVLVbluPnGPa}
    \fi
    } 
\end{enumerate}



\section{From unsupervised to weakly supervised distribution matching}
Our goal in disentangled representation learning is to identify a latent-variable generative model whose latent variables correspond to ground truth factors of variation in the data. To identify the role that weak supervision plays in providing guarantees on disentanglement, we first formalize the model families we are considering, the forms of weak supervision, and finally the metrics we will use to evaluate and prove components of disentanglement.

We consider data-generating processes where $S \in \mathbb{R}^n$ are the factors of variation, with distribution $p^*(s)$, and $X \in \mathbb{R}^m$ is the observed data point which is a deterministic function of $S$, i.e., $X = \gen^*(S)$.
Many existing algorithms in \emph{unsupervised} learning of disentangled representations aim to learn a latent-variable model with prior $p(z)$ and generator $\gen$, where $\gen(Z) \deq \gen^*(S)$. However, simply matching the marginal distribution over data is not enough: the learned latent variables $Z$ and the true generating factors $S$ could still be entangled with each other \citep{locatello2018challenging}.

To address the failures of unsupervised learning of disentangled representations, we leverage weak supervision, where information about the data-generating process is conveyed through additional observations. By performing distribution matching on an augmented space (instead of just on the observation $X$), we can provide guarantees on learned representations. 

\begin{figure}[h]
\begin{center}
\includegraphics[width=0.8\textwidth]{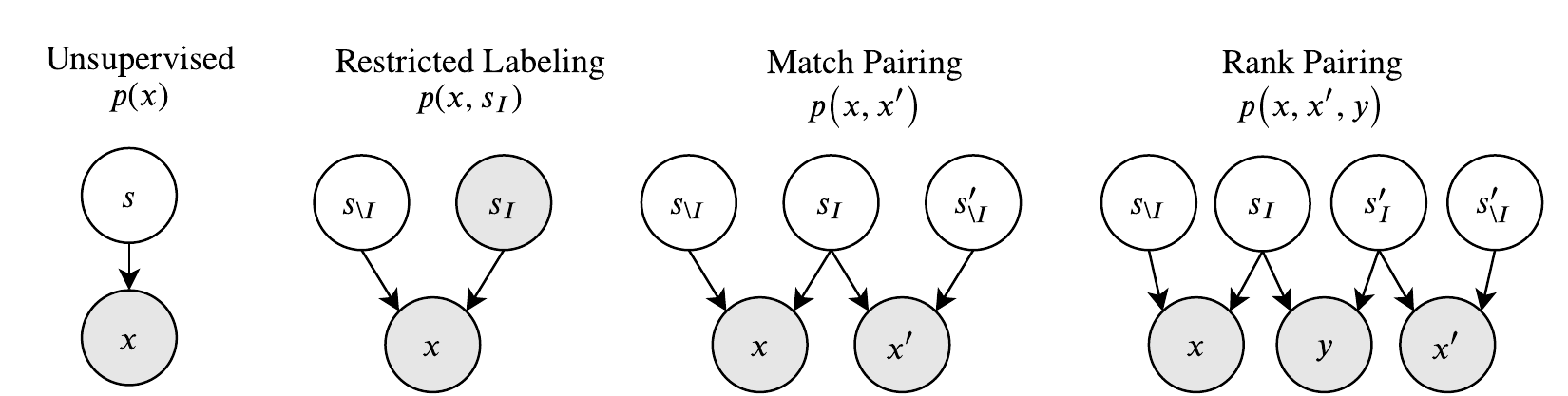}
\end{center}
\caption{Augmented data distributions derived from weak supervision. Shaded nodes denote observed quantities, and unshaded nodes represent unobserved (latent) variables. 
}
\label{fig:weaksup}
\end{figure}

We consider three practical forms of weak supervision: restricted labeling, match pairing, and rank pairing. 
All of these forms of supervision can be thought of as augmented forms of the original joint distribution, where we partition the latent variables in two $S = (S_I, S_{\sm I})$, and either observe a subset of the latent variables or share latents between multiple samples.
A visualization of these augmented distributions is presented in \Cref{fig:weaksup}, and below we detail each form of weak supervision.

In \textbf{restricted labeling}, we observe a subset of the ground truth factors, $S_I$ in addition to $X$. This allows us to perform distribution matching on $p^*(s_I, x)$, the joint distribution over data and observed factors, instead of just the data, $p^*(x)$, as in unsupervised learning.
This form of supervision is often leveraged in style-content disentanglement, where labels are available for content but not style \citep{kingma2014semi,narayanaswamy2017learning,chen2018sample,gabbay2019latent}.
%

\textbf{Match Pairing} uses paired data, $(x, x')$ that share values for a known subset of factors, $I$.
\inject{For many data modalities, factors of variation may be difficult to explicitly label. Instead,  it may be easier to collect pairs of samples that share the same underlying factor (e.g., collecting pairs of images of different people wearing the same glasses is easier than defining labels for style of glasses).}
Match pairing is a weaker form of supervision than restricted labeling, as the learning algorithm no longer depends on the underlying value $s_I$, and only on the indices of shared factors $I$. Several variants of match pairing have appeared in the literature \citep{kulkarni2015deep,bouchacourt2018multi,ridgeway2018learning}, but typically focus on groups of observations in contrast to the paired setting we consider in this paper.

\textbf{Rank Pairing} is another form of paired data generation where the pairs $(x, x')$ are generated in an i.i.d. fashion, and an additional indicator variable $y$ is observed that determines whether the corresponding latent $s_i$ is greater than $s_i'$: 
    $y = \1\set{s_i \ge s_i'}$.
\inject{Such a form of supervision is effective when it is easier to compare two samples with respect to an underlying factor than to directly collect labels (e.g., comparing two object sizes versus providing a ruler measurement of an object).}
Although supervision via ranking features prominently in the metric learning literature \citep{mcfee2010metric,wang2014learning}, our focus in this paper will be on rank pairing in the context of disentanglement guarantees.

For each form of weak supervision, we can train generative models with the same structure as in \Cref{fig:weaksup}, using data sampled from the ground truth model and a distribution matching objective. For example, for match pairing, we train a generative model $(p(z), g)$ such that the paired random variable $(\gen(Z_I, Z_{\sm I}), \gen(Z_I, Z_{\sm I}'))$ from the generator matches the distribution of the corresponding paired random variable $(g^*(S_I, S_{\sm I}), g^*(S_I, S'_{\sm I}))$ from the augmented data distribution.
\section{Defining Disentanglement}
\label{sec:assumptions}




To identify the role that weak supervision plays in providing guarantees on disentanglement, we introduce a set of definitions that are consistent with our intuitions about what constitutes ``disentanglement'' and amenable to theoretical analysis. Our new definitions decompose disentanglement into two distinct concepts: consistency and restrictiveness. Different forms of weak supervision can enable consistency or restrictiveness on subsets of factors, and in \Cref{sec:calculus} we build up a calculus of disentanglement from these primitives. We discuss the relationship to prior definitions of disentanglement in \cref{sec:connections}.



\subsection{Decomposing Disentanglement into Consistency and Restrictiveness}

\begin{figure}[h]
\begin{center}
\begin{subfigure}[b]{0.32\textwidth}
\includegraphics[scale=0.19]{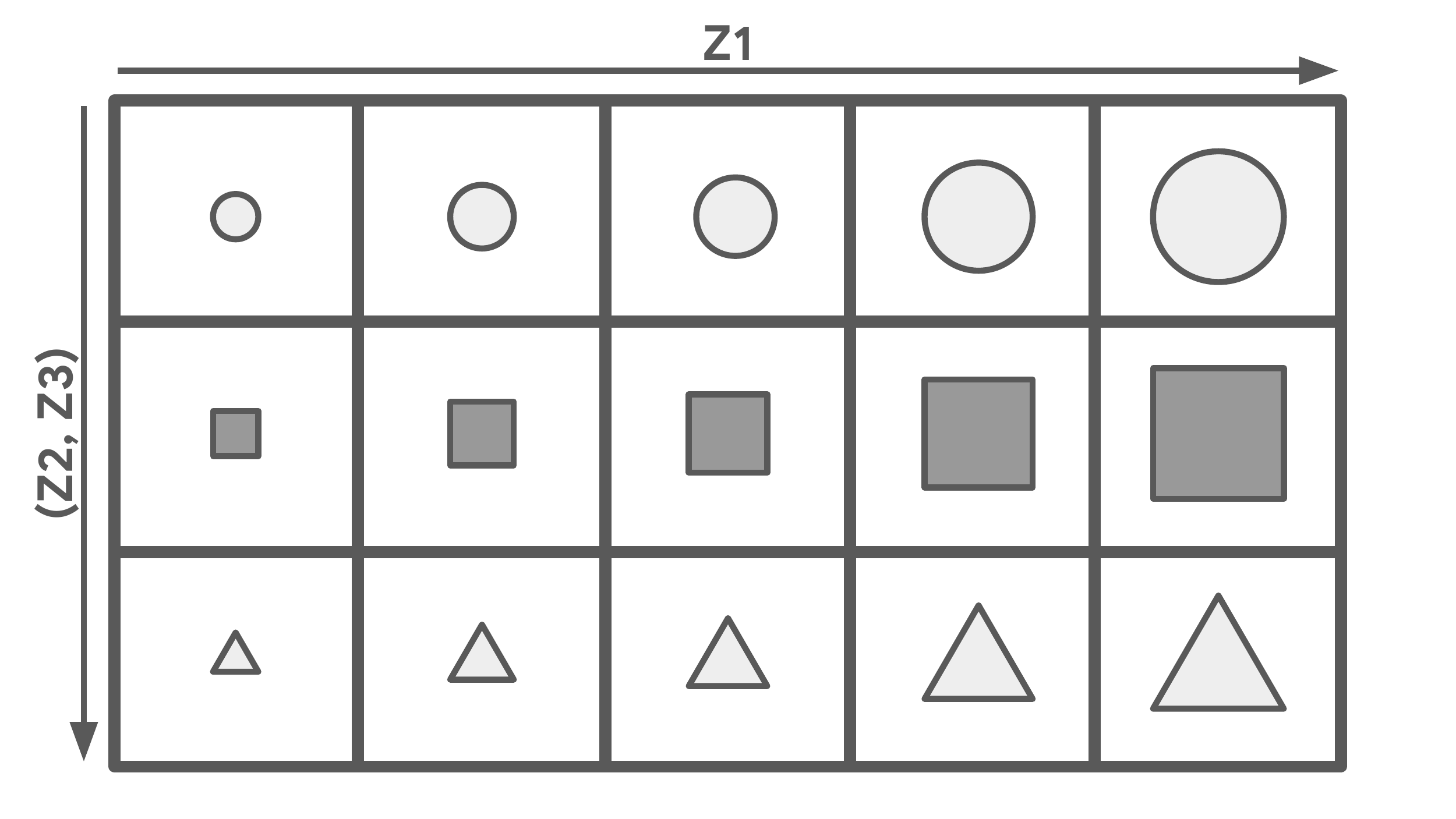}
\caption{Disentanglement}\label{fig:disentanglement}
\end{subfigure}
\begin{subfigure}[b]{0.32\textwidth}
\includegraphics[scale=0.19]{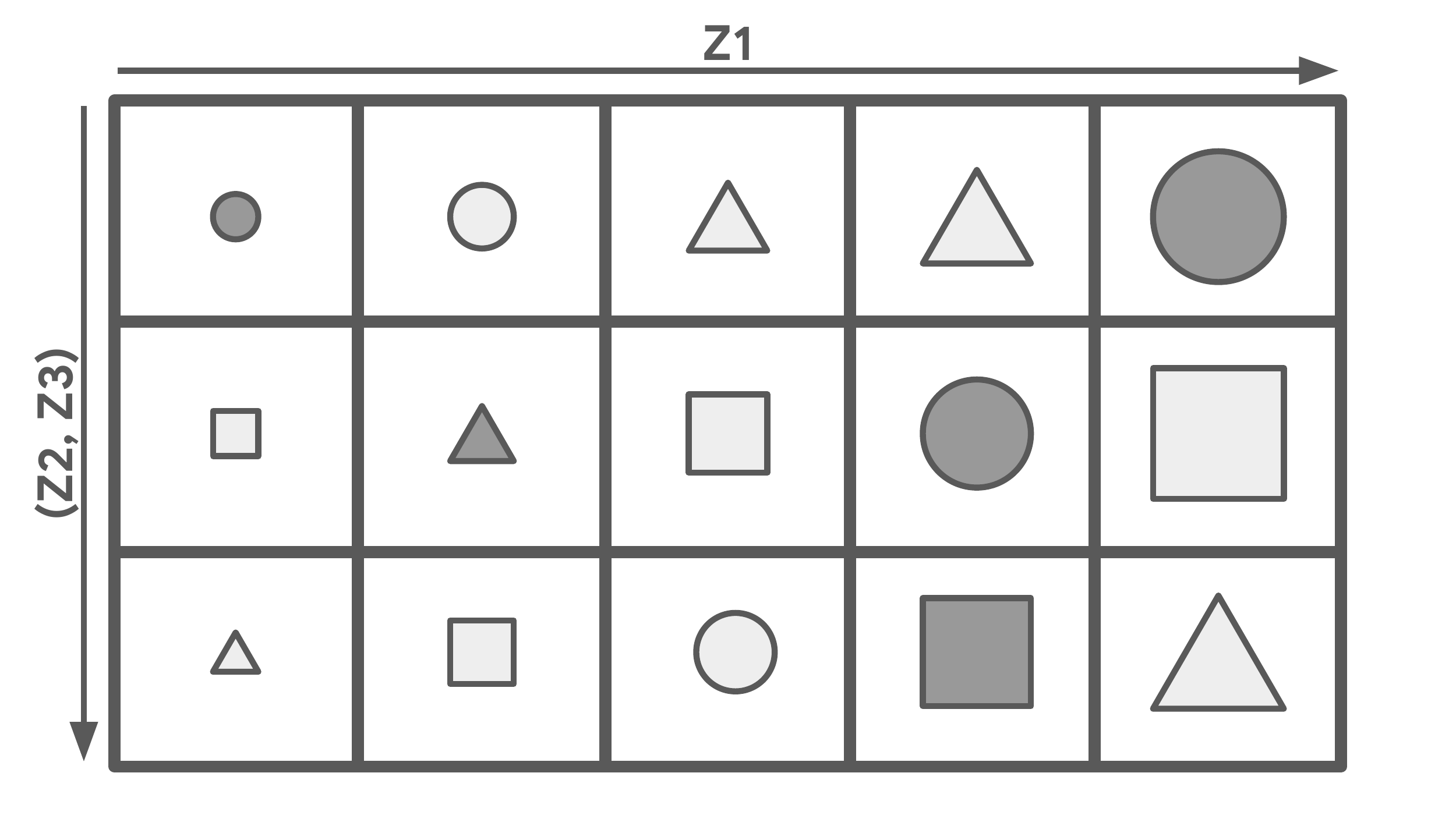}
\caption{Consistency}\label{fig:consistency}
\end{subfigure}
\begin{subfigure}[b]{0.32\textwidth}
\includegraphics[scale=0.19]{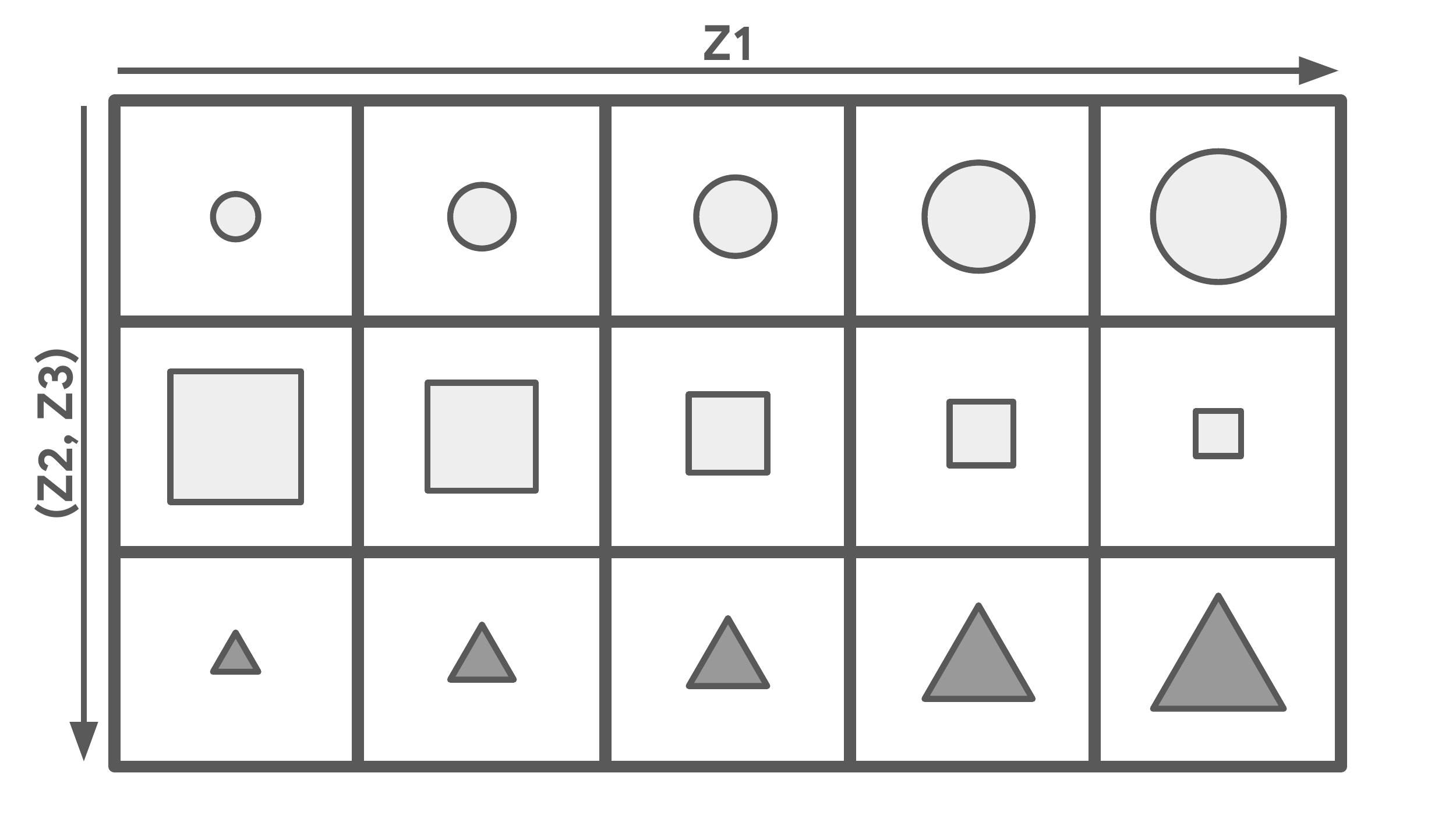}
\caption{Restrictiveness}\label{fig:restrictiveness}
\end{subfigure}
\end{center}
\caption{Illustration of disentanglement, consistency, and restrictiveness of $z_1$ with respect to the factor of variation \emph{size}. Each image of a shape represents the decoding $g(z_{1:3})$ by the generative model. Each column denotes a fixed choice of $z_1$. Each row denotes a fixed choice of $(z_2, z_3)$. A demonstration of consistency versus restrictiveness on models from \texttt{disentanglement\_lib} is available in \cref{app:dlib}.}\label{fig:definitions}
\end{figure}

To ground our discussion of disentanglement, we consider an oracle that generates shapes with factors of variation for \emph{size} ($S_1$), \emph{shape} ($S_2$), and \emph{color} ($S_3$). How can we determine whether $Z_1$ of our generative model ``disentangles'' the concept of size? Intuitively, one way to check whether $Z_1$ of the generative model disentangles \emph{size} ($S_1$) is to visually inspect what happens as we vary $Z_1$, $Z_2$, and $Z_3$, and see whether the resulting visualizations are consistent with \Cref{fig:disentanglement}. In doing so, our visual inspection checks for two properties:
\begin{enumerate}
    \item When $Z_1$ is fixed, the \emph{size} ($S_1$) of the generated object never changes.
    \item When only $Z_1$ is changed, the change is restricted to the \emph{size} ($S_1$) of the generated object, meaning that there is no change in $S_j$ for $j\ne 1$.
\end{enumerate}
We argue that disentanglement decomposes into these two properties, which we refer to as \emph{generator consistency} and \emph{generator restrictiveness}. Next, we formalize these two properties.

Let $\H$ be a hypothesis class of generative models from which we assume the true data-generating function is drawn. Each element of the hypothesis class $\H$ is a tuple $(p(s), \gen, \enc)$, where $p(s)$ describes the distribution over factors of variation, the generator $\gen$ is a function that maps from the factor space $\S \in \mathbb{R}^n$ to the observation space $\X \in \mathbb{R}^m$, and the encoder $\enc$ is a function that maps from $\X \to \S$. $S$ and $X$ can consist of both discrete and continuous random variables. We impose a few mild assumptions on $\H$ (see \cref{appen:assumptions}). Notably, we assume every factor of variation is exactly recoverable from the observation $X$, i.e. $e(g(S)) = S$. 


Given an oracle model $h^* = (p^*, \gen^*, \enc^*) \in \H$, we would like to learn a model $h=(p, \gen, \enc) \in \H$ whose latent variables disentangle the latent variables in $h^*$.
We refer to the latent-variables in the oracle $h^*$ as $S$ and the alternative model $h$'s latent variables as $Z$. If we further restrict $h$ to only those models where $\gen(Z) \deq \gen^*(S)$ are equal in distribution, it is natural to align $Z$ and $S$ via $S = \enc^* \circ g(Z)$. Under this relation between $Z$ and $S$, our goal is to construct definitions that describe whether the latent code $Z_i$ disentangles the corresponding factor $S_i$.

\textbf{Generator Consistency.}
Let $I$ denote a set of indices and $p_I$ denote the generating process
\begin{align}
    z_I &\sim p(z_I) \\
    z_{\sm I}, z_{\sm I}' &\simiid p(z_{\sm I} \giv z_I).
\end{align}
This generating process samples $Z_I$ once and then conditionally samples $Z_I$ twice in an i.i.d. fashion. We say that $Z_I$ is consistent with $S_I$ if
\begin{align}
    \Expect_{p_I} 
    \| 
    \enc^*_I \circ \gen(z_I, z_{\sm I}) - 
    \enc^*_I \circ \gen(z_I, z_{\sm I}') 
    \|^2 = 0, \label{eq:consistency}
\end{align}
where $\enc^*_I$ is the oracle encoder restricted to the indices $I$. 

Intuitively, \Cref{eq:consistency} states that, for any fixed choice of $Z_I$, resampling of $Z_{\sm I}$ will not influence the oracle's measurement of the factors $S_I$. In other words, $S_I$ is \emph{invariant} to changes in $Z_{\sm I}$. 
An illustration of a generative model where $Z_1$ is consistent with \emph{size} ($S_1$) is provided in \Cref{fig:consistency}. A notable property of our definition is that the prescribed sampling process $p_I$ does not require the underlying factors of variation to be statistically independent. We characterize this property in contrast to previous definitions of disentanglement in \cref{sec:connections}.


\textbf{Generator Restrictiveness.}
Let $p_{\sm I}$ denote the generating process
\begin{align}
    z_{\sm I} &\sim p(z_{\sm I}) \\
    z_{I}, z_{I}' &\simiid p(z_{I} \giv z_{\sm I}).
\end{align}
We say that $Z_I$ is restricted to $S_I$ if
\begin{align}
    \Expect_{p_{\sm I}} \| 
    \enc^*_{\sm I} \circ \gen(z_I, z_{\sm I}) - 
    \enc^*_{\sm I} \circ \gen(z_I', z_{\sm I}) 
    \|^2 = 0. \label{eq:restrictiveness}
\end{align}
\Cref{eq:restrictiveness} states that, for any fixed choice of $Z_{\sm I}$, resampling of $Z_I$ will not influence the oracle's measurement of the factors $S_{\sm I}$. In other words, $S_{\sm I}$ is \emph{invariant} to changes in $Z_I$. Thus, changing $Z_I$ is restricted to modifying only $S_I$. An illustration of a generative model where $Z_1$ is restricted to \emph{size} ($S_1$) is provided in \Cref{fig:restrictiveness}. 

\textbf{Generator Disentanglement.}
We now say that $Z_I$ disentangles $S_I$ if $Z_I$ is consistent with \emph{and} restricted to $S_I$. If we denote consistency and restrictiveness via Boolean functions $C(I)$ and $R(I)$, we can now concisely state that
\begin{align}
    D(I) \defeq C(I) \wedge R(I),
\end{align}
where $D(I)$ denotes whether $Z_I$ disentangles $S_I$. An illustration of a generative model where $Z_1$ disentangles \emph{size} ($S_1$) is provided in \Cref{fig:disentanglement}. Note that while size increases monotonically with $Z_1$ in the schematic figure, we wish to clarify that monotonicity is unrelated to the concepts of consistency and restrictiveness.


\subsection{Relation to Bijectivity-Based Definition of Disentanglement}

Under our mild assumptions on $\H$, distribution matching on $g(Z) \deq g(S)$ combined with generator disentanglement on factor $I$ implies the existence of two invertible functions $f_I$ and $f_{\sm I}$ such that the alignment via $S = \enc^* \circ \gen(Z)$ decomposes into
\begin{align}
    \begin{bmatrix}
    S_I\\
    S_{\sm I}
    \end{bmatrix} =
    \begin{bmatrix}
    f_I(Z_I)\\
    f_{\sm I}(Z_{\sm I})
    \end{bmatrix}.
\end{align}
This expression highlights the connection between disentanglement and \emph{invariance}, whereby $S_I$ is only influenced by $Z_I$, and $S_{\sm I}$ is only influenced by $Z_{\sm I}$. 
However, such a bijectivity-based definition of disentanglement does not naturally expose the underlying primitives of \emph{consistency} and \emph{restrictiveness}, which we shall demonstrate in our theory and experiments to be valuable concepts for describing disentanglement guarantees under weak supervision.
\subsection{Encoder-Based Definitions for Disentanglement}\label{sec:encoding}
Our proposed definitions are asymmetric---measuring the behavior of a generative model against an oracle encoder. So far, we have chosen to present the definitions from the perspective of a learned generator $(p, g)$ measured against an oracle encoder $\enc^*$. In this sense, they are \emph{generator-based} definitions. We can also develop a parallel set of definitions for \emph{encoder-based} consistency, restrictiveness, and disentanglement within our framework simply by using an oracle generator $(p^*, \gen^*)$ measured against a learned encoder $\enc$. Below, we present the encoder-based perspective on consistency.

\textbf{Encoder Consistency.} Let $p^*_I$ denote the generating process
\begin{align}
    s_I &\sim p^*(s_I) \\
    s_{\sm I}, s_{\sm I}' &\simiid p^*(s_{\sm I}, \giv s_I).
\end{align}
This generating process samples $S_I$ once and then conditionally samples $S_I$ twice in an i.i.d. fashion. We say that $S_I$ is consistent with $Z_I$ if
\begin{align}
    \Expect_{p^*_I} 
    \| 
    \enc_I \circ \gen^*(s_I, s_{\sm I}) - 
    \enc_I \circ \gen^*(s_I, s_{\sm I}') 
    \|^2 = 0. \label{eq:encoder_consistency}
\end{align}
We now make two important observations. First, a valuable trait of our encoder-based definitions is that one can check for encoder consistency / restrictiveness / disentanglement \emph{as long as one has access to match pairing data from the oracle generator}. This is in contrast to the existing disentanglement definitions and metrics, which require access to the ground truth factors \citep{higgins2017beta,kumar2017variational,kim2018disentangling,chen2018isolating,suter2018interventional,ridgeway2018learning,eastwood2018framework}. The ability to check for our definitions in a weakly supervised fashion is the key to why we can develop a theoretical framework using the language of consistency and restrictiveness. Second, encoder-based definitions are tractable to measure when testing on synthetic data, since the synthetic data directly serves the role of the oracle generator. As such, while we develop our theory to guarantee \emph{both} generator-based and the encoder-based disentanglement, all of our measurements in the experiments will be conducted with respect to a learned encoder.




We make three remarks on notations. First, $D(i) \defeq D(\set{i})$. Second, $D(\varnothing)$ evaluates to true. Finally, $D(I)$ is implicitly dependent on either $(p, \gen, \enc^*)$ (generator-based) or $(p^*, \gen^*, \enc)$ (encoder-based). Where important, we shall make this dependency explicit (e.g., let $D(I \scolon p, \gen, \enc^*)$ denote generator-based disentanglement). We apply these conventions to $C$ and $R$ analogously.

\section{A Calculus of Disentanglement}\label{sec:calculus}


There are several interesting relationships between restrictiveness and consistency. First, by definition, $C(I)$ is equivalent to $R(\sm I)$.
Second, we can see from \Cref{fig:consistency,fig:restrictiveness} that $C(I)$ and $R(I)$ do not imply each other. Based on these observations and given that consistency and restrictiveness operate over \emph{subsets} of the random variables, a natural question that arises is whether consistency or restrictiveness over certain sets of variables imply additional properties over other sets of variables. We develop a calculus for discovering \emph{implied} relationships between learned latent variables $Z$ and ground truth factors of variation $S$ given known relationships as follows.

\begin{tcolorbox}[title=Calculus of Disentanglement,arc=0pt, label=Calculus]
\textbf{Consistency and Restrictiveness}

\begin{minipage}{.33\textwidth}
\centering $C(I) \notimplies R(I)$
\end{minipage}%
\begin{minipage}{.33\textwidth}
\centering $R(I) \notimplies C(I)$
\end{minipage}%
\begin{minipage}{.33\textwidth}
\centering $C(I) \iff R(\sm I)$
\end{minipage}%
\vspace{2mm}

\textbf{Union Rules}

\begin{minipage}{.5\textwidth}
\centering $C(I) \wedge C(J) \implies C(I \cup J)$
\end{minipage}%
\begin{minipage}{.5\textwidth}
\centering $R(I) \wedge R(J) \implies R(I \cup J)$
\end{minipage}%
\vspace{2mm}

\textbf{Intersection Rules}

\begin{minipage}{.5\textwidth}
\centering $C(I) \wedge C(J) \implies C(I \cap J)$
\end{minipage}%
\begin{minipage}{.5\textwidth}
\centering $R(I) \wedge R(J) \implies R(I \cap J)$
\end{minipage}%
\vspace{2mm}

\textbf{Full Disentanglement}

\begin{minipage}{.5\textwidth}
\centering $\bigwedge_{i=1}^n C(i) \iff \bigwedge_{i=1}^n D(i)$
\end{minipage}%
\begin{minipage}{.5\textwidth}
\centering $\bigwedge_{i=1}^n R(i) \iff \bigwedge_{i=1}^n D(i)$
\end{minipage}%
\end{tcolorbox}

Our calculus provides a theoretically rigorous procedure for reasoning about disentanglement. In particular, it is no longer necessary to prove whether the supervision method of interest satisfies consistency and restrictiveness for each and every factor. Instead, it suffices to show that a supervision method guarantees consistency or restrictiveness for a subset of factors, and then combine multiple supervision methods via the calculus to guarantee full disentanglement. We can additionally use the calculus to uncover consistency or restrictiveness on individual factors when weak supervision is available only for a subset of variables. For example, achieving consistency on $S_{1,2}$ and $S_{2,3}$ implies consistency on the intersection $S_2$. Furthermore, we note that these rules are agnostic to using generator or encoder-based definitions. We defer the complete proofs to \cref{sec:calculus_proofs}. 

\section{Formalizing Weak Supervision with Guarantees}

In this section, we address the question of whether disentanglement arises from the supervision method or model inductive bias. This challenge was first put forth by \citet{locatello2018challenging}, who noted that unsupervised disentanglement is heavily reliant on model inductive bias. As we transition toward supervised approaches, it is crucial that we formalize what it means for disentanglement to be guaranteed by weak supervision.


\textbf{Sufficiency for Disentanglement.} Let $\P$ denote a family of augmented distributions. We say that a weak supervision method $\strat: \H \to \P$ is \emph{sufficient} for learning a generator whose latent codes $Z_I$ disentangle the factors $S_I$ if there exists a learning algorithm $\A: \P \to \H$ such that for any choice of $(p^*(s), \gen^*, \enc^*) \in \H$, the procedure $\A \circ \strat (p^*(s), g^*, e^*)$ returns a model $(p(z), \gen, \enc)$ for which both $D(I \scolon p, \gen, \enc^*)$ and $D(I \scolon p^*, \gen^*, \enc)$ hold, and $\gen(Z) \deq \gen^*(S)$.

The key insight of this definition is that we force the strategy and learning algorithm pair $(\strat, \A)$ to handle all possible oracles drawn from the hypothesis class $\H$. This prevents the exploitation of model inductive bias, since any bias from the learning algorithm $\A$ toward a reduced hypothesis class $\hat{\H} \subset \H$ will result in failure to handle oracles in the complementary hypothesis class $\H \sm \hat{\H}$. 

The distribution matching requirement $g(Z) \deq g^*(S)$ ensures latent code informativeness, i.e., preventing trivial solutions where the latent code is uninformative (see \cref{thm:encoder_sufficiency} for formal statement). Intuitively, distribution matching paired with a deterministic generator guarantees invertibility of the learned generator and encoder, enforcing that $Z_I$ cannot encode less information than $S_I$ (e.g., only encoding age group instead of numerical age) and vice versa.

\section{Analysis of Weak Supervision Methods}\label{sec:weak_supervision}

We now apply our theoretical framework to three practical weak supervision methods: restricted labeling, match pairing, and rank pairing. Our main theoretical findings are that: (1) these methods can be applied in a targeted manner to provide single factor consistency or restrictiveness guarantees; (2) by enforcing consistency (or restrictiveness) on all factors, we can learn models with strong disentanglement performance. Correspondingly, \Cref{fig:heatmap} and \Cref{fig:boxplot_all_mig} are our main experimental results, demonstrating that these theoretical guarantees have predictive power in practice.

\subsection{Theoretical Guarantees from Weak Supervision}
We prove that if a training algorithm successfully matches the generated distribution to data distribution generated via restricted labeling, match pairing, or rank pairing of factors $S_I$, then $Z_I$ is guaranteed to be \emph{consistent} with $S_I$:

\begin{restatable}[]{theorem}{fta}
\label{thm:master}
Given any oracle $(p^*(s), \gen^*, \enc^*) \in \H$, consider the distribution-matching algorithm $\A$ that selects a model $(p(z), \gen, \enc) \in \H$ such that:
\begin{enumerate}
    \item $(g^*(S), S_I) \deq (g(Z), Z_I)$ (\textbf{Restricted Labeling}); or
    \item $\paren{\gen^*(S_I, S_{\sm I}), \gen^*(S_I, S_{\sm I}')}
    \deq 
    \paren{\gen(Z_I, Z_{\sm I}), \gen(Z_I, Z_{\sm I}')}$ (\textbf{Match Pairing}); or
    \item $\paren{\gen^*(S), \gen^*(S'), \1\set{S_I \le S'_I}}
    \deq 
    \paren{\gen(Z), \gen(Z'), \1\set{Z_I \le Z'_I}}(\textbf{Rank Pairing}).$
\end{enumerate}
Then (p, $\gen$) satisfies $C(I \scolon p, \gen, \enc^*)$ and $\enc$ satisfies $C(I \scolon p^*, \gen^*, \enc)$.
\end{restatable}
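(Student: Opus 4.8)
The plan is to prove all three cases in a unified way by reducing the "matching" hypothesis to a statement about the pushforward distribution on the \emph{aligned} latent variables $S = \enc^* \circ \gen(Z)$, and then verifying the consistency integral \eqref{eq:consistency} directly. First I would set up notation: given the model $(p(z),\gen,\enc)$ selected by $\A$, define the coupling random variable $S := \enc^* \circ \gen(Z)$, so that by the distribution-matching constraint $\gen(Z) \deq \gen^*(S)$ (which holds in all three cases, either explicitly or as a marginal consequence) together with the standing assumption $\enc^*(\gen^*(S)) = S$, the law of $S$ equals the oracle law $p^*(s)$. The goal $C(I \scolon p,\gen,\enc^*)$ then becomes the claim that under the generative process $p_I$ — sample $z_I \sim p(z_I)$, then $z_{\sm I}, z_{\sm I}' \simiid p(z_{\sm I}\giv z_I)$ — the $I$-components of $\enc^*\circ\gen$ agree almost surely on the two draws.

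For \textbf{Match Pairing} this is essentially immediate: the hypothesis says the paired variable $(\gen(Z_I,Z_{\sm I}), \gen(Z_I,Z_{\sm I}'))$ has the same law as $(\gen^*(S_I,S_{\sm I}), \gen^*(S_I,S_{\sm I}'))$; applying $\enc^*_I$ to both coordinates and using $\enc^*\circ\gen^* = \mathrm{id}$ on the right-hand side, we get that $(\enc^*_I\gen(Z_I,Z_{\sm I}), \enc^*_I\gen(Z_I,Z_{\sm I}'))$ is distributed as $(S_I, S_I)$, i.e. the two coordinates are equal almost surely, which is exactly \eqref{eq:consistency}. For \textbf{Restricted Labeling}, the hypothesis $(\gen^*(S), S_I) \deq (\gen(Z), Z_I)$ means that along the alignment $S = \enc^*\gen(Z)$ we have $Z_I = S_I$ as a deterministic equality (more precisely, $S_I$ is a measurable function of $\gen(Z)$ that coincides with $Z_I$), hence $\enc^*_I\gen(z_I,z_{\sm I})$ does not depend on $z_{\sm I}$ at all — consistency is trivial once this identification is made. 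The substantive case is \textbf{Rank Pairing}: here I would argue that matching the joint law $(\gen^*(S),\gen^*(S'),\1\{S_I \le S'_I\})$ forces, via the alignment, the indicator $\1\{Z_I \le Z'_I\}$ to equal $\1\{S_I \le S'_I\}$ almost surely for i.i.d. draws; since $S_I$ is a single coordinate (or one uses that the $\deq$ is over the product structure), the ordering of $Z_I$ must agree with the ordering of $S_I$ on $p^*$-almost-every pair, and by a standard argument (an order isomorphism between two real random variables whose pairwise orderings agree) $S_I$ is a monotone deterministic function of $Z_I$, hence again independent of $Z_{\sm I}$. Finally, the encoder-side claim $C(I \scolon p^*,\gen^*,\enc)$ follows by the symmetric version of the same argument with the roles of $(p,\gen,\enc^*)$ and $(p^*,\gen^*,\enc)$ swapped, using that $\enc(\gen^*(S))$ is the aligned variable on that side; I expect the paper to either spell this out or invoke symmetry.

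I expect the \textbf{main obstacle} to be the Rank Pairing case, specifically turning "the orderings of $Z_I$ and $S_I$ agree almost surely on i.i.d. pairs" into "$S_I = \phi(Z_I)$ for a deterministic monotone $\phi$," and then into the invariance statement $\enc^*_I\gen(z_I,z_{\sm I}) = \enc^*_I\gen(z_I,z_{\sm I}')$. The delicate points are (i) handling possible atoms / ties in the distribution of $Z_I$ (the indicator uses $\le$, so ties matter), (ii) ensuring the almost-sure order agreement, which is a statement about pairs, upgrades to an almost-sure functional relationship between the two marginals — this needs a Fubini-type argument plus the fact that a probability measure on $\mathbb{R}$ is determined by its CDF — and (iii) confirming that the mild assumptions on $\H$ (in particular exact recoverability $\enc^*\gen^* = \mathrm{id}$ and whatever regularity is imposed in \cref{appen:assumptions}) are enough to push the monotone relationship through $\gen$ and $\enc^*_I$ to yield the zero-variance conclusion. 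The other two cases I expect to be short; the real content, and the place where the hypotheses on $\H$ get used, is the ranking argument.
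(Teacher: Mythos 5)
Your treatment of restricted labeling and match pairing is essentially the paper's own proof: push the distributional equality through $\enc^*_I$, use exact recoverability ($\enc^*\circ\gen^*=\mathrm{id}$ a.s.) on the oracle side, and conclude that the two coordinates of the model pair (resp.\ $\enc^*_I\circ\gen(Z)$ and $Z_I$) agree almost surely, which makes the consistency expectation vanish; your observation that a.s.\ equality under the marginal suffices, without upgrading to a pointwise statement on $\B(Z)$, is if anything slightly more direct than the paper's detour through its support/continuity lemma. Where you genuinely diverge is rank pairing. The paper never constructs a monotone map: it notes that matching forces, with probability one over i.i.d.\ draws, the implication $Z_I\le Z_I'\Rightarrow \enc^*_I\circ\gen(Z)\le \enc^*_I\circ\gen(Z')$, and then proves the invariance $\enc^*_I\circ\gen(z_I,z_{\sm I})=\enc^*_I\circ\gen(z_I,z_{\sm I}')$ directly by contradiction, using the continuity/interior-of-support assumptions to manufacture a positive-probability set of pairs violating the ranking, with separate open-ball constructions for discrete and continuous $Z_I$. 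Your route instead upgrades the pairwise order agreement to a global functional relation $\enc^*_I\circ\gen(Z)=\phi(Z_I)$ via a quantile/CDF (order-isomorphism) argument, from which consistency is immediate. This can be made to work, and is arguably cleaner in the atomless case (condition on $Z$ and equate survival functions, $\bar G(f(Z))=\bar F(Z_I)$ a.s., then invert $\bar G$ off a null set), but the obstacles you flag are real: atoms and support gaps break injectivity of $\bar G$, so the discrete case needs a separate argument (note that there ties occur with positive probability, and exchangeability of the i.i.d.\ pair on the symmetric event $\{Z_I=Z_I'\}$ already forces $f(Z)=f(Z')$ a.s.\ conditionally, which settles that case more easily than a monotone map would). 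In short: same mechanism for cases 1--2; for case 3 the paper trades your global measure-theoretic construction for a local geometric contradiction that uses the hypothesis-class regularity assumptions more explicitly and handles discrete factors by case analysis rather than by ties or quantiles.
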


\cref{thm:master} states that distribution-matching under restricted labeling, match pairing, or rank pairing of $S_I$ guarantees both generator \emph{and} encoder consistency for the learned generator and encoder respectively. We note that while the complement rule $C(I) \implies R(\sm I)$ further guarantees that $Z_{\sm I}$ is restricted to $S_{\sm I}$, we can prove that the same supervision does \emph{not} guarantee that $Z_I$ is restricted to $S_I$ (\cref{thm:impossible}). However, if we additionally have restricted labeling for $S_{\sm I}$, or match pairing for $S_{\sm I}$, then we can see from the calculus that we will have guaranteed $R(I) \wedge C(I)$, thus implying disentanglement of factor $I$. 
We also note that while restricted labeling and match pairing can be applied on a set of factors at once (i.e. $|I| \ge 1$), rank pairing is restricted to one-dimensional factors for which an ordering exists. 
In the experiments below, we empirically verify the theoretical guarantees provided in \cref{thm:master}.


\subsection{Experiments}

We conducted experiments on five prominent datasets in the disentanglement literature: \emph{Shapes3D} \citep{kim2018disentangling}, 
\emph{dSprites} \citep{higgins2017beta}, 
\emph{Scream-dSprites} \citep{locatello2018challenging},
\emph{SmallNORB} \citep{lecun2004learning}, 
and
\emph{Cars3D} \citep{reed2015deep}. Since some of the underlying factors are treated as nuisance variables in SmallNORB and Scream-dSprites, we show in \cref{app:nuisance} that our theoretical framework can be easily adapted accordingly to handle such situations. We use generative adversarial networks (GANs, \citet{goodfellow2014generative}) for learning $(p, g)$ but any distribution matching algorithm (e.g., maximum likelihood training in tractable models, or VI in latent-variable models) could be applied. 
Our results are collected over a broad range of hyperparameter configurations (see \cref{app:hyperparameters} for details).

Since existing quantitative metrics of disentanglement all measure the performance of an encoder with respect to the true data generator, we trained an encoder \emph{post-hoc} to approximately invert the learned generator, and measured all quantitative metrics (e.g., mutual information gap) on the encoder. Our theory assumes that the learned generator must be invertible. While this is not true for conventional GANs, our empirical results show that this is not an issue in practice (see \cref{app:visualizations}).

We present three sets of experimental results: (1) Single-factor experiments, where we show that our theory can be applied in a \emph{targeted} fashion to guarantee consistency or restrictiveness of a single factor. (2) Consistency versus restrictiveness experiments, where we show the extent to which single-factor consistency and restrictiveness are correlated even when the models are only trained to maximize one or the other. (3) Full disentanglement experiments, where we apply our theory to fully disentangle all factors. A more extensive set of experiments can be found in the Appendix.

\subsubsection{Single-Factor Consistency and Restrictiveness}\label{sec:single_factor}
\begin{figure}[h]
\centerline{\includegraphics[width=\textwidth]{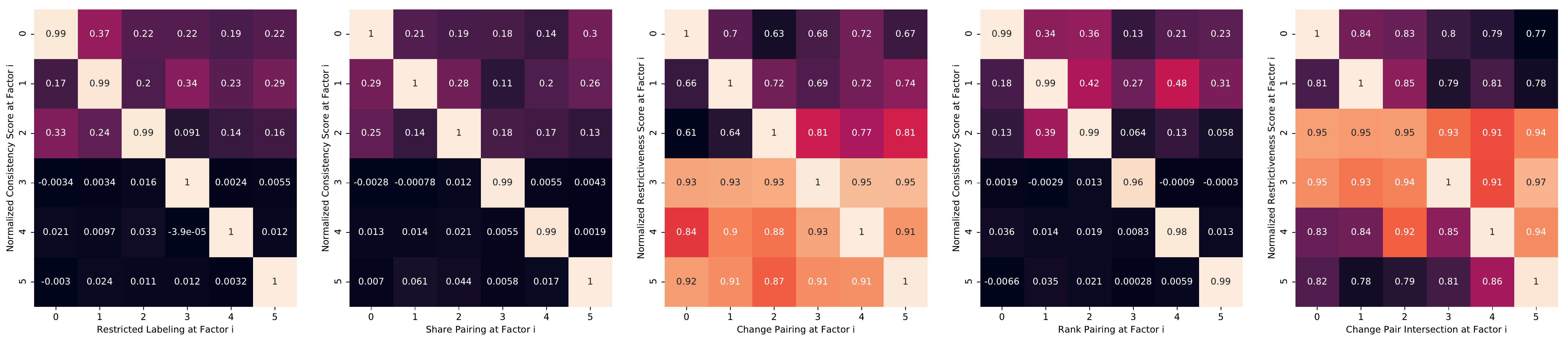}}
\caption{Heatmap visualization of ablation studies that measure either single-factor consistency or single-factor restrictiveness as a function of various supervision methods, conducted on Shapes3D. Our theory predicts the diagonal components to achieve the highest scores. Note that share pairing, change pairing, and change pair intersection are special cases of match pairing.}\label{fig:heatmap}
\end{figure}

We empirically verify that single-factor consistency or restrictiveness can be achieved with the supervision methods of interest. Note there are two special cases of match pairing: one where $S_i$ is the only factor that is shared between $x$ and $x'$ and one where $S_i$ is the only factor that is changed. We distinguish these two conditions as \emph{share pairing} and \emph{change pairing}, respectively. \Cref{thm:master} shows that restricted labeling, share pairing, and rank pairing of the $i^\nth$ factor are each sufficient supervision strategies for guaranteeing consistency on $S_i$. Change pairing at $S_i$ is equivalent to share pairing at $S_{\sm i}$; the complement rule $C(I) \iff R(\sm I)$ allows us to conclude that change pairing guarantees restrictiveness. The first four heatmaps in \Cref{fig:heatmap} show the results for restricted labeling, share pairing, change pairing, and rank pairing. The numbers shown in the heatmap are the \emph{normalized consistency and restrictiveness scores}. We define the normalized consistency score as
\begin{align}
    \tilde{c}(I \scolon p^*, \gen^*, e) = 
    1 - 
    \frac
    {\Expect_{p^*_I} \| e_I \circ \gen^*(s_I, s_{\sm I}) - e_I \circ \gen^*(s_I, s_{\sm I}')\|^2 }
    {\Expect_{s, s' \simiid p^*} \| e_I \circ \gen^*(s) - e_I \circ \gen^*(s')\|^2 }.
\end{align}
This score is bounded on the interval $[0, 1]$ (a consequence of \cref{lem:James}) and is maximal when $C(I \scolon p^*, \gen^*, e)$ is satisfied. This normalization procedure is similar in spirit to the Interventional Robustness score in \citet{suter2018interventional}. The normalized restrictiveness score $\tilde{r}$ can be analogously defined. In practice, we estimate this score via Monte Carlo estimation.

The final heatmap in \Cref{fig:heatmap} demonstrates the calculus of intersection. In practice, it may be easier to acquire paired data where multiple factors change simultaneously. If we have access to two kinds of datasets, one where $S_I$ are changed and one where $S_J$ are changed, our calculus predicts that training on both datasets will guarantee restrictiveness on $S_{I \cap J}$. The final heatmap shows six such intersection settings and measures the normalized restrictiveness score; in all but one setting, the results are consistent with our theory. We show in \Cref{fig:heatmap_cherry} that this inconsistency is attributable to the failure of the GAN to distribution-match due to sensitivity to a specific hyperparameter.

\subsubsection{Consistency versus Restrictiveness}
\begin{figure}[h]
\centerline{\includegraphics[width=\textwidth]{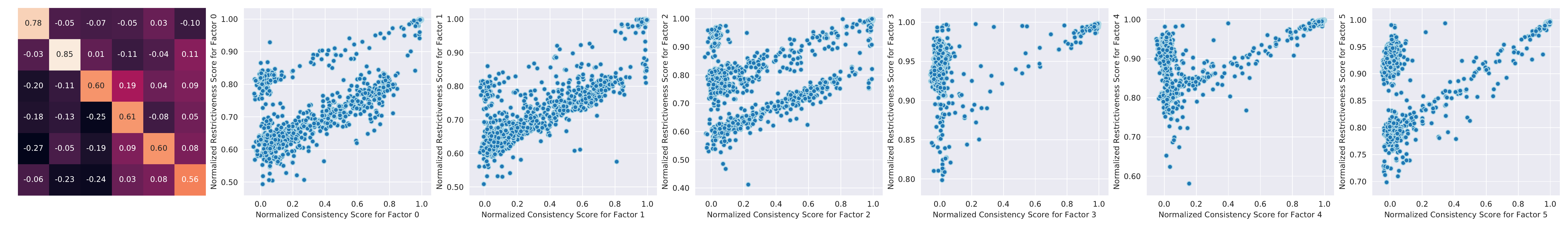}}
\caption{Correlation plot and scatterplots demonstrating the empirical relationship between $\tilde{c}(i)$ and $\tilde{r}(i)$ across all $864$ models trained on Shapes3D.}\label{fig:corrplot_shapes3d}
\end{figure}

We now determine the extent to which consistency and restrictiveness are correlated in practice. In \Cref{fig:corrplot_shapes3d}, we collected all $864$ Shapes3D models that we trained in \cref{sec:single_factor} and measured the consistency and restrictiveness of each model on each factor, providing both the correlation plot and scatterplots of $\tilde{c}(i)$ versus $\tilde{r}(i)$. 
Since the models trained in \cref{sec:single_factor} only ever targeted the consistency \emph{or} restrictiveness of a single factor, and since our calculus demonstrates that consistency and restrictiveness do not imply each other, one might \emph{a priori} expect to find no correlation in \Cref{fig:corrplot_shapes3d}. Our results show that the correlation is actually quite strong. Since this correlation is not guaranteed by our choice of weak supervision, it is necessarily a consequence of model inductive bias. We believe this correlation between consistency and restrictiveness to have been a general source of confusion in the disentanglement literature, causing many to either observe or believe that restricted labeling or share pairing on $S_i$ (which only guarantees consistency) is sufficient for disentangling $S_i$ \citep{kingma2014semi,chen2019weakly,gabbay2019latent,narayanaswamy2017learning}. 
It remains an open question why consistency and restrictiveness are so strongly correlated when training existing models on real-world data.

\subsubsection{Full Disentanglement}
\begin{figure}[h]
\centerline{\includegraphics[width=\textwidth]{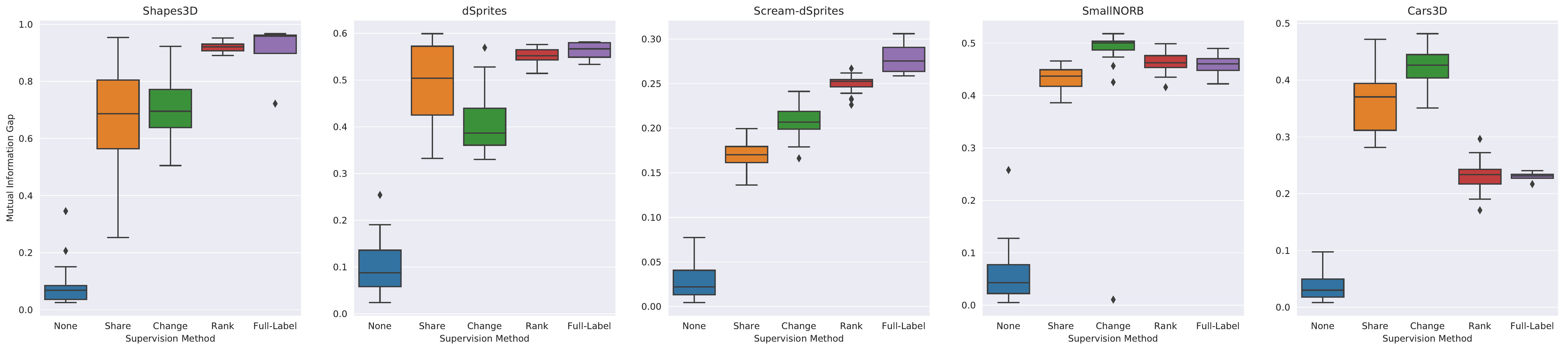}}
\caption{Disentanglement performance of a vanilla GAN, share pairing GAN, change pairing GAN, rank pairing GAN, and fully-labeled GAN, as measured by the mutual information gap across several datasets. A comprehensive set of performance evaluations on existing disentanglement metrics is available in \Cref{fig:boxplot_all_metrics}.}\label{fig:boxplot_all_mig}
\end{figure}

If we have access to share / change / rank-pairing data for each factor, our calculus states that it is possible to guarantee full disentanglement. We trained our generative model on either complete share pairing, complete change pairing, or complete rank pairing, and measured disentanglement performance via the discretized mutual information gap \citep{chen2018isolating,locatello2018challenging}. As negative and positive controls, we also show the performance of an unsupervised GAN and a fully-supervised GAN where the latents are fixed to the ground truth factors of variation. Our results in \Cref{fig:boxplot_all_mig} empirically verify that combining single-factor weak supervision datasets leads to consistently high disentanglement scores.




\section{Conclusion}
In this work, we construct a theoretical framework to rigorously analyze the disentanglement guarantees of weak supervision algorithms. Our paper clarifies several important concepts, such as consistency and restrictiveness, that have been hitherto confused or overlooked in the existing literature, and provides a formalism 
that precisely distinguishes when disentanglement arises from supervision versus model inductive bias. 
Through our theory and a comprehensive set of experiments, we demonstrated the conditions under which various supervision strategies {\em guarantee} disentanglement. Our work establishes several promising directions for future research. First, we hope that our formalism and experiments inspire greater theoretical and scientific scrutiny of the inductive biases present in existing models. Second, we encourage the search for other learning algorithms (besides distribution-matching) that may have theoretical guarantees when paired with the right form of supervision. Finally, we hope that our framework enables the theoretical analysis of other promising weak supervision methods.

\ificlrfinal
\subsubsection*{Acknowledgments} 

We would like to thank James Brofos and Honglin Yuan for their insightful discussions on the theoretical analysis in this paper, and Aditya Grover and Hung H. Bui for their helpful feedback during the course of this project.
\fi
\clearpage
\bibliography{reference}
\bibliographystyle{iclr2020_conference}

\clearpage
\appendix
\newif\ifvisualize
\visualizetrue
\section*{Appendix}
Our appendix consists of nine sections. We provide a brief summary of each section below.

\cref{sec:connections}: We elaborate on the connections between existing definitions of disentanglement and our definitions of consistency / restrictiveness / disentanglement. In particular, we highlight three notable properties of our definitions not present in many existing definitions.

\cref{app:dlib}: We evaluate our consistency and restrictiveness metrics on the 10800 models in the \texttt{disentanglement\_lib}, and identify models where consistency and restrictiveness are not correlated. 

\cref{app:nuisance}: We adapt our definitions to be able to handle nuisance variables. We do so through a simple modification of the definition of restrictiveness.

\cref{app:single}: We show several additional single-factor experiments. We first address one of the results in the main text that is not consistent with our theory, and explain why it can be attributed to hyperparameter sensitivity. We next unwrap the heatmaps into more informative boxplots.

\cref{app:cs}: We provide an additional suite of consistency versus restrictiveness experiments by comparing the effects of training with share pairing (which guarantees consistency), change pairing (which guarantees restrictiveness), and \emph{both}.

\cref{app:full_disentangle}: We provide full disentanglement results on all five datasets as measured according to six different metrics of disentanglement found in the literature. 

\cref{app:visualizations}: We show visualizations of a weakly supervised generative model trained to achieve full disentanglement. 

\cref{app:hyperparameters}: We describe the set of hyperparameter configurations used in all our experiments. 

\cref{app:proofs}: We provide the complete set of assumptions and proofs for our theoretical framework.

\section{Connections to Existing Definitions}\label{sec:connections}
Numerous definitions of disentanglement are present in the literature \citep{higgins2017beta, higgins2018towards, kim2018disentangling, suter2018interventional, ridgeway2018learning, eastwood2018framework, chen2018isolating}. We mostly defer to the terminology suggested by \citet{ridgeway2018learning}, which decomposes disentanglement into \emph{modularity}, \emph{compactness}, and \emph{explicitness}. Modularity means a latent code $Z_i$ is predictive of at most one factor of variation $S_j$. Compactness means a factor of variation $S_i$ is predicted by at most one latent code $Z_j$. And explicitness means a factor of variation $S_j$ is predicted by the latent codes via a simple transformation (e.g. linear). Similar to \citet{eastwood2018framework, higgins2018towards}, we suggest a further decomposition of \citet{ridgeway2018learning}'s explicitness into \emph{latent code informativeness} and \emph{latent code simplicity}. In this paper, we omit latent code simplicity from consideration. Since informativeness of the latent code is already enforced by our requirement that $g(Z)$ is equal in distribution to $g^*(S)$ (see \cref{thm:encoder_sufficiency}), we focus on comparing our proposed concepts of consistency and restrictiveness to modularity and compactness. We make note of three important distinctions.

\textbf{Restrictiveness is not synonymous with either modularity or compactness}. In \Cref{fig:restrictiveness}, it is evident the factor of variation \emph{size} is not predictable any individual $Z_i$ (conversely, $Z_1$ is not predictable from any individual factor $S_i$). As such, $Z_1$ is neither a modular nor compact representation of \emph{size}, despite being restricted to \emph{size}. To our knowledge, no existing quantitative definition of disentanglement (or its decomposition) specifically measures restrictiveness. 

\textbf{Consistency and restrictiveness are invariant to statistically dependent factors of variation}. Many existing definitions of disentanglement are instantiated by measuring the mutual information between $Z$ and $S$. For example, \citet{ridgeway2018learning} defines that a latent code $Z_i$ to be ``ideally modular'' if it has high mutual information with a single factor $S_j$ and zero mutual information with all other factors $S_{\sm j}$.
This presents a issue when the true factors of variation themselves are statistically dependent; even if $Z_1 = S_1$, the latent code $Z_1$ would violate modularity if $S_1$ itself has positive mutual information with $S_2$. Consistency and restrictiveness circumvent this issue by relying on conditional resampling.
Consistency, for example, only measures the extent to which $S_I$ is \emph{invariant} to resampling of $Z_{\sm I}$ when conditioned on $Z_I$ and is thus achieved as long as $s_I$ is a function of only $z_I$---irrespective of whether $s_I$ and $s_{\sm I}$ are statistically dependent. 
In this regard, our definitions draw inspiration from \citet{suter2018interventional}'s intervention-based definition but replaces the need for counterfactual reasoning with the simpler conditional sampling. Because we do not assume the factors of variation are statistically independent, our theoretical analysis is also distinct from the closely-related match pairing analysis in \cite{gresele2019incomplete}.

\textbf{Consistency and restrictiveness arise in weak supervision guarantees}. One of our goals is to propose definitions that are amenable to theoretical analysis. As we can see in \cref{sec:calculus}, consistency and restrictiveness serve as the core primitive concepts that we use to describe disentanglement guarantees conferred by various forms of weak supervision.

\section{Evaluating Consistency and Restrictiveness on Disentanglement-Lib Models}\label{app:dlib}
To better understand the empirical relationship between consistency and restrictiveness, we calculated the normalized consistency and restrictiveness scores on the suite of 12800 models from \texttt{disentanglement\_lib} for each ground-truth factor. By using the normalized consistency and restrictiveness scores as probes, we were able to identify models that achieve high consistency but low restrictiveness (and vice versa). In \cref{fig:dlib}, we highlight two models that are either consistent or restrictive for object color on the Shapes3D dataset.
\begin{figure}[h]
\begin{center}
\begin{subfigure}[b]{0.48\textwidth}
\includegraphics[scale=0.28]{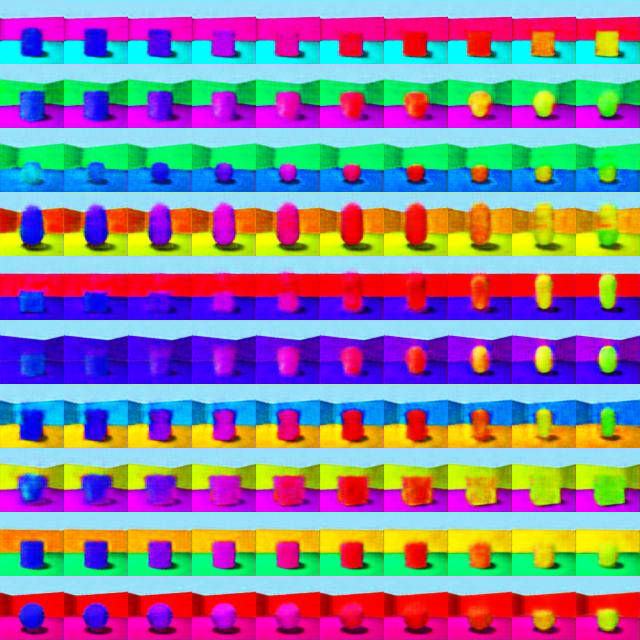}
\caption{Consistent but not Restrictive}\label{fig:dlibconsistent}
\end{subfigure}
\begin{subfigure}[b]{0.48\textwidth}
\includegraphics[scale=0.28]{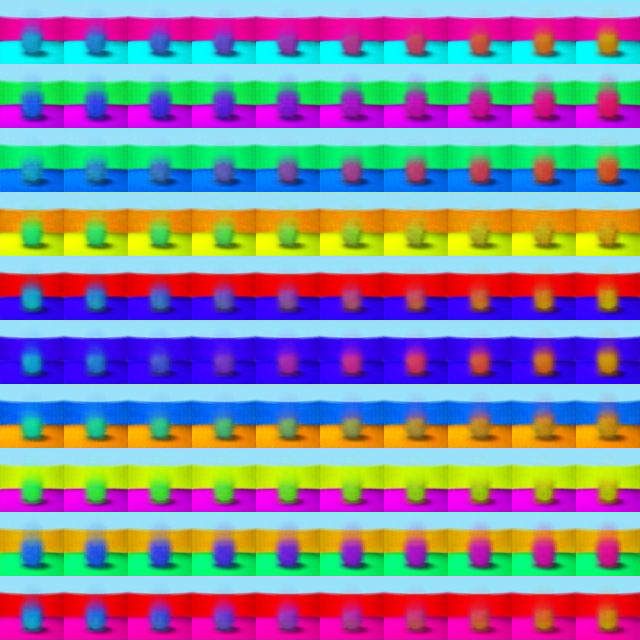}
\caption{Restrictive but not Consistent}\label{fig:dlibrestrictive}
\end{subfigure}
\end{center}
\caption{Visualization of two models from \texttt{disentanglement\_lib} (model ids 11964 and 12307), matching the schematic in \cref{fig:definitions}. For each panel, we visualize an interpolation along a single latent across rows, with each row corresponding to a fixed set of values for all other factors. In \cref{fig:dlibconsistent}, we can see that this factor consistenly represents object color, i.e. each column of images has the same object color, but as we move along rows we see that other factors change as well, e.g. object type, thus this factor is not restricted to object color. In \cref{fig:dlibrestrictive}, we see that varying the factor along each row results in changes to object color but to no other attributes. However if we look across columns, we see that the representation of color changes depending on the setting of other factors, thus this factor is not consistent for object color. }\label{fig:dlib}
\end{figure}

\section{Handling nuisance variables}\label{app:nuisance}
Our theoretical framework can handle {\em nuisance variables}, i.e., variables we cannot measure or perform weak supervision on. It may be impossible to label, or provide match-pairing on that factor of variation. For example, while many features of an image are measurable (such as brightness and coloration), we may not be able to measure certain factors of variation or generate data pairs where these factors are kept constant. In this case, we can let one additional variable $\eta$ act as nuisance variable that captures all additional sources of variation / stochasticity.

Formally, suppose the full set of true factors is $S \cup \{\eta\} \in \mathbb{R}^{n+1}$. We define $\eta$-consistency $C_\eta(I)=C(I)$ and $\eta$-restrictiveness $R_\eta(I)=R(I \cup \{\eta\})$. This captures our intuition that, with nuisance variable, for consistency, we still want changes to $Z_{\sm I} \cup \{\eta\}$ to not modify $S_I$; for restrictiveness, we want changes to $Z_{I} \cup \{\eta\}$ to only modify $S_{I} \cup \{\eta\}$. We define $\eta$-disentanglement as $D_\eta(I)=C_\eta(I) \wedge R_\eta(I)$.

All of our calculus still holds where we substitute $C_\eta(I), R_\eta(I), D_\eta(I)$ for $C(I), R(I), D(I)$; we prove one of the new full disentanglement rule as an illustration:
\begin{prop}
\label{thm:nuisance}
$\bigwedge_{i=1}^n C_\eta(i) \iff \bigwedge_{i=1}^n D_\eta(i)$.
\end{prop}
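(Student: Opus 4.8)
The plan is to reduce this new rule to the calculus already established for an arbitrary finite factor set, now applied to the augmented index set $\{1,\dots,n,\eta\}$ with $\eta$ treated as an ordinary $(n{+}1)$-th factor. The one piece of bookkeeping to keep straight throughout is that the complement rule $C(I) \iff R(\sm I)$ must be read with $\sm$ denoting complementation inside the full set $\{1,\dots,n,\eta\}$, not inside $\{1,\dots,n\}$.

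First I would dispose of the easy direction $(\Leftarrow)$: by definition $D_\eta(i) \defeq C_\eta(i) \wedge R_\eta(i)$, so $D_\eta(i) \implies C_\eta(i)$ for each $i$, hence $\bigwedge_{i=1}^n D_\eta(i) \implies \bigwedge_{i=1}^n C_\eta(i)$.

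For $(\Rightarrow)$, I would assume $C_\eta(i) = C(\set{i})$ for every $i \in \{1,\dots,n\}$ and establish $D_\eta(i)$ for an arbitrary fixed $i$. Since $C_\eta(i)$ is already in hand, it suffices to derive $R_\eta(i) = R(\set{i}\cup\set{\eta})$. To do this I would apply the union rule for consistency inductively to the singleton facts $C(\set{j})$ for $j \in \{1,\dots,n\}\sm\set{i}$, obtaining $C(\{1,\dots,n\}\sm\set{i})$. The complement of $\{1,\dots,n\}\sm\set{i}$ inside $\{1,\dots,n,\eta\}$ is precisely $\set{i,\eta}$, so one application of $C(\cdot) \iff R(\sm\cdot)$ gives $R(\set{i,\eta}) = R_\eta(i)$. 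Conjoining with $C_\eta(i)$ yields $D_\eta(i)$, and since $i$ was arbitrary we conclude $\bigwedge_{i=1}^n D_\eta(i)$.

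I do not anticipate a genuine obstacle: the argument is a direct specialization of the already-proven non-nuisance rule $\bigwedge_i C(i) \iff \bigwedge_i D(i)$, with $\eta$ absorbed into the factor set. The only subtle point — and the step most likely to trip up a casual reading — is the index-set bookkeeping in the complement rule, namely that $\eta$ must be included when complementing, which is exactly what makes the complement of $\{1,\dots,n\}\sm\set{i}$ come out to $\set{i,\eta}$ rather than $\set{i}$; this is also why the definition $R_\eta(I) = R(I\cup\set{\eta})$ is the appropriate partner for $C_\eta(I)=C(I)$. I would also remark in passing that the union and complement rules invoked were proven for an arbitrary finite factor index, so they transfer verbatim to $\{1,\dots,n,\eta\}$, and that the analogous rule $\bigwedge_{i=1}^n R_\eta(i) \iff \bigwedge_{i=1}^n D_\eta(i)$ can be obtained by the same template using the union rule for restrictiveness in place of the one for consistency.
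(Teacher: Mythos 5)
Your proof is correct and follows essentially the same route as the paper: treat $\eta$ as an ordinary $(n{+}1)$-th factor and chain the already-established calculus rules (trivial $\Leftarrow$ direction, consistency union plus the complement rule for $\Rightarrow$). The only cosmetic difference is that you obtain $R(\{i,\eta\})$ in a single complement step from $C(\{1,\dots,n\}\setminus\{i\})$, whereas the paper derives $R(\eta)$ and $R(i)$ separately and then applies restrictiveness union; your chain is, if anything, slightly more economical and makes the complementation over the augmented index set $\{1,\dots,n,\eta\}$ fully explicit.
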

\begin{proof}
On the one hand, $\bigwedge_{i=1}^n C_\eta(i) \iff \bigwedge_{i=1}^n C(i) \implies C(1:n) \implies R(\eta)$. On the other hand, $\bigwedge_{i=1}^n C(i) \implies \bigwedge_{i=1}^n D(i) \implies \bigwedge_{i=1}^n R(i)$. Therefore $LHS \implies \forall i \in [n], R(i) \wedge R(\eta) \implies R_\eta(i)$.
The reverse direction is trivial.
\end{proof}

In \citep{locatello2018challenging}, the ``instance'' factor in SmallNORB and the background image factor in Scream-dSprites are treated as nuisance variables. By \Cref{thm:nuisance}, as long as we perform weak supervision on all of the non-nuisance variables (via sharing-pairing, say) to guarantee their consistency with respect to the corresponding true factor of variation, we still have guaranteed full disentanglement despite the existence of nuisance variable and the fact that we cannot measure or perform weak supervision on nuisance variable.

\ifvisualize
\clearpage
\section{Single-Factor Experiments}\label{app:single}

\begin{figure}[h]
\centerline{\includegraphics[width=\textwidth]{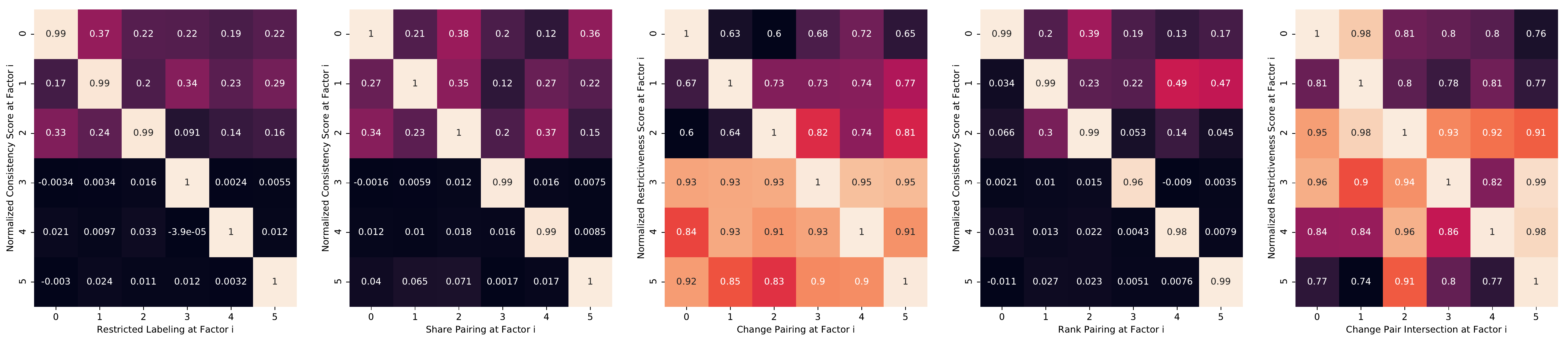}}
\caption{This is the same plot as \Cref{fig:heatmap_cherry}, but where we restrict our hyperparameter sweep to always set \texttt{extra dense = False}. See \Cref{app:hyperparameters} for details about hyperparameter sweep.}\label{fig:heatmap_cherry}
\end{figure}

\begin{figure}[h]
\centerline{\includegraphics[width=\textwidth]{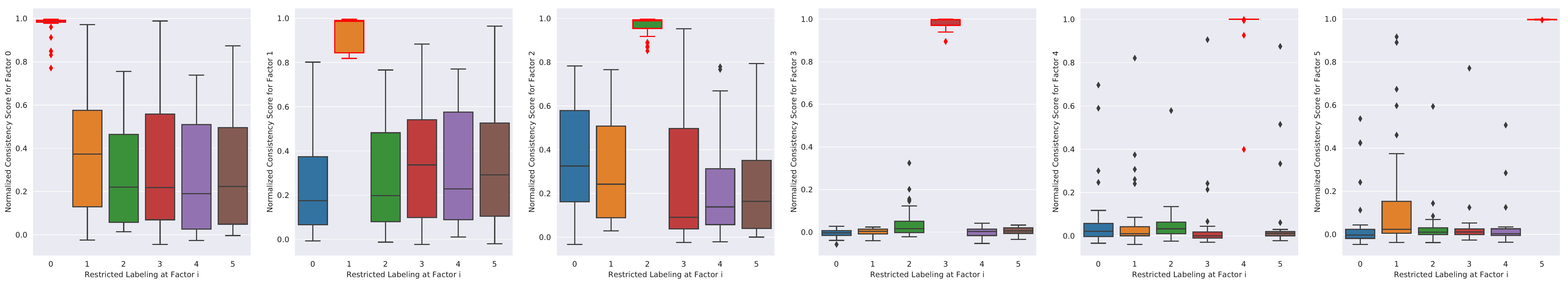}}
\caption{Restricted pairing guarantees consistency. Each plot shows the normalized consistency score of each model for each factor of variation. Our theory predicts each boxplot highlighted in red to achieve the highest consistency. Due to the prevalence of restricted pairing in the existing literature, we chose to only conduct the single-factor restricted labeling experiment on Shapes3D.}\label{fig:label_boxplot}
\end{figure}
\clearpage
\begin{figure}[!h]
\centerline{\includegraphics[width=\textwidth]{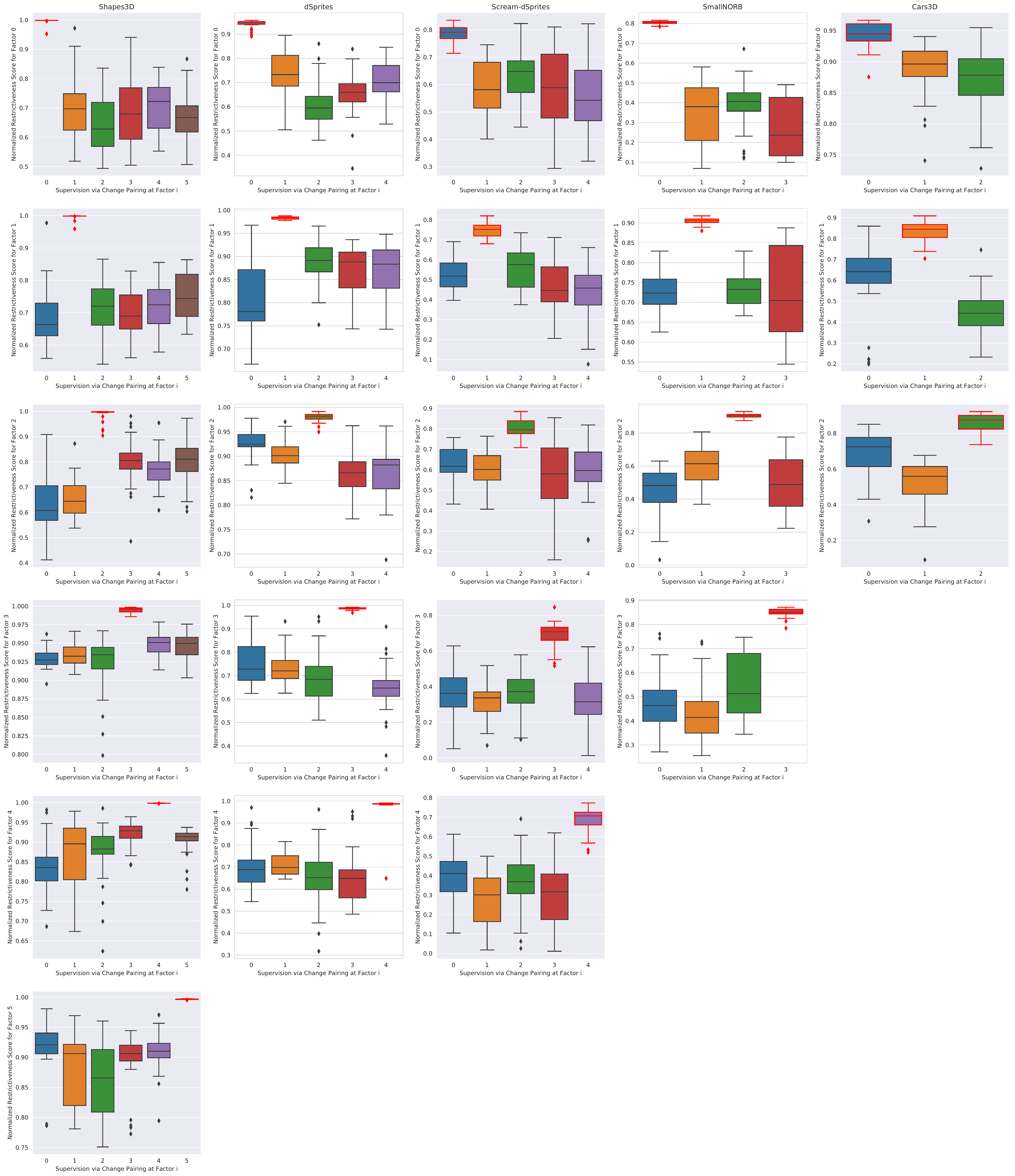}}
\caption{Change pairing guarantees restrictiveness. Each plot shows normalized restrictiveness score of each model for each factor of variation (row) across different datasets (columns). Different colors indicate models trained with change pairing on different factors. The appropriately-supervised model for each factor is marked in red. }\label{fig:single_c}
\end{figure}
\begin{figure}[!h]
\centerline{\includegraphics[width=\textwidth]{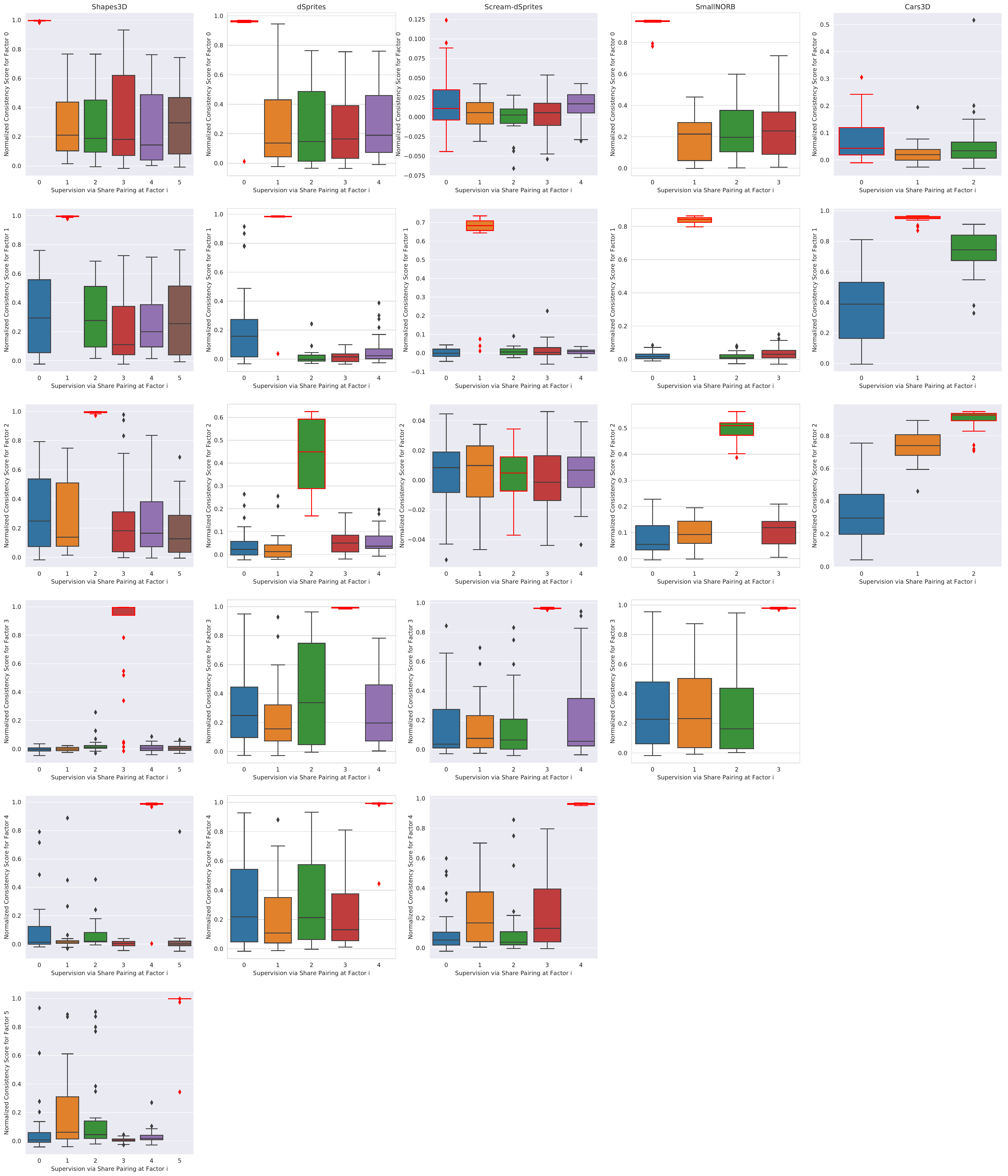}}
\caption{Share pairing guarantees consistency. Each plot shows normalized consistency score of each model for each factor of variation (row) across different datasets (columns). Different colors indicate models trained with share pairing on different factors. The appropriately-supervised model for each factor is marked in red. }\label{fig:single_s}
\end{figure}
\begin{figure}[h]
\centerline{\includegraphics[width=\textwidth]{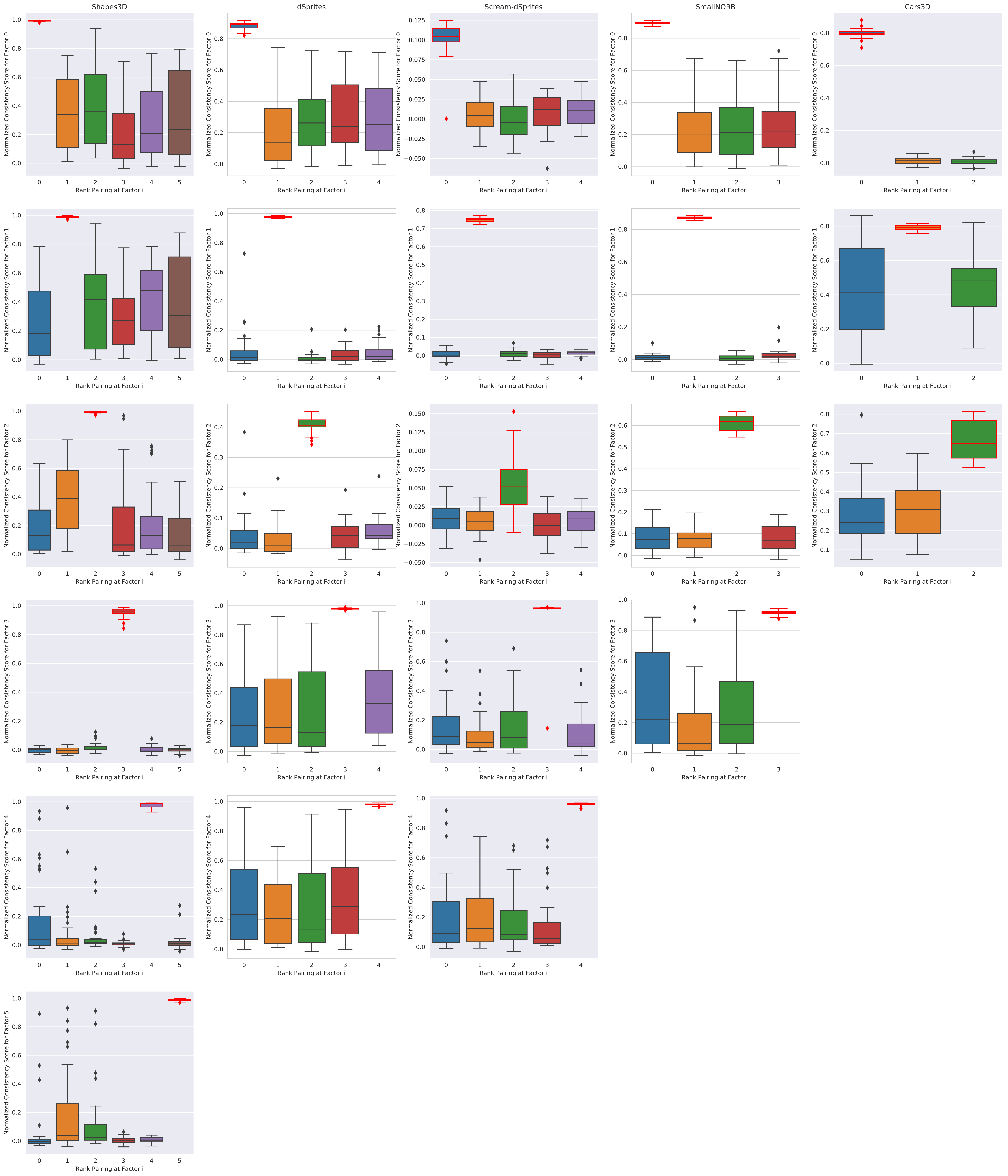}}
\caption{Rank pairing guarantees consistency. Each plot shows normalized consistency score of each model for each factor of variation (row) across different datasets (columns). Different colors indicate models trained with rank pairing on different factors. The appropriately-supervised model for each factor is marked in red.}\label{fig:single_r}
\end{figure}

\clearpage
    \section{Consistency versus Restrictiveness}\label{app:cs}
    \begin{figure}[!h]
\centerline{\includegraphics[width=\textwidth]{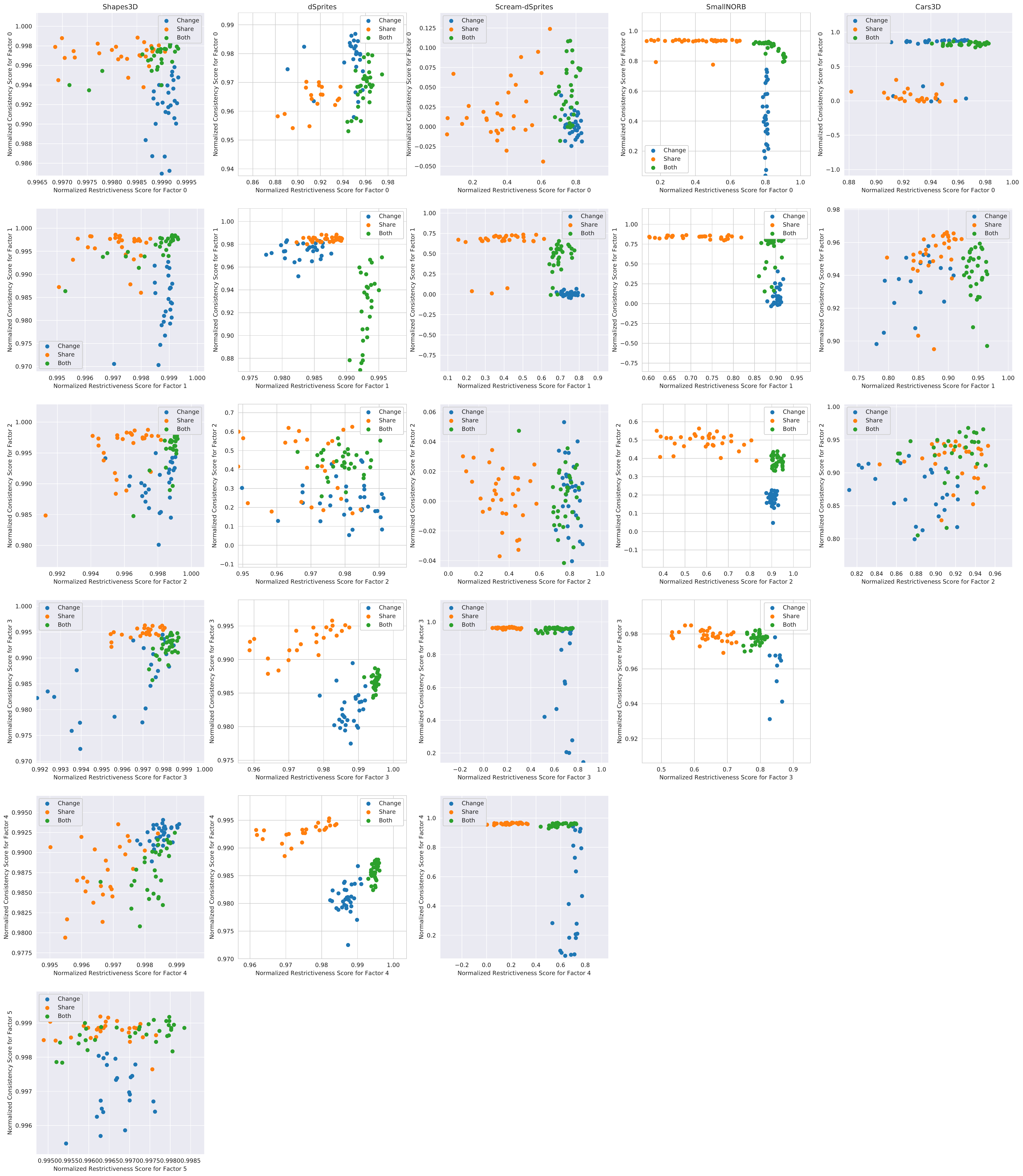}}
\caption{Normalized consistency vs. restrictiveness score of different models on each factor (row) across different datasets (columns). In many of the plots, we see that models trained via change-sharing (blue) achieve higher restrictiveness; models trained via share-sharing (orange) achieve higher consistency; models trained via both techniques (green) simultaneously achieve restrictiveness and consistency in most cases.}\label{fig:scatter_cs}
\end{figure}

\clearpage
\section{Full Disentanglement Experiments}\label{app:full_disentangle}

\begin{figure}[h]
\centerline{\includegraphics[width=\textwidth]{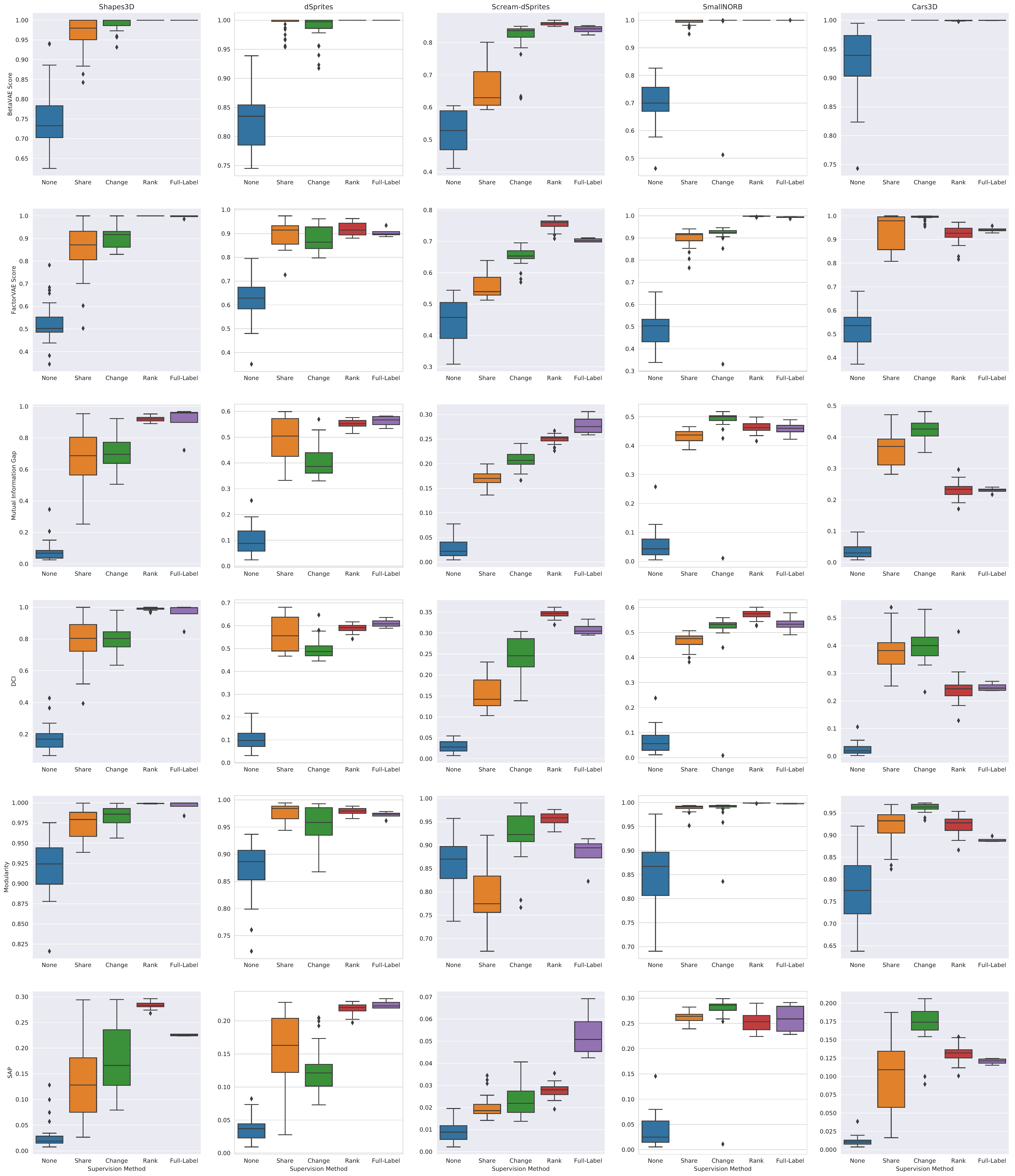}}
\caption{Disentanglement performance of a vanilla GAN, share pairing GAN, change pairing GAN, rank pairing GAN, and fully-labeled GAN, as measured by multiple disentanglement metrics in existing literature (rows) across multiple datasets (columns). According to almost all metrics, our weakly supervised models surpass the baseline, and in some cases, even outperform the fully-labeled  model.}\label{fig:boxplot_all_metrics}
\end{figure}
\begin{figure}[h]
\centerline{\includegraphics[width=\textwidth]{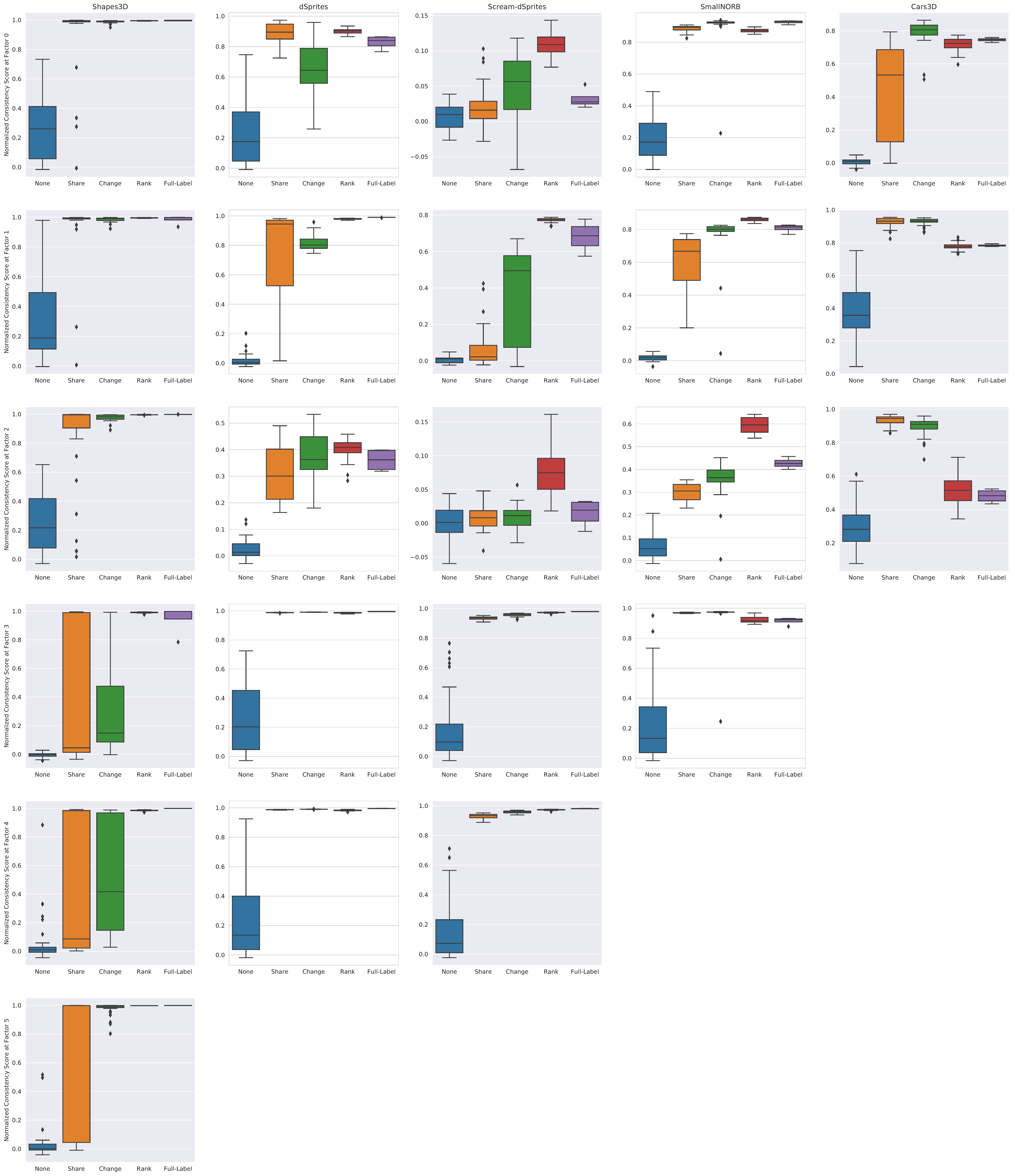}}
\caption{Performance of a vanilla GAN (blue), share pairing GAN (orange), change pairing GAN (green), rank pairing GAN (red), and fully-labeled GAN (purple), as measured by normalized consistency score of each factor (rows) across multiple datasets (columns). Factors $\set{3,4,5}$ in the first column shows that distribution matching to all six change / share pairing datasets is particularly challenging for the models when trained on certain hyperparameter choices. However, since consistency and restrictiveness can be measured in weakly supervised settings, it suffices to use these metrics for hyperparameter selection. We see in \Cref{fig:metric_corr} and \cref{app:visualizations} that using consistency and restrictiveness for hyperparameter selection serves as a viable weakly-supervised surrogate for existing fully-supervised disentanglement metrics.}\label{fig:boxplot_all_consistency}
\end{figure}
\begin{figure}[h]
\centerline{\includegraphics[width=\textwidth]{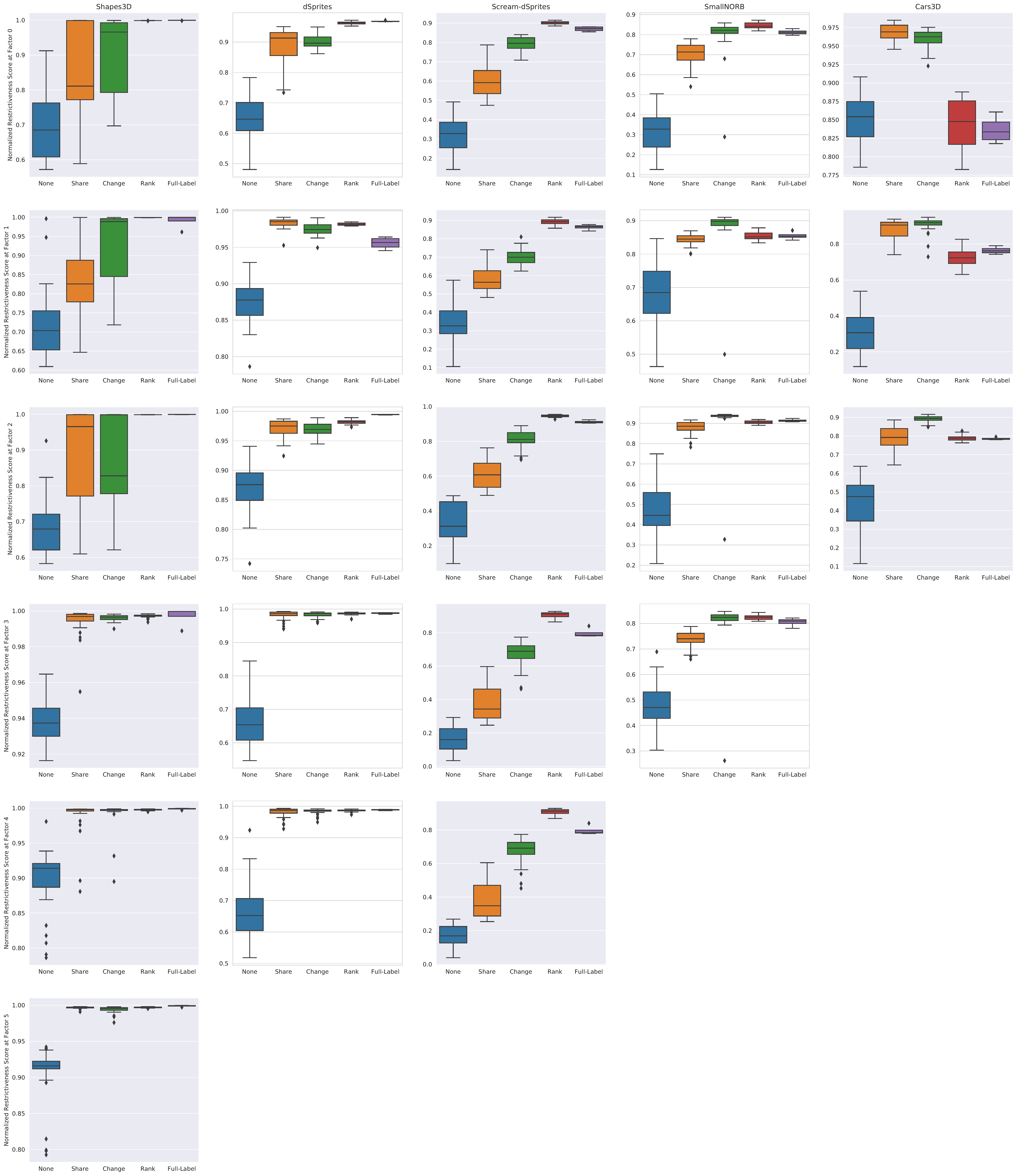}}
\caption{Performance of a vanilla GAN (blue), share pairing GAN (orange), change pairing GAN (green), rank pairing GAN (red), and fully-labeled GAN (purple), as measured by normalized restrictiveness score of each factor (rows) across multiple datasets (columns). Since restrictiveness and consistency are complementary, we see that the anomalies in \Cref{fig:boxplot_all_consistency} are reflected in the complementary factors in this figure.}\label{fig:boxplot_all_restrictiveness}
\end{figure}
\begin{figure}[h]
\vspace{-.5in}
\begin{subfigure}[b]{0.3\textwidth}
\includegraphics[width=\textwidth]{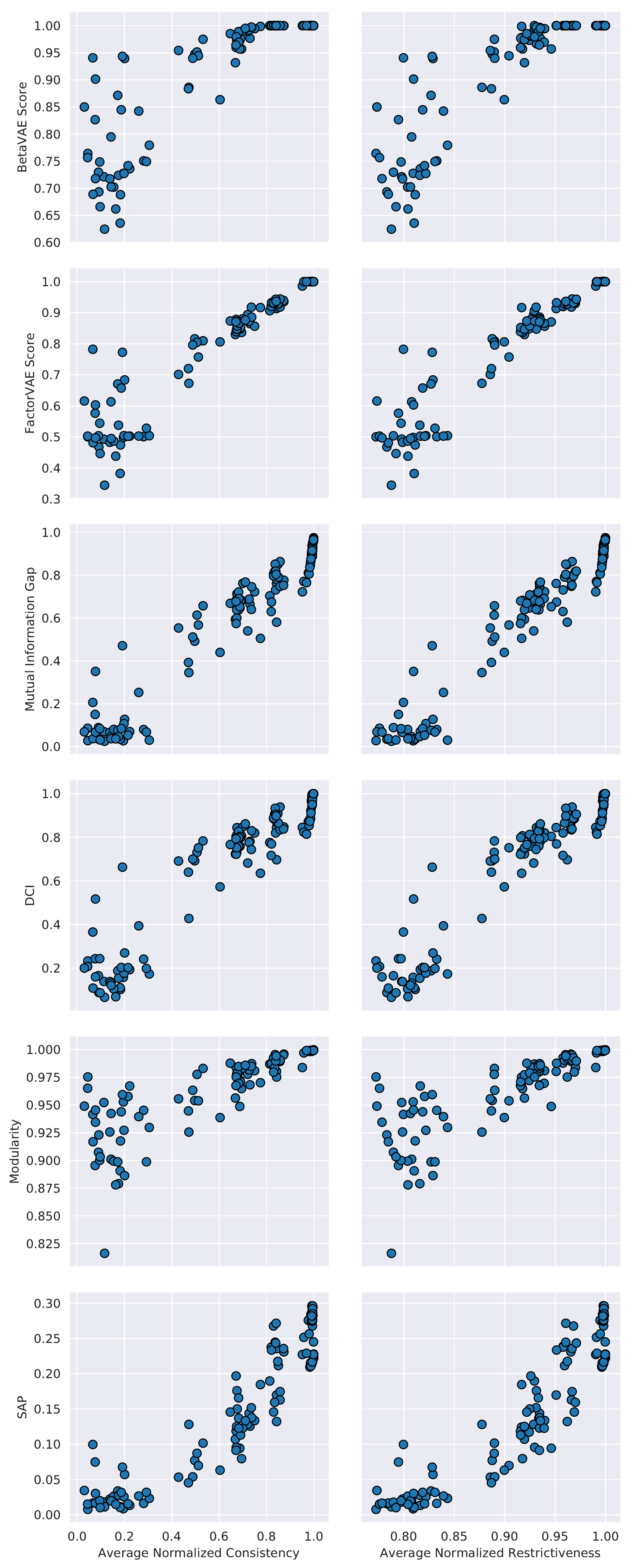}
\caption{Shapes3D}
\end{subfigure}~
\begin{subfigure}[b]{0.3\textwidth}
\includegraphics[width=\textwidth]{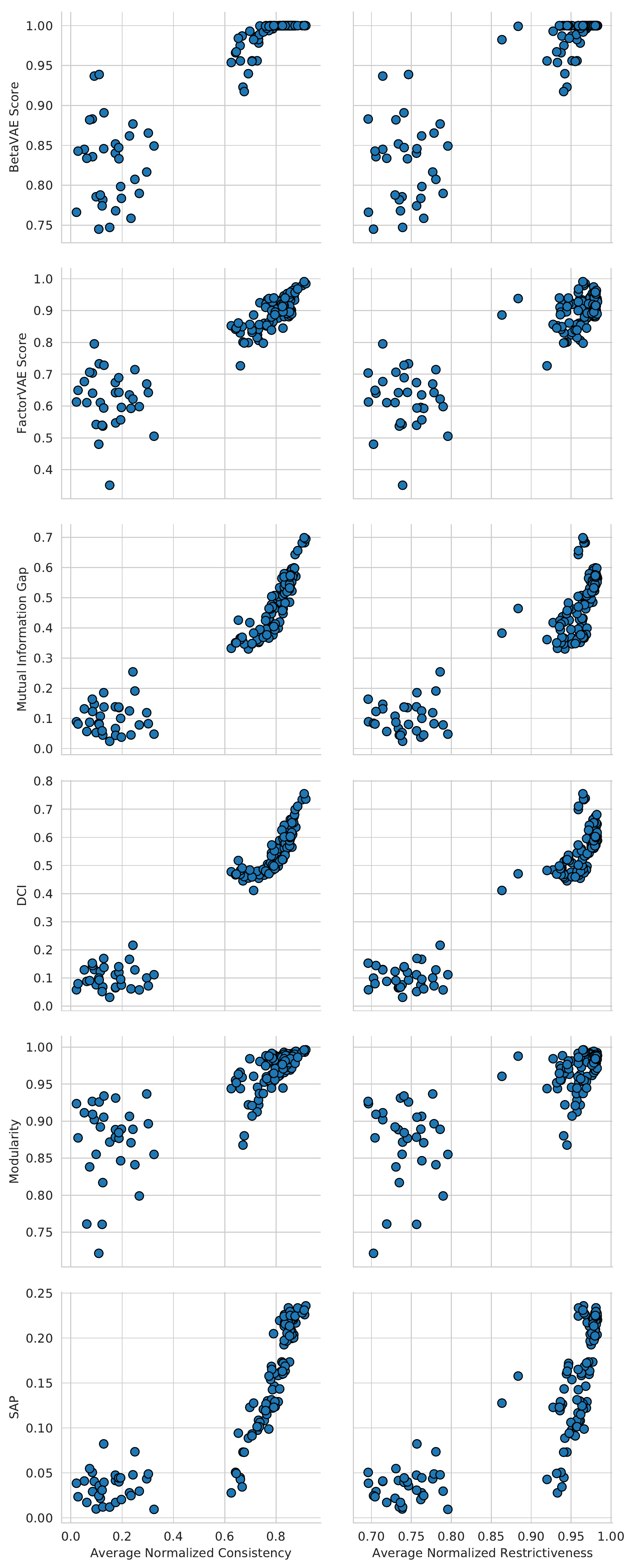}
\caption{dSprites}
\end{subfigure}\\
\begin{subfigure}[b]{0.3\textwidth}
\includegraphics[width=\textwidth]{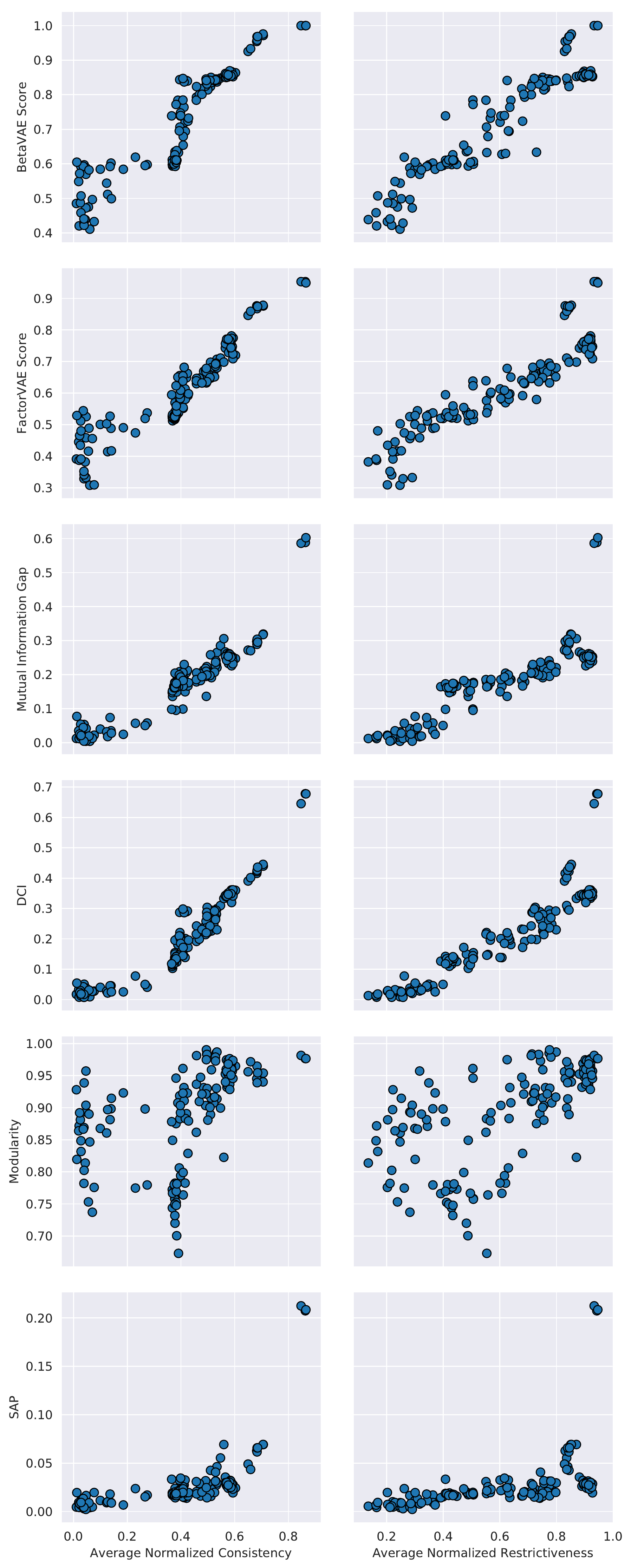}
\caption{Scream-dSprites}
\end{subfigure}~
\begin{subfigure}[b]{0.3\textwidth}
\includegraphics[width=\textwidth]{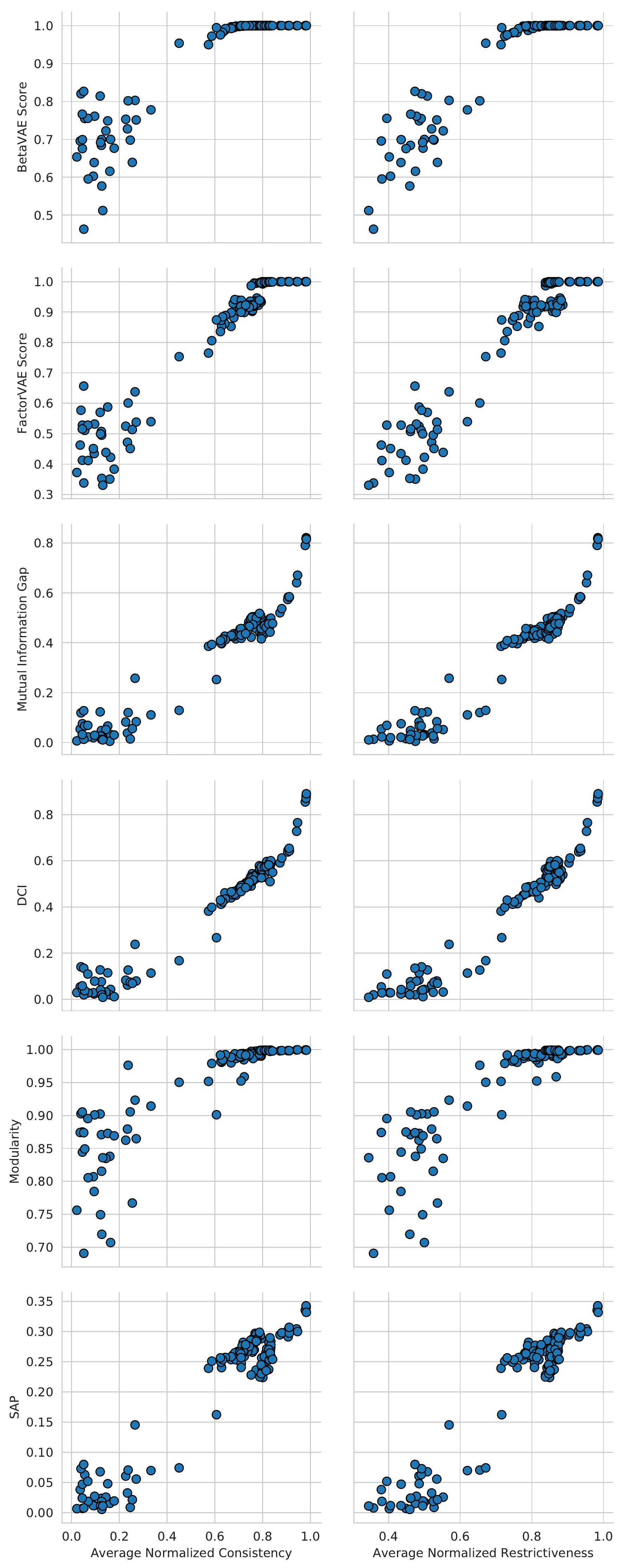}
\caption{SmallNORB}
\end{subfigure}~
\begin{subfigure}[b]{0.3\textwidth}
\includegraphics[width=\textwidth]{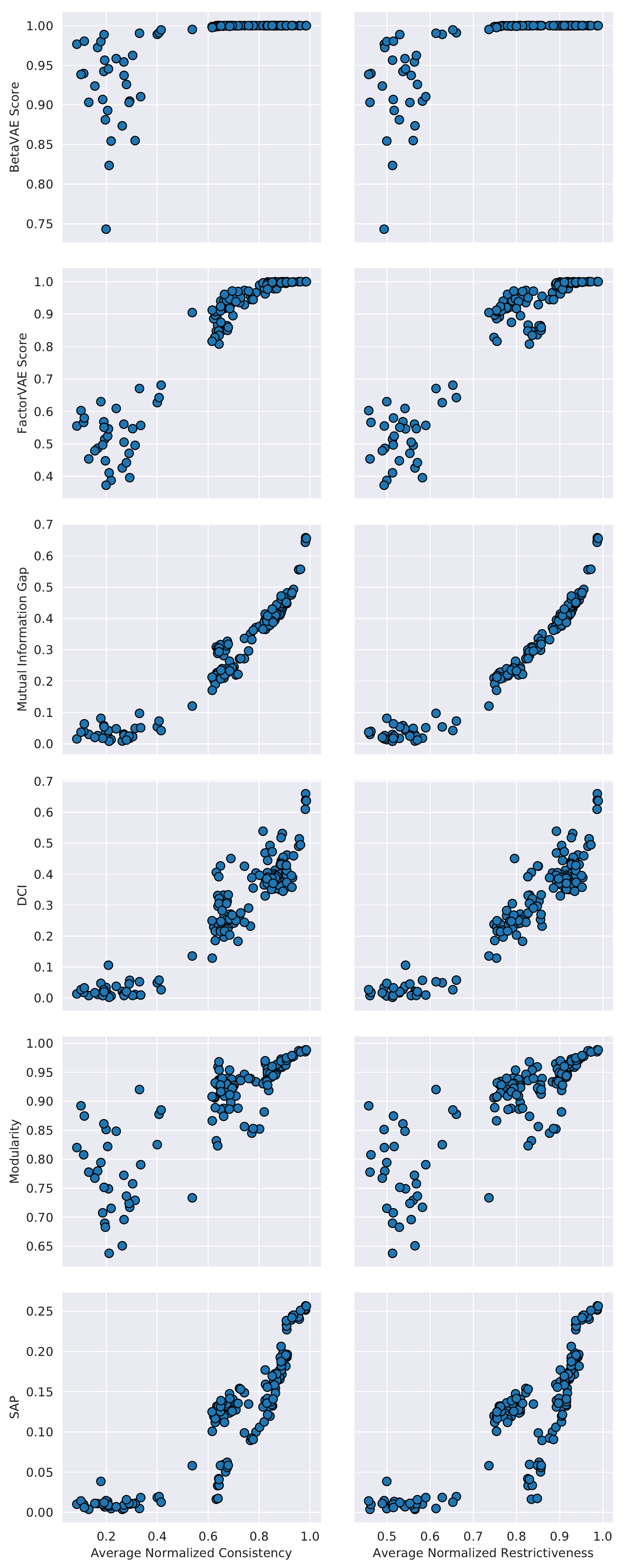}
\caption{Cars3D}
\end{subfigure}
\caption{Scatterplot of existing disentanglement metrics versus average normalized consistency and restrictiveness. Whereas existing disentanglement metrics are fully-supervised, it is possible to measure average normalized consistency and restrictiveness with weakly supervised data (share-pairing and match-pairing respectively), making it viable to perform hyperparameter tuning under weakly supervised conditions.}\label{fig:metric_corr}
\end{figure}

\clearpage
\section{Full Disentanglement Visualizations}\label{app:visualizations}
 
As a demonstration of the weakly-supervised generative models, we visualize our best-performing match-pairing generative models (as selected according to the normalized consistency score averaged across all the factors). Recall from \Cref{fig:disentanglement,fig:consistency,fig:restrictiveness} that, to visually check for consistency and restrictiveness, it is important that we not only ablate a single factor (across the column), but also show that the factor stays consistent (down the row). Each block of $3 \times 12$ images in \Cref{fig:viz_shapes3d,fig:viz_dsprites,fig:viz_scream,fig:viz_norb,fig:viz_cars3d} checks for disentanglement of the corresponding factor. Each row is constructed by random sampling of $Z_{\sm i}$ and then ablating $Z_{i}$.

\vspace{1in}
\begin{figure}[h]
\centerline{\includegraphics[width=300pt]{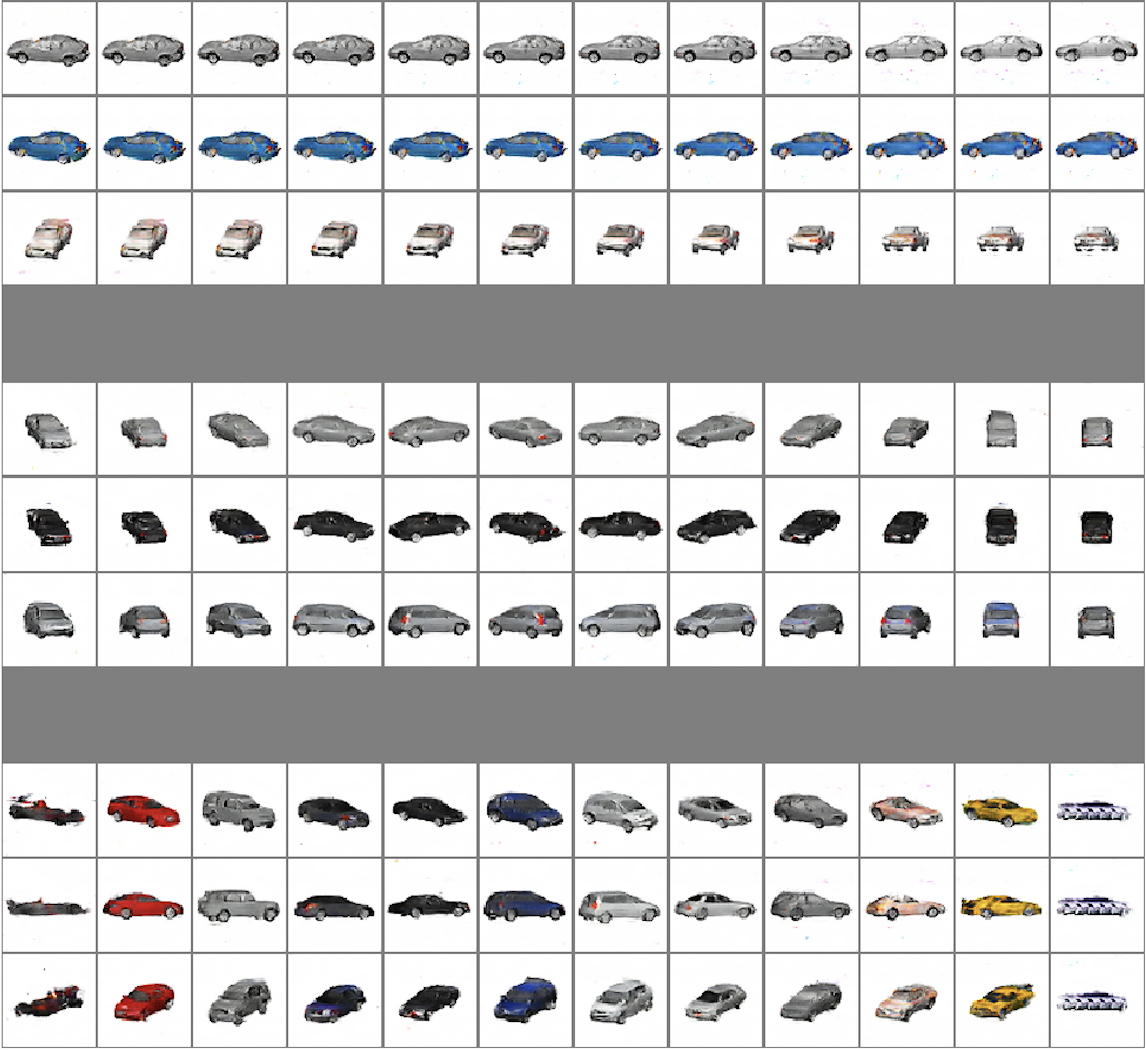}}
\caption{Cars3D. Ground truth factors: elevation, azimuth, object type.}\label{fig:viz_cars3d}
\end{figure}

\begin{figure}[h]
\centerline{\includegraphics[width=300pt]{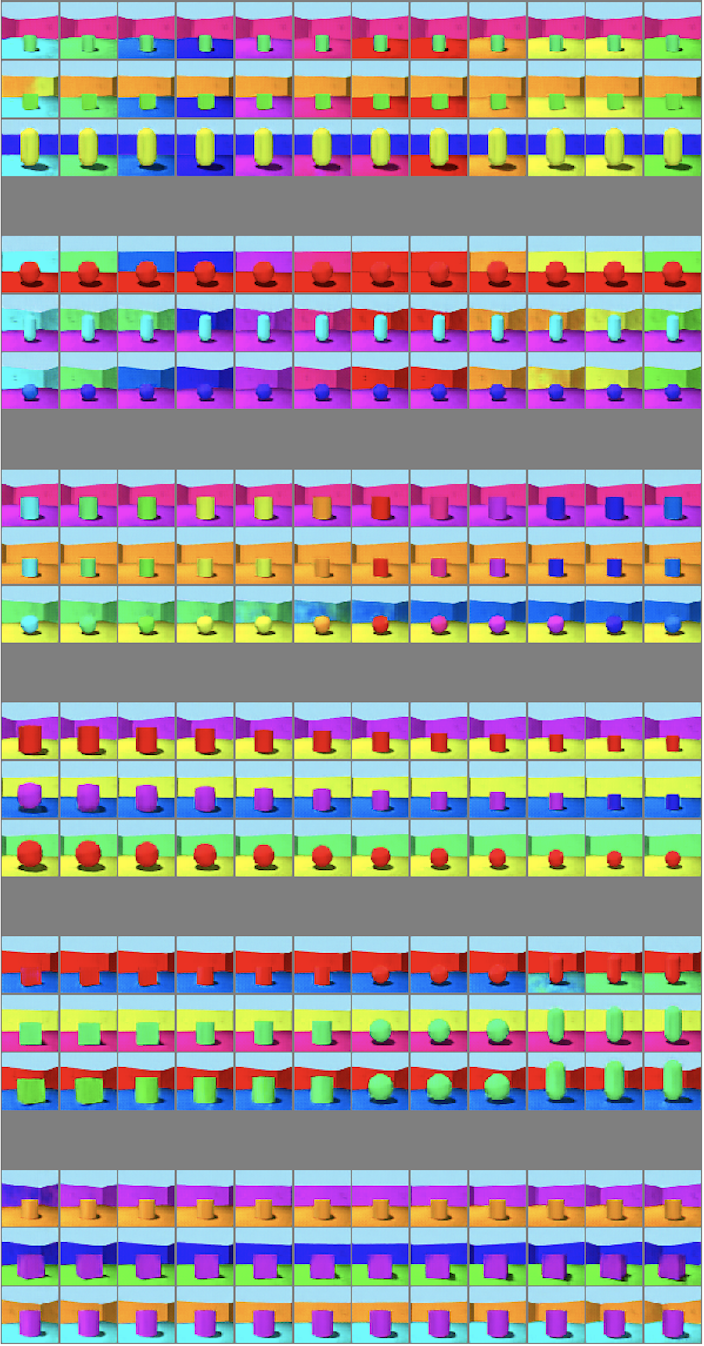}}
\caption{Shapes3D. Ground truth factors: floor color, wall color, object color, object size, object type, and azimuth.}\label{fig:viz_shapes3d}
\end{figure}

\begin{figure}[h]
\centerline{\includegraphics[width=300pt]{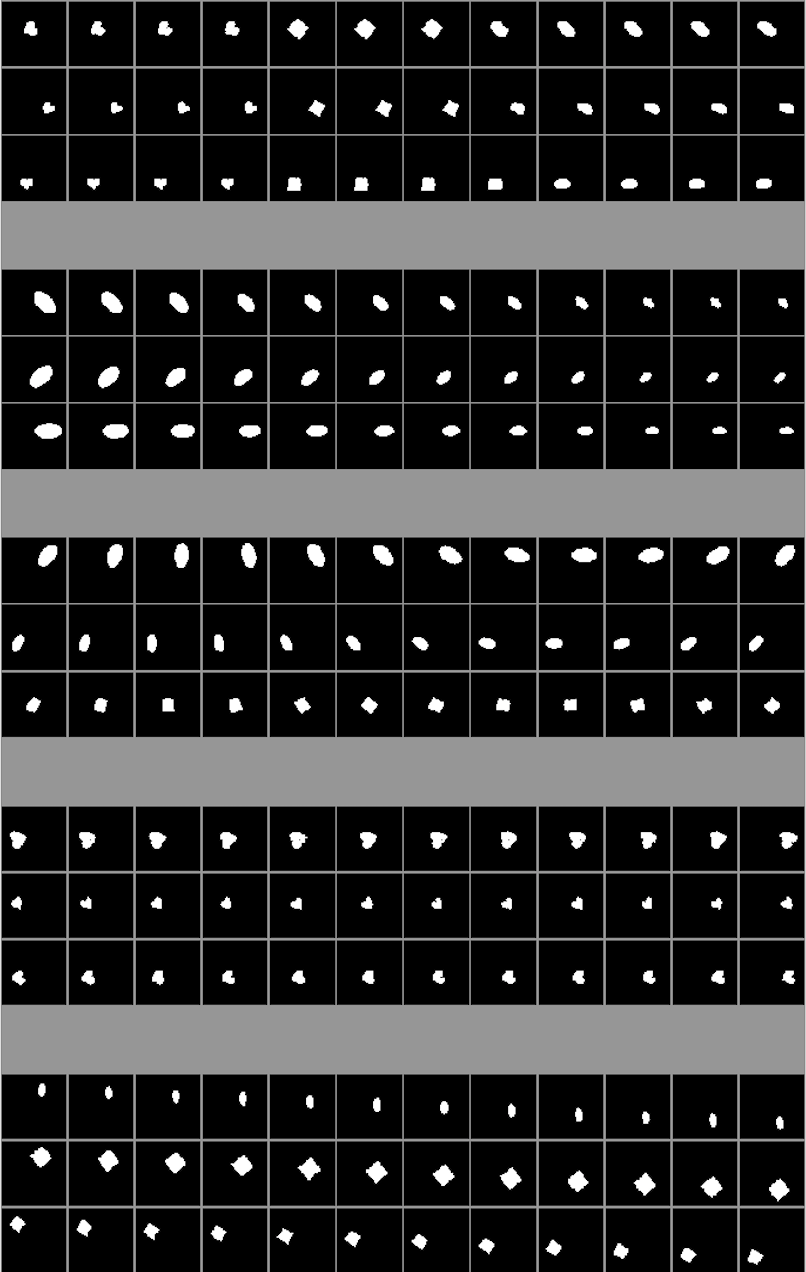}}
\caption{dSprites. Ground truth factors: shape, scale, orientation, X-position, Y-position.}\label{fig:viz_dsprites}
\end{figure}

\begin{figure}[h]
\centerline{\includegraphics[width=300pt]{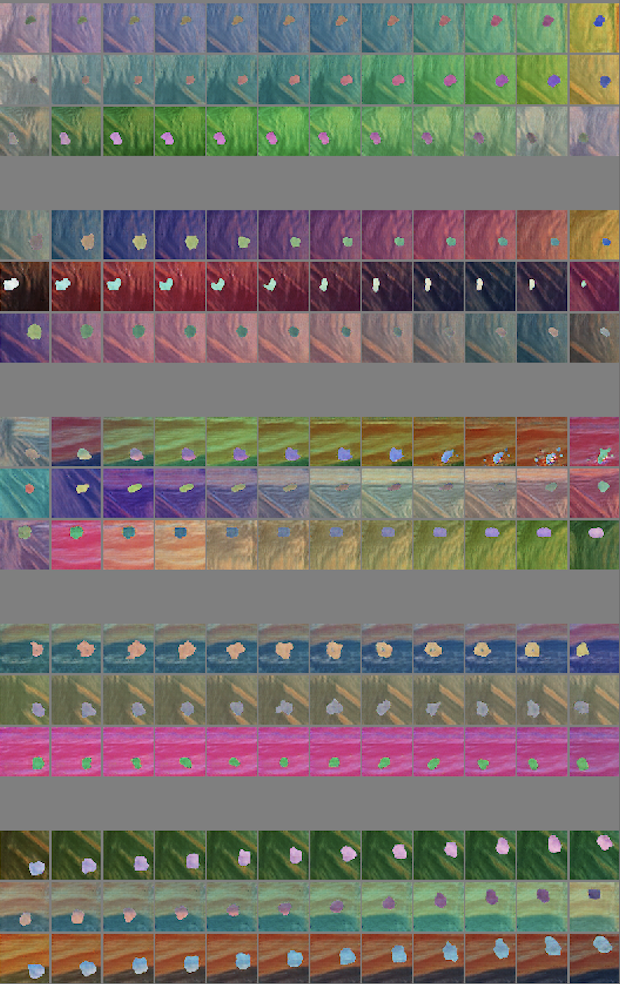}}
\caption{Scream-dSprites. Ground truth factors: shape, scale, orientation, X-position, Y-position.}\label{fig:viz_scream}
\end{figure}

\begin{figure}[h]
\centerline{\includegraphics[width=300pt]{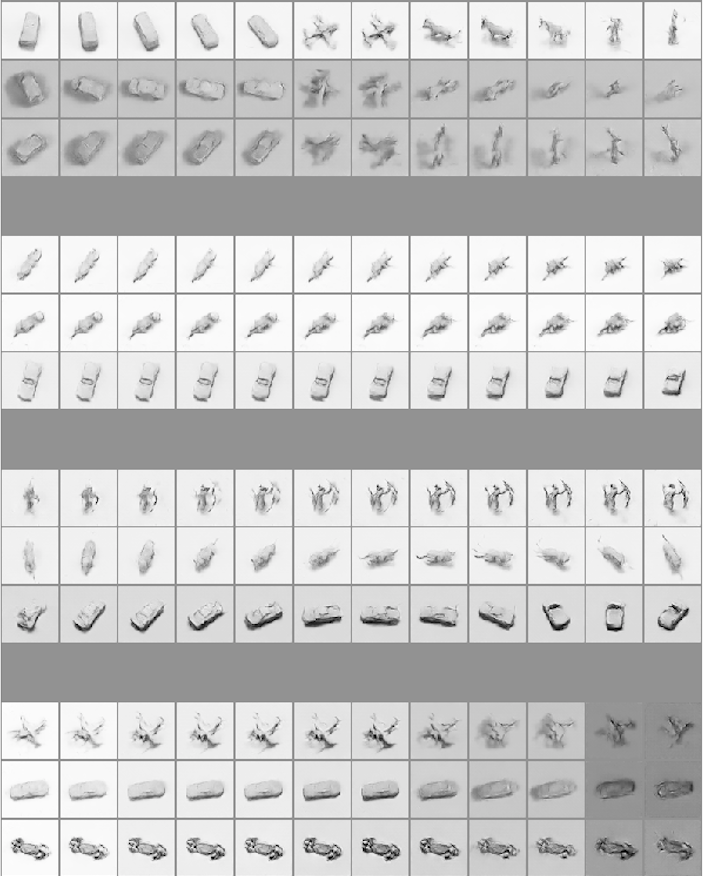}}
\caption{SmallNORB. Ground truth factors: category, elevation, azimuth, lighting condition.}\label{fig:viz_norb}
\end{figure}
\fi

\clearpage
\section{Hyperparameters}\label{app:hyperparameters}
\begin{table}[!h]
\centering
\caption{We trained a probablistic Gaussian encoder to approximately invert the generative model. The encoder is not trained jointly with the generator, but instead trained separately from the generative model (i.e. encoder gradient does not backpropagate to generative model). During training, the encoder is only exposed to data generated by the learned generative model.}
\begin{tabular}{c}
\toprule
\textbf{Encoder} \\
\midrule
$4 \times 4$ spectral norm conv. $32$. lReLU\\
$4 \times 4$ spectral norm conv. $32$. lReLU\\
$4 \times 4$ spectral norm conv. $64$. lReLU\\
$4 \times 4$ spectral norm conv. $64$. lReLU\\
flatten\\
$128$ spectral norm dense. lReLU\\
$2 \times z\text{-dim}$ spectral norm dense\\
\midrule
\end{tabular}
\end{table}
\begin{table}[!h]
\centering
\caption{Generative model architecture.}
\begin{tabular}{c}
\toprule
\textbf{Generator} \\
\midrule
$128$ dense. ReLU. batchnorm.\\
$1024$ dense. ReLU. batchnorm.\\
$4 \times 4 \times 64$ reshape.\\
$4 \times 4$ conv. $64$. lReLU. batchnorm.\\
$4 \times 4$ conv. $32$. lReLU. batchnorm.\\
$4 \times 4$ conv. $32$. lReLU. batchnorm.\\
$4 \times 4$ conv. $3$. sigmoid\\
\midrule
\end{tabular}
\end{table}
\newcommand{\option}[1]{{\color{red}\text{#1}}}
\begin{table}[!h]
\centering
\caption{Discriminator used for restricted labeling. Parts in red are part of hyperparameter search.}
\begin{tabular}{c}
\toprule
\textbf{Discriminator Body} \\
\midrule
$4 \times 4$ spectral norm conv. $32 \times \option{width}$. lReLU\\
$4 \times 4$ spectral norm conv. $32 \times \option{width}$. lReLU\\
$4 \times 4$ spectral norm conv. $64 \times \option{width}$. lReLU\\
$4 \times 4$ spectral norm conv. $64 \times \option{width}$. lReLU\\
flatten\\
\option{if extra dense:} $128 \times \option{width}$ spectral norm dense. lReLU\\
\midrule
\textbf{Discriminator Auxiliary Channel for Label} \\
\midrule
$128 \times \option{width}$ spectral norm dense. lReLU\\
\option{If extra dense:} $128 \times \option{width}$ spectral norm dense. lReLU\\
\midrule
\textbf{Discriminator head}\\
\midrule
concatenate body and auxiliary.\\
$128 \times \option{width}$ spectral norm dense. lReLU\\
$128 \times \option{width}$ spectral norm dense. lReLU\\
$1$ spectral norm dense \option{with bias}.\\
\midrule
\end{tabular}
\end{table}

\begin{table}[!h]
\centering
\caption{Discriminator used for match pairing. We use a projection discriminator \citep{miyato2018cgans} and thus have an unconditional and conditional head. Parts in red are part of hyperparameter search.}
\begin{tabular}{c}
\toprule
\textbf{Discriminator Body Applied Separately to $x$ and $x'$} \\
\midrule
$4 \times 4$ spectral norm conv. $32 \times \option{width}$. lReLU\\
$4 \times 4$ spectral norm conv. $32 \times \option{width}$. lReLU\\
$4 \times 4$ spectral norm conv. $64 \times \option{width}$. lReLU\\
$4 \times 4$ spectral norm conv. $64 \times \option{width}$. lReLU\\
flatten\\
\option{If extra dense:} $128 \times \option{width}$ spectral norm dense. lReLU\\
concatenate the pair.\\
$128 \times \option{width}$ spectral norm dense. lReLU\\
$128 \times \option{width}$ spectral norm dense. lReLU\\
\midrule
\textbf{Unconditional Head}\\
\midrule
$1$ spectral norm dense \option{with bias}\\
\midrule
\textbf{Conditional Head}\\
\midrule
$128 \times \option{width}$ spectral norm dense\\
\midrule
\end{tabular}
\end{table}

\begin{table}[!h]
\centering
\caption{Discriminator used for rank pairing. For rank-pairing, we use a special variant of the projection discriminator, where the conditional logit is computed via taking the difference between the two pairs and multiplying by $y \in \set{-1, +1}$. The discriminator is thus implicitly taking on the role of an adversarially trained encoder that checks for violations of the ranking rule in the embedding space. Parts in red are part of hyperparameter search.}
\begin{tabular}{c}
\toprule
\textbf{Discriminator Body Applied Separately to $x$ and $x'$} \\
\midrule
$4 \times 4$ spectral norm conv. $32 \times \option{width}$. lReLU\\
$4 \times 4$ spectral norm conv. $32 \times \option{width}$. lReLU\\
$4 \times 4$ spectral norm conv. $64 \times \option{width}$. lReLU\\
$4 \times 4$ spectral norm conv. $64 \times \option{width}$. lReLU\\
flatten\\
\option{If extra dense:} $128 \times \option{width}$ spectral norm dense. lReLU\\
concatenate the pair.\\
\midrule
\textbf{Unconditional Head Applied Separately to $x$ and $x'$}\\
\midrule
$1$ spectral norm dense \option{with bias}.\\
\midrule
\textbf{Conditional Head Applied Separately to $x$ and $x'$}\\
\midrule
$y\text{-dim}$ spectral norm dense.\\
\midrule
\end{tabular}
\end{table}
\clearpage
For all models, we use the Adam optimizer with $\beta_1= 0.5, \beta_2=0.999$ and set the generator learning rate to $1\times 10^{-3}$. We use a batch size of $64$ and set the leaky ReLU negative slope to $0.2$.

To demonstrate some degree of robustness to hyperparameter choices, we considered five different ablations:
\begin{enumerate}
    \item Width multiplier on the discriminator network ($\set{1, 2}$)
    \item Whether to add an extra fully-connected layer to the discriminator ($\set{\text{True},\text{False}}$).
    \item Whether to add a bias term to the head ($\set{\text{True},\text{False}}$).
    \item Whether to use two-time scale learning rate by setting encoder+discriminator learning rate multipler to ($\set{1, 2}$).
    \item Whether to use the default PyTorch or Keras initialization scheme in all models.
\end{enumerate}
As such, each of our experimental setting trains a total of $32$ \emph{distinct} models. The only exception is the intersection experiments where we fixed the width multiplier to $1$.

To give a sense of the scale of our experimental setup, note that the $864$ models in \Cref{fig:corrplot_shapes3d} originate as follows:
\begin{enumerate}
    \item $32$ hyperparameter conditions $\times$ $6$ restricted labeling conditions.
    \item $32$ hyperparameter conditions $\times$ $6$ match pairing conditions.
    \item $32$ hyperparameter conditions $\times$ $6$ share pairing conditions.
    \item $32$ hyperparameter conditions $\times$ $6$ rank pairing conditions.
    \item $16$ hyperparameter conditions $\times$ $6$ intersection conditions.
\end{enumerate}
\clearpage

\section{Proofs}\label{app:proofs}
\subsection{Assumptions on $\H$}
\label{appen:assumptions}
\begin{assumption}\label{ass:lebesgue}
Let $D \subseteq [n]$ indexes discrete random variables $S_D$. Assume that the remaining random variables $S_C=S_{\sm D}$ have probability density function $p(s_C|s_D)$ for any set of values $s_D$ where $p(S_D=s_D)>0$.
\end{assumption}

\begin{assumption}\label{ass:zigzag}
Without loss of generality, suppose $S_{1:n} = \brac{S_C, S_D}$ is ordered by concatenating the continuous variables with the discrete variables. Let $\B(s_D) = \brac{\interior(\supp(p(s_C \giv s_D))), s_D}$ denote the interior of the support of the continuous conditional distribution of $S_C$ concatenated with its conditioning variable $s_D$ drawn from $S_D$. With a slight abuse of notation, let $\B(S) = \bigcup_{s_D:p(s_D>0)} \B(s_D)$.
We assume $\B(S)$ is \emph{zig-zag connected}, i.e., for any $I, J \subseteq [n]$, for any two points $s_{1:n}, s'_{1:n} \in \B(S)$ that only differ in coordinates in $I \cup J$, there exists a path $\{s_{1:n}^t\}_{t=0:T}$ contained in $\B(S)$ such that
\begin{align}
    s^0_{1:n} &= s_{1:n}\\
    s^T_{1:n} &= s'_{1:n}\\
    \forall~0 \le t < T, & \textrm{ either } s^t_{\sm I} = s^{t+1}_{\sm I} \textrm{ or } s^t_{\sm J} = s^{t+1}_{\sm J},
\end{align}
\end{assumption}
Intuitively, this assumption allows transition from $s_{1:n}$ to $s'_{1:n}$ via a series of modifications that are only in $I$ or only in $J$. Note that zig-zag connectedness is necessary for restrictiveness union (\cref{calc:R_union}) and consistency intersection (\cref{calc:c_intersect}). \cref{fig:zigzag} gives examples where restrictiveness union is not satisfied when zig-zag connectedness is violated.

\begin{assumption}\label{ass:continuous}
For arbitrary coordinate $j \in [m]$ of $\gen$ that maps to a continuous variable $X_j$, we assume that $\gen_j(s)$ is continuous at $s$, $\forall s \in \B(S)$; For arbitrary coordinate $j \in [m]$ of $\gen$ that maps to a discrete variable $X_j$, $\forall s_D$ where $p(s_D)>0$, we assume that $\gen_j(s)$ is constant over each connected component of $\interior(\supp(p(s_C \giv s_D))$.

Define $\B(X)$ analogously to $\B(S)$. Symmetrically, for arbitrary coordinate $i \in [n]$ of $\enc$ that maps to a continuous variable $S_i$, we assume that $\enc_i(x)$ is continuous at $x$, $\forall x \in \B(X)$; For arbitrary coordinate $i \in [n]$ of $\enc$ that maps to a discrete $S_i$, $\forall x_D$ where $p(x_D)>0$, we assume that $\enc_i(x)$ is constant over each connected component of $\interior(\supp(p(x_C \giv x_D))$.
\end{assumption}

\begin{assumption}\label{ass:invertible}
Assume that every factor of variation is recoverable from the observation $\X$. Formally, $(p, \gen, \enc)$ satisfies the following property
\begin{align}
    \Expect_{p(s_{1:n})} \| \enc \circ \gen(s_{1:n}) - s_{1:n} \|^2 = 0.
\end{align}
\end{assumption}

\subsection{Calculus of Disentanglement}\label{sec:calculus_proofs}
\subsubsection{Expected-Norm Reduction Lemma}
\begin{lemma}
\label{lem:James}
Let $x, y$ be two random variables with distribution $p$, $f(x, y)$ be arbitrary function. Then $$\Expect_{x \sim p(x)}\Expect_{y, y' \sim p(y|x)}{\|f(x,y)-f(x, y')\|^2} \le \Expect_{(x, y), (x',y') \sim p(x,y)}{\|f(x,y)-f(x', y')\|^2}.$$
\end{lemma}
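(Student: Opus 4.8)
The statement is essentially a law-of-total-variance / Jensen-type inequality, so the plan is to expand both sides in terms of conditional moments and reduce to a variance decomposition. First I would introduce, for each fixed $x$, the conditional mean $\mu(x) \defeq \Expect_{y \sim p(y \giv x)} f(x,y)$. The key algebraic identity is that for any random vector $V$ with i.i.d.\ copies $V, V'$ drawn from the same distribution, $\Expect \|V - V'\|^2 = 2\Expect\|V - \Expect V\|^2$ (expand the square and use independence to kill the cross term). Applying this \emph{conditionally on $x$} to the left-hand side gives
\begin{align}
  \Expect_{x}\Expect_{y,y' \sim p(y \giv x)} \|f(x,y) - f(x,y')\|^2
  = 2\,\Expect_{x}\Expect_{y \sim p(y\giv x)} \|f(x,y) - \mu(x)\|^2.
\end{align}

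Next I would apply the same identity \emph{unconditionally} to the right-hand side: treating $(x,y)$ as the random vector and $W \defeq f(x,y)$, with $\bar\mu \defeq \Expect_{(x,y)} f(x,y) = \Expect_x \mu(x)$, we get
\begin{align}
  \Expect_{(x,y),(x',y')}\|f(x,y) - f(x',y')\|^2
  = 2\,\Expect_{(x,y)}\|f(x,y) - \bar\mu\|^2.
\end{align}
So the claim reduces to showing $\Expect_{x}\Expect_{y\giv x}\|f(x,y) - \mu(x)\|^2 \le \Expect_{(x,y)}\|f(x,y) - \bar\mu\|^2$. This is exactly the statement that conditioning reduces expected squared deviation: write $f(x,y) - \bar\mu = (f(x,y) - \mu(x)) + (\mu(x) - \bar\mu)$, expand the norm, and observe that the cross term $\Expect_x \Expect_{y\giv x}\langle f(x,y) - \mu(x),\, \mu(x) - \bar\mu\rangle$ vanishes because $\Expect_{y \giv x}[f(x,y) - \mu(x)] = 0$ for each $x$ (the inner second factor is $x$-measurable). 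Hence the right-hand side equals the left plus the nonnegative term $\Expect_x \|\mu(x) - \bar\mu\|^2 \ge 0$, which gives the inequality.

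The argument is entirely elementary; the only things to be careful about are (i) that all the relevant expectations exist — but since the inequality is trivially true (both sides infinite) when $\Expect \|f\|^2 = \infty$, one may assume $f \in L^2(p)$, and then $\mu(x)$ is well-defined $p_x$-a.e.\ and everything is finite by Cauchy–Schwarz; and (ii) that the conditional expectation $\mu(x)$ and the Fubini-type interchanges are justified, which holds under standard regularity (e.g.\ $p$ a regular conditional distribution), consistent with the measure-theoretic setup already assumed in the paper. The main (very mild) obstacle is thus just bookkeeping the integrability hypothesis; the mathematical content is the single variance-decomposition identity applied twice.
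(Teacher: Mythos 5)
Your proof is correct and is essentially the paper's argument in a different packaging: the paper centers $f$ w.l.o.g.\ and expands both squared-difference expectations into cross terms, discarding the nonnegative quantity $\Expect_x\|\Expect_{y\sim p(y\giv x)}f(x,y)\|^2$, which is exactly your discarded term $\Expect_x\|\mu(x)-\bar\mu\|^2$ once $\bar\mu=0$. So the two proofs rest on the same variance-decomposition identity, yours phrased via the law of total variance and the paper's via direct expansion.
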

\begin{proof}
Assume w.l.o.g that $\Expect_{(x, y) \sim p(x,y)}f(x,y)=0$.
\begin{align}
    LHS &= 2\Expect_{(x,y) \sim p(x,y)}{\|f(x,y)\|^2} - 2\Expect_{x \sim p(x)}\Expect_{y, y' \sim p(y|x)}{f(x,y)^T f(x, y')} \\
    &= 2\Expect_{(x,y) \sim p(x,y)}{\|f(x,y)\|^2} - 2\Expect_{x \sim p(x)}\Expect_{y \sim p(y|x)}{f(x,y)^T} \Expect_{y' \sim p(y|x)}{f(x, y')}\\
    &= 2\Expect_{(x,y) \sim p(x,y)}{\|f(x,y)\|^2} - 2\Expect_{x \sim p(x)}\|\Expect_{y \sim p(y|x)}{f(x,y)}\|^2 \\
    &\le 2\Expect_{(x,y) \sim p(x,y)}{\|f(x,y)\|^2} \\
    &=  2\Expect_{(x,y) \sim p(x,y)}{\|f(x,y)\|^2} - 2\|\Expect_{(x, y) \sim p(x,y)}f(x,y)\|^2 \\
    &=  2\Expect_{(x,y) \sim p(x,y)}{\|f(x,y)\|^2} - 2\Expect_{(x, y), (x',y') \sim p(x,y)}{f(x,y)^T f(x', y')} \\
    &= RHS.
\end{align}
\end{proof}

\subsubsection{Consistency Union}
Let $L = I \cap J $,$K = \sm \left( I \cup J \right)$, $M = I - L $,$N = J - L$.
\begin{prop}
$C(I) \wedge C(J) \implies C(I \cup J).$
\end{prop}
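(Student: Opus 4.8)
The plan is to prove $C(I \cup J)$ directly from the definition of generator consistency by showing that the expected squared difference $\Expect_{p_{I \cup J}} \| \enc^*_{I \cup J} \circ \gen(z_{I \cup J}, z_{\sm(I \cup J)}) - \enc^*_{I \cup J} \circ \gen(z_{I \cup J}, z_{\sm(I \cup J)}') \|^2$ vanishes. The key idea is to interpolate: fix $z_{I \cup J}$, and transform the first sample $z_{\sm(I \cup J)} = z_K$ into the second $z_K'$ by changing coordinates in stages, using the given consistency facts $C(I)$ and $C(J)$ to control each stage. Since a resampling of $z_K$ (with $z_{I\cup J}$ held fixed) can be viewed either as a resampling confined to $\sm I$ (holding $z_I$ fixed) or confined to $\sm J$ (holding $z_J$ fixed), each incremental move is governed by one of the two hypotheses.

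First I would set up the notation from the excerpt: $L = I \cap J$, $K = \sm(I \cup J)$, $M = I - L$, $N = J - L$, so that $\{L, M, N, K\}$ partitions $[n]$ and $I = L \cup M$, $J = L \cup N$, $\sm I = N \cup K$, $\sm J = M \cup K$. Observe that holding $z_{I \cup J} = z_{L \cup M \cup N}$ fixed and resampling $z_K$ is a special case of holding $z_I = z_{L \cup M}$ fixed and resampling $z_{\sm I} = z_{N \cup K}$ (we just happen to keep the $N$-coordinates unchanged). Hence $C(I)$ gives that $\enc^*_I \circ \gen$ is a.s. invariant under such a resampling — in particular $\enc^*_L$ and $\enc^*_M$ are invariant. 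Symmetrically, $C(J)$ gives that $\enc^*_L$ and $\enc^*_N$ are invariant under the same resampling of $z_K$. Combining these two, all of $\enc^*_L$, $\enc^*_M$, $\enc^*_N$ — i.e., $\enc^*_{I \cup J}$ — is a.s. invariant under resampling of $z_K$ with $z_{I \cup J}$ held fixed, which is exactly the statement $C(I \cup J)$.

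The main subtlety — and the step I expect to require the most care — is handling the measure-theoretic/support issues when translating "$C(I)$ holds under the sampling process $p_I$" into "the relevant invariance holds along the specific conditional-resampling path used for $C(I \cup J)$." The sampling process $p_I$ conditions $z_{\sm I}$ on $z_I$ and resamples it i.i.d., whereas $p_{I \cup J}$ conditions $z_K$ on $z_{I \cup J}$; one must argue that the event where $\enc^*_I \circ \gen$ fails to be invariant has measure zero under $p_{I \cup J}$ as well, which follows because the $p_{I \cup J}$-distribution of the relevant triple $(z_I, z_{\sm I}, z_{\sm I}')$ (with $z_N$ shared) is absolutely continuous with respect to — indeed a conditional slice of — the $p_I$-distribution. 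This is where Assumption~\ref{ass:lebesgue} and the structure of conditional densities enter. Given the excerpt's remark that $C(I) \iff R(\sm I)$ and that zig-zag connectedness is needed for restrictiveness union but consistency union is listed separately, I would double-check whether the clean two-hypothesis argument above suffices or whether one needs an interpolation path; if a path is needed, I would invoke the a.s.-continuity of $\enc^* \circ \gen$ from Assumption~\ref{ass:continuous} together with connectedness of the support, moving first within $\sm I$ then within $\sm J$ as needed, and conclude equality of $\enc^*_{I\cup J}$ at the endpoints.
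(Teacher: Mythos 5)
Your decomposition ($L=I\cap J$, $M=I-L$, $N=J-L$, $K=\sm(I\cup J)$) and the way you combine $C(I)$ (controlling $\enc^*_I$) with $C(J)$ (controlling $\enc^*_N$) to get $C(I\cup J)$ are exactly the paper's plan. The gap is at the one step where all the work lies: passing from the hypothesis $C(I)$, in which $(z_N,z_K)$ is resampled i.i.d.\ given $z_I$, to the statement needed for $C(I\cup J)$, in which $z_N$ is held \emph{shared} and only $z_K$ is resampled given $(z_I,z_N)$. You justify this by asserting that the law of the shared-$z_N$ triple is ``absolutely continuous with respect to --- indeed a conditional slice of --- the $p_I$-distribution.'' That claim is false whenever $N$ contains continuous coordinates: the shared-$z_N$ process is supported on the diagonal $\{z_N=z_N'\}$, which is a $p_I$-null set, so the conditional slice is singular with respect to the $p_I$-law rather than absolutely continuous, and a $p_I$-a.s.\ invariance cannot be transferred to it by a change-of-measure argument. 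The paper closes exactly this gap probabilistically with the expected-norm reduction lemma (\cref{lem:James}): for each fixed $z_I$, taking $x=z_N$, $y=z_K$, the expected squared difference under the shared-$z_N$, conditionally resampled-$z_K$ process is bounded above by the expected squared difference under i.i.d.\ resampling of $(z_N,z_K)$, and the latter is zero by $C(I)$ (similarly for $C(J)$, restricted to the coordinates $N$); this is a conditional-variance/Jensen argument that needs no continuity, support, or connectedness assumptions.

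Your fallback --- upgrade $C(I)$ to a pointwise invariance statement on the interior of the support via the continuity assumption and then specialize to $z_N'=z_N$ --- can be made rigorous (it is essentially the a.s.-to-pointwise equivalence the paper establishes inside the restrictiveness-union proof, and it does not need zig-zag connectedness for this direction), but it is a heavier route relying on Assumptions \ref{ass:lebesgue} and \ref{ass:continuous}, whereas the paper's consistency-union proof is purely an $L^2$ argument. As written, the primary justification for your crucial step does not go through, so the proof has a genuine gap.
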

\begin{proof}
\begin{align}
    C(I) \implies \Expect_{z_M, z_L}{\Expect_{z_N, z_N', z_K, z_K'}{\| r_I \circ G(z_M, z_L, z_N, z_K) - r_I \circ G(z_M, z_L, z_N', z_K') \|^2}} = 0.
\end{align}
For any fixed value of $z_M, z_L$, 
\begin{align}
    \Expect_{z_N, z_N', z_K, z_K'}{\| r_I \circ G(z_M, z_L, z_N, z_K) - r_I \circ G(z_M, z_L, z_N', z_K') \|^2} \\
    \ge \Expect_{z_N} \Expect_{z_K, z_K'}{\| r_I \circ G(z_M, z_L, z_N, z_K) - r_I \circ G(z_M, z_L, z_N, z_K') \|^2}.
\end{align} by plugging in $x = z_N$, $y = z_K$ into \cref{lem:James}.
Therefore
\begin{align}
    C(I) \implies \Expect_{z_M, z_L, z_N}{\Expect_{z_K, z_K'}{\| r_I \circ G(z_M, z_L, z_N, z_K) - r_I \circ G(z_M, z_L, z_N, z_K') \|^2}} = 0.
\end{align}

Similarly we have
\begin{align}
    C(J) &\implies \Expect_{z_M, z_L, z_N}{\Expect_{z_K, z_K'}{\| r_J \circ G(z_M, z_L, z_N, z_K) - r_J \circ G(z_M, z_L, z_N, z_K') \|^2}} = 0 \\
    &\implies \Expect_{z_M, z_L, z_N}{{\Expect_{z_K, z_K'}{\| r_N \circ G(z_M, z_L, z_N, z_K) - r_N \circ G(z_M, z_L, z_N, z_K') \|^2}}} = 0.
\end{align}
As $I \cap N = \emptyset$, $I \cup N = I \cup J$, adding the above two equations gives us $C(I \cup J)$.
\end{proof}

\subsubsection{Restrictiveness Union}
\begin{figure}[h]
\begin{center}
\begin{subfigure}[b]{0.24\textwidth}
\includegraphics[scale=0.4]{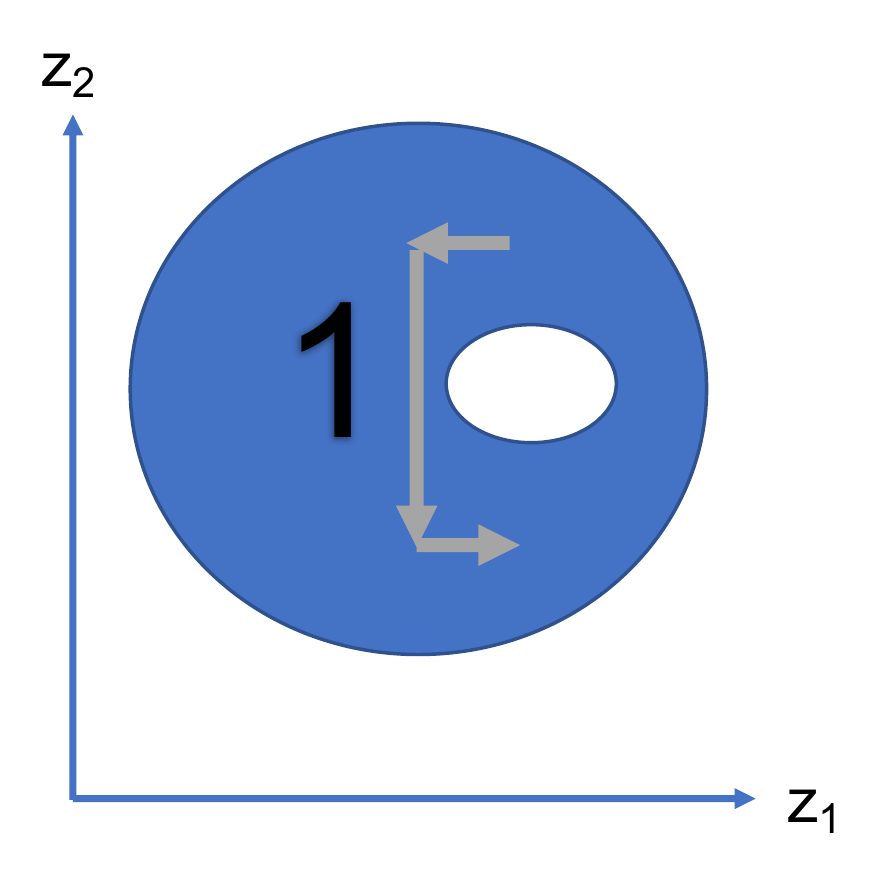}
\label{fig:zigzag1}
\end{subfigure}
\begin{subfigure}[b]{0.24\textwidth}
\includegraphics[scale=0.4]{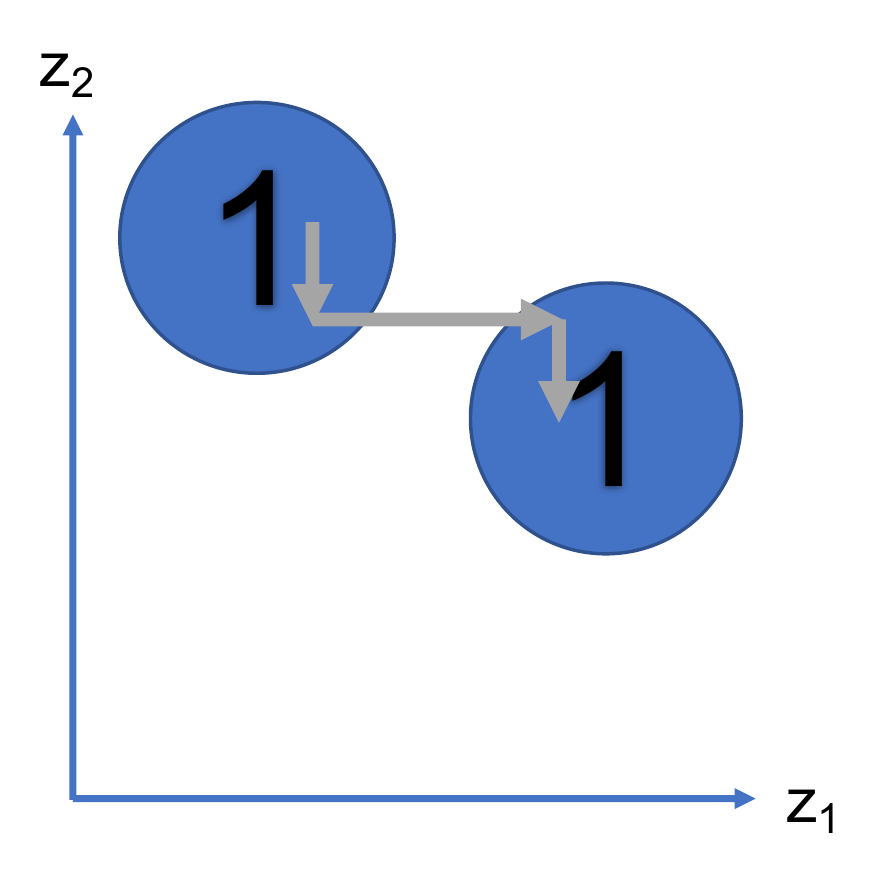}
\label{fig:zigzag2}
\end{subfigure}
\begin{subfigure}[b]{0.24\textwidth}
\includegraphics[scale=0.4]{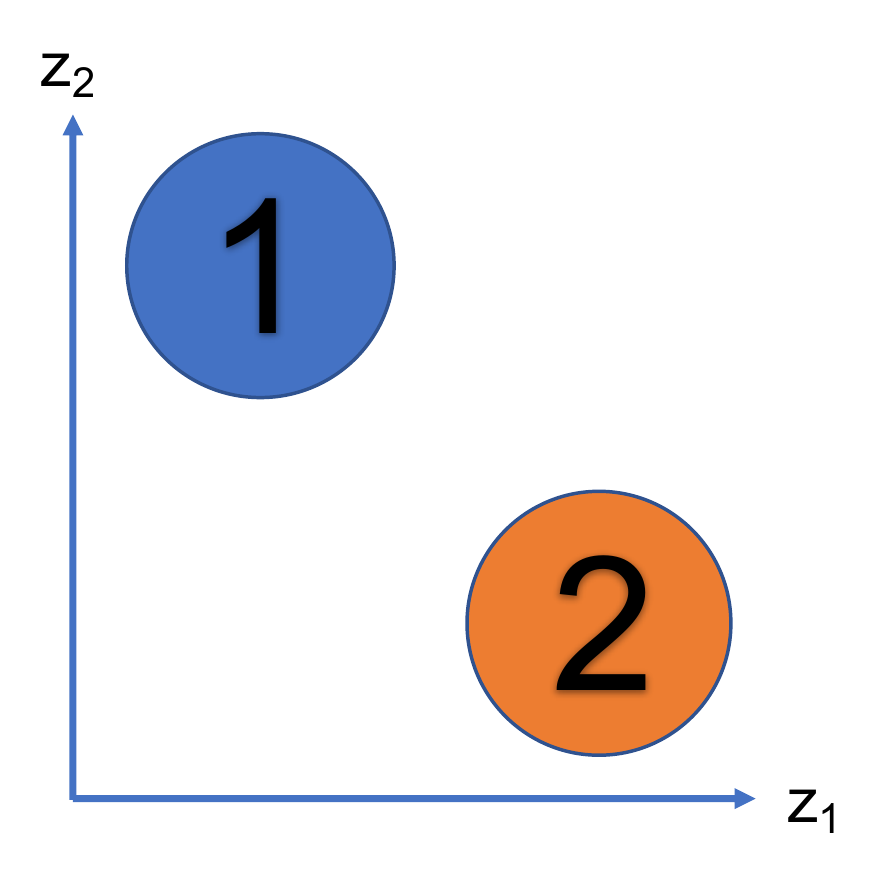}
\label{fig:zigzag3}
\end{subfigure}
\begin{subfigure}[b]{0.24\textwidth}
\includegraphics[scale=0.4]{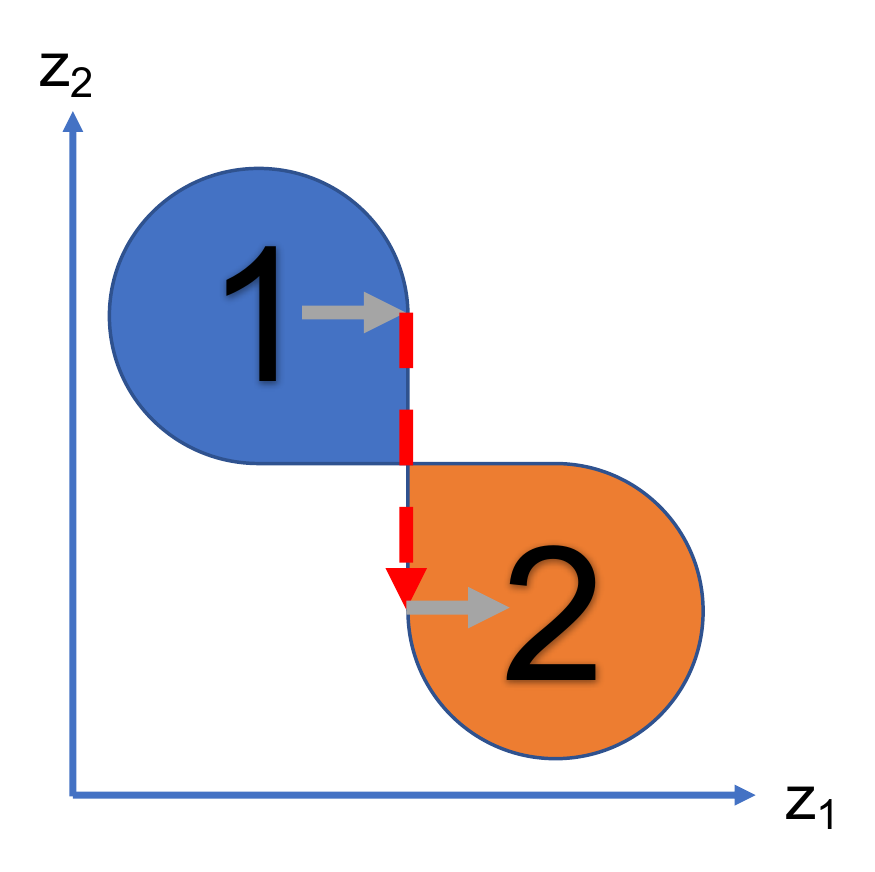}
\label{fig:zigzag4}
\end{subfigure}
\end{center}
\caption{Zig-zag connectedness is necessary for restriveness union. Here $n=m=3$. Colored areas indicate the support of $p(z_1, z_2)$; the marked numbers indicate the measurement of $s_3$ given $(z_1, z_2)$. Left two panels satisfy zig-zag connectedness (the paths are marked in gray) while the right two do not (indeed $R(1) \wedge R(2) \nRightarrow R(\{1, 2\})$). In the right-most panel, any zig-zag path connecting two points from blue and orange areas has to pass through boundary of the support (disallowed).}\label{fig:zigzag}
\end{figure}
Similarly define index sets $L, K, M, N$.
\begin{prop}
\label{calc:R_union}
Under assumptions specified in \cref{appen:assumptions}, $R(I) \wedge R(J) \implies R(I \cup J)$.
\end{prop}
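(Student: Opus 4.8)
The plan is to reduce the vanishing-expectation condition defining $R(I\cup J)$ to a pointwise invariance of the map $\enc^*_{\sm(I\cup J)}\circ\gen$ on the interior of the support of $p(z)$, and then to stitch together the hypotheses $R(I)$ and $R(J)$ along a zig-zag path. Following the notation of the Consistency Union proof, write $L=I\cap J$, $K=\sm(I\cup J)$, $M=I\setminus L$, $N=J\setminus L$, so that $\{K,L,M,N\}$ partitions $[n]$, $I=L\cup M$, $J=L\cup N$, $\sm I=K\cup N$, $\sm J=K\cup M$; abbreviate $\psi\defeq\enc^*_K\circ\gen$.

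First I would observe that, since the integrands are non-negative squares, $R(I)$ is equivalent to the almost-sure statement that $\enc^*_{\sm I}\circ\gen(z)$ agrees $p(z)$-almost everywhere with a function of $z_{\sm I}$ alone: on a conditional fiber of $z_{\sm I}$, an i.i.d.\ pair of draws of $z_I$ gives equal values almost surely only if $\enc^*_{\sm I}\circ\gen(\cdot,z_{\sm I})$ is a.e.\ constant there. Likewise $R(J)$ says $\enc^*_{\sm J}\circ\gen$ is a.e.\ a function of $z_{\sm J}$, and the goal $R(I\cup J)$ is equivalent to $\psi$ being a.e.\ a function of $z_K$. Restricting attention to the $K$-block of the outputs, $R(I)$ then makes $\psi$ a.e.\ invariant to moves in $z_I$ that fix $z_{\sm I}=(z_K,z_N)$, while $R(J)$ makes $\psi$ a.e.\ invariant to moves in $z_J$ that fix $z_{\sm J}=(z_K,z_M)$.

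Next I would promote these almost-everywhere statements to genuine invariances on $\B(Z)$, the interior of the support of $p(z)$: using the continuity (resp.\ local constancy on discrete coordinates) of $\gen$ and $\enc^*$ from \cref{ass:continuous} together with \cref{ass:lebesgue}, and the fact that the ``good'' set is dense in $\B(Z)$, one argues that $\psi$ is constant along any connected subset of $\B(Z)$ on which $z_{\sm I}$ is held fixed, and likewise along any connected subset on which $z_{\sm J}$ is held fixed. With that in hand, take $z,z'\in\B(Z)$ with $z_K=z'_K$; then $z$ and $z'$ differ only in coordinates of $I\cup J$, so \cref{ass:zigzag}, applied with the index sets $I$ and $J$, supplies a path in $\B(Z)$ from $z$ to $z'$ each of whose segments varies only coordinates in $I$ (fixing $z_{\sm I}$) or only coordinates in $J$ (fixing $z_{\sm J}$). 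Along each segment $\psi$ is constant by the previous step, so $\psi(z)=\psi(z')$; hence $\psi=\enc^*_K\circ\gen$ is a function of $z_K$ alone on $\B(Z)$. Since $p(z)$ is carried by $\B(Z)$ up to a null set, the generating process $p_{\sm(I\cup J)}$ produces a.s.\ two decodings with the same $z_K$ and hence equal $\enc^*_K$-values, so the expectation in the definition of $R(I\cup J)$ vanishes. The encoder-based statement follows identically after exchanging the roles of $(p,\gen)$ and $(p^*,\gen^*)$ against the fixed encoder/generator.

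The hard part is the middle step: turning the two measure-zero ``almost everywhere'' conclusions of $R(I)$ and $R(J)$ into honest pointwise invariances on the interior of the support, uniformly across continuous coordinates (density plus continuity) and discrete coordinates (componentwise constancy). This is exactly the step that collapses without zig-zag connectedness --- if $\supp p(z_M,z_N\giv z_K)$ fails to be zig-zag connected one cannot route a path from $z$ to $z'$ through in-support intermediate points, and \Cref{fig:zigzag} exhibits a configuration with $R(1)\wedge R(2)\nRightarrow R(\{1,2\})$. A secondary, more routine point will be to check that $\enc^*\circ\gen$ inherits enough regularity on $\B(Z)$ from \cref{ass:continuous} for the density argument to go through; note in particular that, unlike Consistency Union, there is no way to shortcut this via \cref{lem:James}, because $R(I)$ and $R(J)$ hold their complements $z_{\sm I}$ and $z_{\sm J}$ fixed rather than a common set, forcing the pointwise composition argument.
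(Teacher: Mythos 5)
Your proposal follows essentially the same route as the paper's proof: reduce restrictiveness to a pointwise invariance of $\enc^*_{\sm(I\cup J)}\circ\gen$ on $\B(Z)$ (promoting the a.e.\ statement via the continuity / local-constancy assumptions), chain $I$-moves and $J$-moves using zig-zag connectedness, and then pass back to the vanishing expectation since $p(z)$ puts no mass outside $\B(Z)$. One caution: \cref{ass:zigzag} supplies only a finite sequence of in-support points (not continuous segments), so the promoted invariance must be the equality of $\enc^*_K\circ\gen$ at \emph{any} two points of $\B(Z)$ sharing $z_{\sm I}$ (resp.\ $z_{\sm J}$), not merely constancy on connected fiber subsets as you phrase it --- the paper obtains this two-point form directly by a positive-probability ball/contradiction argument, and your density-plus-continuity argument also yields it once you note that ``a.e.\ constant on a fiber'' means equal to a single value a.e., so no connectivity of the fiber is needed.
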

\begin{proof}
Denote $f = \enc_K^* \circ \gen$. We claim that 
\begin{align}
    R(I) &\iff \Expect_{z_{\sm I}}{\Expect_{z_I, z_I'}{\| f(z_I, z_{\sm I}) - f(z_I', z_{\sm I}) \|^2}} = 0.\\
    &\iff \forall (z_I, z_{\sm I}), (z_I', z_{\sm I}) \in \B(Z), f(z_I, z_{\sm I}) = f(z_I', z_{\sm I}).
\end{align}
We first prove the backward direction: When we draw $z_{\sm I} \sim p(z_{\sm I}), z_I, z_I' \sim p(z_I | z_{\sm I})$, let $E_1$ denote the event that $(z_I, z_{\sm I}) \notin \B(Z)$, and $E_2$ denote the event that $(z_I', z_{\sm I}) \notin \B(Z)$. Reorder the indices of $(z_I, z_{\sm I})$  as $(z_C, z_D)$. The probability that $(z_I, z_{\sm I}) \notin \B(Z)$  (i.e.,$z_C$ is on the boundary of $\B(z_{D})$) is 0. Therefore $\Pr[E_1] = \Pr[E_2] = 0$. Therefore $\Pr[E_1 \cup E_2] \le \Pr[E_1] + \Pr[E_2] = 0$, i.e., with probability 1, $\| f(z_I, z_{\sm I}) - f(z_I', z_{\sm I}) \|^2=0$.

Now we prove the forward direction: Assume for the sake of contradiction that $\exists (z_I, z_{\sm I}), (z_I', z_{\sm I}) \in \B(Z)$ such that $f(z_I, z_{\sm I}) < f(z_I', z_{\sm I})$. Denote $U = I \cap D$, $V = I \cap C$, $W = \sm I \cap D$, $Q = \sm I \cap C$. We have $f(z_U, z_V, z_W, z_Q) < f(z_U', z_V', z_W, z_Q)$. Since $f$ is continuous (or constant) at $(z_U, z_V, z_W, z_Q)$ in the interior of $\B([z_U, z_W])$, and $f$ is also continuous (or constant) at $(z_U', z_V', z_W, z_Q)$ in the interior of $\B([z_U', z_W])$, we can draw open balls of radius $r>0$ around each point, i.e., $B_r(z_V, z_Q) \subset \B([z_U, z_W])$ and $B_r(z_V', z_Q) \subset \B([z_U', z_W])$, where
\begin{align}
    \forall (z_V^*, z_Q^*) \in B_r(z_V, z_Q), \forall (z_V^\Delta, z_Q^\Delta) \in B_r(z_V', z_Q), f(z_U, z_V^*, z_W, z_Q^*) < f(z_U', z_V^\Delta, z_W, z_Q^\Delta).
\end{align}
When we draw $z_{\sm I} \sim p(z_{\sm I}), z_I, z_I' \sim p(z_I | z_{\sm I})$, let $C$ denote the event that $(z_I, z_{\sm I}) = (z_V^*, z_U, z_Q^{\#}, z_W)$, $(z_I', z_{\sm I}) = (z_V^\Delta, z_U', z_Q^{\#}, z_W)$ where $(z_V^*, z_Q^{\#}) \in B_r(z_V, z_Q)$ and $(z_V^\Delta, z_Q^{\#}) \in B_r(z_V', z_Q)$. Since both balls have positive volume, $\Pr[C]>0$. However, $\| f(z_I, z_{\sm I}) - f(z_I', z_{\sm I}) \|^2 > 0$ whenever event $C$ happens, which contradicts $R(I)$. Therefore $\forall (z_I, z_{\sm I}), (z_I', z_{\sm I}) \in \B(Z), f(z_I, z_{\sm I}) = f(z_I', z_{\sm I})$.

We have shown that
\begin{align}
    R(I) \iff \forall (z_M, z_L, z_N, z_K), (z_M', z_L', z_N, z_K) \in \B(Z), f(z_M, z_L, z_N, z_K)=f(z_M', z_L', z_N, z_K).
\end{align}
Similarly
\begin{align}
    R(J) &\iff \forall (z_M, z_L, z_N, z_K), (z_M, z_L', z_N', z_K) \in \B(Z), f(z_M, z_L, z_N, z_K) = f(z_M, z_L', z_N', z_K). \\
    R(I \cup J) &\iff \forall (z_M, z_L, z_N, z_K), (z_M', z_L', z_N', z_K) \in \B(Z), f(z_M, z_L, z_N, z_K) = f(z_M', z_L', z_N', z_K)
\end{align}
Let the zig-zag path between $(z_M, z_L, z_N, z_K)$ and $(z_M', z_L', z_N', z_K) \in \B(Z)$ be $\{(z_M^t, z_L^t, z_N^t, z_K)\}_{t=0}^T$. Repeatedly applying the equivalent conditions of $R(I)$ and $R(J)$ gives us
\begin{align}
    f(z_M, z_L, z_N, z_K) = f(z_M^1, z_L^1, z_N^1, z_K) = \dots = f(z_M^{T-1}, z_L^{T-1}, z_N^{T-1}, z_K) = f(z_M', z_L', z_N', z_K).
\end{align}
\end{proof}

\subsection{Consistency and Restiveness Intersection}
\begin{prop} \label{calc:c_intersect}
Under the same assumptions as restrictiveness union, $C(I) \wedge C(J) \implies C(I \cap J)$.
\end{prop}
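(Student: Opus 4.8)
The plan is to reduce consistency intersection to restrictiveness union via the complement rule $C(I) \iff R(\sm I)$, which holds unconditionally (by inspection of the definitions, the generating process $p_I$ is literally the process $p_{\sm(\sm I)}$, and the squared-norm term $\|\enc^*_I \circ \gen(\cdot) - \enc^*_I \circ \gen(\cdot)\|^2$ in \Cref{eq:consistency} is exactly the term $\|\enc^*_{\sm(\sm I)} \circ \gen(\cdot) - \enc^*_{\sm(\sm I)} \circ \gen(\cdot)\|^2$ appearing in \Cref{eq:restrictiveness} for the set $\sm I$). The same remark applies verbatim to the encoder-based definitions, so the argument below is agnostic to which version is used.

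First I would rewrite the hypothesis: $C(I) \wedge C(J)$ is equivalent to $R(\sm I) \wedge R(\sm J)$. Next I would invoke restrictiveness union (\cref{calc:R_union}), which is available precisely because we are working under the same assumptions on $\H$ (in particular zig-zag connectedness of $\B(Z)$); this yields $R(\sm I \cup \sm J)$. Then I would apply De Morgan's law for subsets of $[n]$, namely $\sm I \cup \sm J = \sm(I \cap J)$, to conclude $R(\sm(I \cap J))$. Finally, applying the complement rule once more gives $R(\sm(I \cap J)) \iff C(I \cap J)$, which is the desired statement.

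There is essentially no obstacle here: all of the real analytic work (the continuity/zig-zag argument establishing that $R(\cdot)$ is equivalent to a pointwise invariance condition on $\B(Z)$, and the chaining along a zig-zag path) was already carried out in the proof of \cref{calc:R_union}. The only point worth spelling out explicitly in the write-up is that the complement rule is a definitional identity rather than a derived implication, so that both directions of the equivalences used above are legitimate and no hidden hypothesis is smuggled in.
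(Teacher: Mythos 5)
Your proof is correct and follows exactly the route the paper takes: apply the definitional complement rule $C(\cdot) \iff R(\sm \cdot)$, invoke restrictiveness union under the zig-zag connectedness assumption, and translate back via $\sm I \cup \sm J = \sm(I \cap J)$. Your additional remark that the complement rule is definitional (so both directions are available without extra hypotheses) is a fair clarification but not a departure from the paper's argument.
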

\begin{proof}
    \begin{align}
        C(I) \wedge C(J) &\implies R(\sm I) \wedge R(\sm J)\\
        &\implies R(\sm I \cup \sm J)\\
        &\implies C(\sm (\sm I \cup \sm J))\\
        &\implies C(I \cap J).
    \end{align}
\end{proof}
\begin{prop} \label{calc:R_intersect}
$R(I) \wedge R(J) \implies R(I \cap J)$.
\end{prop}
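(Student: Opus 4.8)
\textbf{Proof proposal for \Cref{calc:R_intersect} ($R(I) \wedge R(J) \implies R(I \cap J)$).}

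The plan is to mirror exactly the duality trick used in the proof of \Cref{calc:c_intersect}, but in the opposite direction. Recall the complement rule $C(K) \iff R(\sm K)$, which lets us freely translate statements about restrictiveness on a set into statements about consistency on its complement. The key algebraic fact we need is the set identity $\sm I \cap \sm J = \sm(I \cup J)$ (De Morgan), which pairs with consistency \emph{union} (\Cref{calc:R_union}'s companion, the already-proved rule $C(I) \wedge C(J) \implies C(I \cup J)$) the same way $\sm I \cup \sm J = \sm(I \cap J)$ paired with restrictiveness union in the proof of $C(I)\wedge C(J)\implies C(I\cap J)$.

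Concretely, I would write the following chain of implications:
\begin{align}
    R(I) \wedge R(J) &\implies C(\sm I) \wedge C(\sm J)\\
    &\implies C(\sm I \cup \sm J)\\
    &\implies C(\sm(I \cap J))\\
    &\implies R(I \cap J),
\end{align}
where the first step is two applications of the complement rule, the second step invokes consistency union, the third step is De Morgan's law $\sm I \cup \sm J = \sm(I \cap J)$, and the last step is again the complement rule.

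The only subtlety — and the one place I would pause — is to check that consistency union (the rule $C(I)\wedge C(J)\implies C(I\cup J)$) holds \emph{without} the zig-zag connectedness assumption, since its proof in the excerpt used only \Cref{lem:James} and not the topological assumptions; this is indeed the case, so \Cref{calc:R_intersect} as stated requires no extra hypotheses beyond what is already in force, which is consistent with the fact that the proposition is stated without the ``under the same assumptions'' qualifier that precedes \Cref{calc:c_intersect}. No genuine obstacle remains; the argument is purely a dualization via complementation, and the whole proof is three or four lines.
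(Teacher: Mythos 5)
Your proposal is correct and matches the paper's intended argument: the paper simply states that the proof is analogous to the consistency-intersection rule, and the analogous chain is exactly the dualization you wrote, using the complement rule $C(K) \iff R(\sm K)$ together with consistency union and De Morgan's law. Your added observation that consistency union needs only \Cref{lem:James} (no zig-zag connectedness), so no extra hypotheses are required here, is a correct and worthwhile refinement of the paper's terse ``analogous'' remark.
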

Proof is analogous to \cref{calc:c_intersect}.

\subsection{Distribution matching guarantees latent code informativeness}
\begin{prop}
\label{thm:encoder_sufficiency}
If $(p^*, \gen^*, \enc^*) \in \H$, and $(p, \gen, \enc) \in \H$, and $\gen^*(S) \deq \gen(Z)$, then there exists a continuous function $r$ such that
\begin{align}
    \Expect_{p(s_{1:n})} \| r \circ \enc \circ \gen^*(s) - s \| = 0.
\end{align}
\end{prop}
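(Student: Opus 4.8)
The plan is to exhibit an explicit $r$, namely $r \defeq \enc^* \circ \gen$, and to check that $r \circ \enc \circ \gen^*$ agrees with the identity off a null set. The intuition is that $\enc \circ \gen^*$ is the ``encoding'' map $s \mapsto \gen^*(s) \mapsto \enc(\gen^*(s))$, which realizes a factor value as an observation and then re-encodes it with the learned encoder, while $\enc^* \circ \gen$ performs the analogous round trip through the learned generator and the oracle encoder; distribution matching is precisely what forces these two passes to be mutual inverses up to $\mu$-null sets, where $\mu$ denotes the common law of $\gen^*(S)$ and $\gen(Z)$.

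First I would record what \cref{ass:invertible}, applied to \emph{both} models, provides: a Borel set $A \subseteq \supp(p)$ with $p(A) = 1$ on which $\enc \circ \gen = \mathrm{id}$, and a set $A^* \subseteq \supp(p^*)$ with $p^*(A^*) = 1$ on which $\enc^* \circ \gen^* = \mathrm{id}$. From $\enc \circ \gen = \mathrm{id}$ on $A$ one upgrades $\enc$ to a two-sided inverse on the image: for $x \in \gen(A)$, choosing $z_0 \in A$ with $\gen(z_0) = x$ gives $\enc(x) = \enc(\gen(z_0)) = z_0$, hence $\gen(\enc(x)) = \gen(z_0) = x$, i.e.\ $\gen \circ \enc = \mathrm{id}$ on $\gen(A)$. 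Next I would invoke the hypothesis $\gen^*(S) \deq \gen(Z)$, i.e.\ the pushforwards agree: $\gen_\ast p = (\gen^*)_\ast p^*$; write $\mu$ for this common observation law. Since $p(A) = 1$ we get $\mu(\gen(A)) = p\big(\gen^{-1}(\gen(A))\big) \ge p(A) = 1$, and likewise $\mu\big(\gen^*(A^*)\big) = 1$, so $\Omega \defeq \gen(A) \cap \gen^*(A^*)$ has $\mu(\Omega) = 1$. Pulling this back, $p^*\big(\{\,s : \gen^*(s) \in \Omega\,\}\big) = \mu(\Omega) = 1$, and intersecting with $A^*$ we conclude that for $p^*$-a.e.\ $s$ we have both $s \in A^*$ and $x \defeq \gen^*(s) \in \Omega \subseteq \gen(A)$. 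For such $s$,
\begin{align}
    r \circ \enc \circ \gen^*(s) = \enc^*\big(\gen(\enc(x))\big) = \enc^*(x) = \enc^*(\gen^*(s)) = s,
\end{align}
using $\gen \circ \enc = \mathrm{id}$ on $\gen(A)$ for the middle equality and $\enc^* \circ \gen^* = \mathrm{id}$ on $A^*$ for the last. Hence the integrand vanishes $p^*$-a.e., so its expectation is $0$. (The displayed expectation should be read under the oracle prior $p^*$, which is the relevant one here; the computation goes through verbatim under any prior supported inside $\supp(p^*)$.) Finally, $r = \enc^* \circ \gen$ is continuous because \cref{ass:continuous} makes $\gen$ and $\enc^*$ continuous on the relevant regions (and locally constant on the discrete coordinates).

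The main obstacle is the measure-theoretic bookkeeping in the middle step: turning ``$\enc$ is a one-sided inverse of $\gen$ on a full-measure set'' together with ``the two pushforwards coincide'' into the genuine statement that $\enc \circ \gen^*$ and $\enc^* \circ \gen$ are mutually inverse off a $p^*$-null set, while keeping track that all the exceptional sets ($\supp(p) \setminus A$, $\supp(p^*) \setminus A^*$, $\supp(\mu) \setminus \gen(A)$, and $(\gen^*)^{-1}(\supp(\mu) \setminus \Omega)$) are simultaneously avoided; a couple of minor measurability points (images of Borel sets under continuous maps, so that $\gen(A)$ is $\mu$-measurable) also need a word but are harmless. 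Everything else---choosing $r$, the chain of cancellations, and invoking continuity---is routine given the standing assumptions on $\H$.
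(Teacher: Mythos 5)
Your proof is correct and follows essentially the same route as the paper's: the same witness $r = \enc^* \circ \gen$, the same cancellation chain, and the same use of $\gen^*(S) \deq \gen(Z)$ to ensure that almost every observation $\gen^*(s)$ has a suitable preimage under $\gen$ on which $\enc \circ \gen$ acts as the identity. The only difference is the technical middle step: the paper upgrades the a.e.\ identities of \cref{ass:invertible} to pointwise identities on $\B(S)$ and $\B(Z)$ via the continuity argument from the restrictiveness-union proof and then argues by contradiction, whereas you stay purely measure-theoretic with full-measure sets and pushforwards, which works given your noted caveat that $\gen(A)$ is analytic and hence universally measurable.
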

\begin{proof}
We show that $r=\enc^* \circ \gen$ satisfies \cref{thm:encoder_sufficiency}. By \cref{ass:invertible},
\begin{align}
    \Expect_s \|\enc^* \circ \gen^*(s)-s\|^2 &=0. \\
    \Expect_z \|\enc \circ \gen(z)-z\|^2 &=0.
\end{align}
By the same reasoning as in the proof of \cref{calc:R_union},
\begin{align}
    \Expect_s \|\enc^* \circ \gen^*(s)-s\|^2 &=0 \implies \forall s \in \B(S), \enc^* \circ \gen^*(s)=s. \\
    \Expect_z \|\enc \circ \gen(z)-z\|^2 &=0 \implies \forall z \in \B(Z), \enc \circ \gen(z)=z.
\end{align}
Let $s \sim p(s)$. We claim that $\Pr[E_1]=1$, where $E_1$ denote the event that $\exists z \in \B(Z)$ such that $\gen^*(s)=\gen(z)$. Suppose to the contrary that there is a measure-non-zero set $\mathcal{S} \subseteq supp(p(s))$ such that $\forall s \in \mathcal{S}$, no $z \in \B(Z)$ satisfies $\gen^*(s)=\gen(z)$. Let $\mathcal{X} = \{g(s): s \in \mathcal{S}\}$. As $\gen^*(S) \deq \gen(Z)$, $\Pr_{s}[\gen^*(s) \in \mathcal{X}]=\Pr_{z}[\gen(z) \in \mathcal{X}]>0$. Therefore $\exists \mathcal{Z} \subseteq supp(p(z))-\B(Z)$ such that $\mathcal{X}\subseteq\{\gen(z): z \in \mathcal{Z})\}$. But $supp(p(z))-\B(Z)$ has measure 0. Contradiction.

When we draw $s$, let $E_2$ denote the event that $s \in \B(S)$. $\Pr[E_2]=1$, so $\Pr[E_1 \wedge E_2]=1$. When $E_1 \wedge E_2$ happens, $\enc^* \circ \gen \circ \enc \circ \gen^*(s)=\enc^* \circ \gen \circ \enc \circ \gen(z)=\enc^* \circ \gen(z)=\enc^* \circ \gen^*(s)=s$. Therefore
\begin{align}
    \Expect_{s} \| \enc^* \circ \gen \circ \enc \circ \gen^*(s) - s \| = 0.
\end{align}
\end{proof}

\subsection{Weak Supervision Guarantee}
\fta*
\begin{proof}
We prove the three cases separately:
\begin{enumerate}
    \item 
    Since $(x_d, s_I) \deq (x_g, z_I)$, consider the measurable function 
\begin{align}
    f(a, b) = \| \enc^*_I(a) - b \|^2.
\end{align}
We have
\begin{align}
    \Expect \| \enc^*_I(x_d) - s_I \|^2 = \Expect \| \enc^*_I(x_g) - z_I \|^2 = 0.
\end{align}
By the same reasoning as in the proof of \cref{calc:R_union},
\begin{align}
    \Expect_z \| \enc^*_I \circ g(z) - z_I \|^2 &=0 \implies \forall z \in \B(Z), \enc^*_I \circ g(z) = z_I.
\end{align}
Therefore
\begin{align}
    \Expect_{z_I} \Expect_{z_{\sm I}, z_{\sm I}'}\| \enc^*_I \circ g(z_I, z_{\sm I}) - \enc^*_I \circ g(z_I, z_{\sm I}') \|^2 =0.
\end{align}
i.e., $\gen$ satisfies $C(I \scolon p, \gen, \enc^*)$. By symmetry, $\enc$ satisfies $C(I \scolon p^*, \gen^*, \enc)$.
    \item 
    \begin{align}
    \paren{\gen^*(S_I, S_{\sm I}), \gen^*(S_I, S_{\sm I}')}
    &\deq 
    \paren{\gen(Z_I, Z_{\sm I}), \gen(Z_I, Z_{\sm I}')} \\ \implies 
    \| \enc_I^* \circ \gen^*(S_I, S_{\sm I}) - \enc_I^* \circ \gen^*(S_I, S_{\sm I}')\|^2 &\deq \| \enc_I^* \circ \gen(Z_I, Z_{\sm I}) - \enc_I^* \circ \gen(Z_I, Z_{\sm I}')\|^2 \\
    \implies \Expect_{z_I}\Expect_{z_{\sm I},z_{\sm I}'} \| \enc_I^* \circ \gen (z_I, z_{\sm I}) - \enc_I^* \circ \gen (z_I, z_{\sm I}')\|^2 &= 0.
\end{align}
So $\gen$ satisfies $C(I \scolon p, \gen, \enc^*)$. By symmetry, $\enc$ satisfies $C(I \scolon p^*, \gen^*, \enc)$.
    \item Let $I=\{i\}$, $f = \enc_I^* \circ \gen$. Distribution matching implies that, with probability 1 over random draws of $Z,Z'$, the following event $P$ happens:
\begin{align}
    Z_I <= Z_I' \implies f(Z) <= f(Z').
\end{align}
i.e.,
\begin{align}
    \Expect_{z,z'}\1[\neg P] = 0.
\end{align}

Let $W = {\sm I} \cap D$, $Q = {\sm I} \cap \sm D$. We showed in the proof of \cref{calc:R_union} that \begin{align}
    C(I) \iff \forall (z_I, z_W, z_Q), (z_I, z_W', z_Q') \in \B(Z), f(z_I, z_W, z_Q)=f(z_I, z_W', z_Q').
\end{align}
We prove by contradiction. Suppose $\exists (z_I, z_W, z_Q), (z_I, z_W', z_Q') \in \B(Z)$ such that $f(z_I, z_W, z_Q)<f(z_I, z_W', z_Q')$.
\begin{enumerate}
    \item Case 1: $Z_I$ is discrete.
    
    Since $f$ is constant both at $(z_I, z_W, z_Q)$ in the interior of $\B([z_I, z_W])$, and at $(z_I, z_W', z_Q')$ in the interior of $\B([z_I, z_W'])$, we can draw open balls of radius $r>0$ around each point, i.e., $B_r(z_Q) \subset \B([z_I, z_W])$ and $B_r(z_Q') \subset \B([z_I, z_W'])$, where
    \begin{align}
        \forall z_Q^* \in B_r(z_Q), \forall z_Q^\Delta \in B_r(z_Q'), f(z_I, z_W, z_Q^*)<f(z_I, z_W', z_Q^\Delta).
    \end{align}
    When we draw $z, z' \sim p(z)$, let $C$ denote the event that this specific value of $z_I$ is picked for both $z, z'$, and we picked $z_{\sm I} \in B_r(z_Q')$, $z_{\sm I}' \in B_r(z_Q)$. Since both balls have positive volume, $\Pr[C]>0$. However, $P$ does not happen whenever event $C$ happens, since $z_I=z_I'$ but $f(z)>f(z')$, which contradicts $\Pr[P]=1$.
    \item Case 2: $z_I$ is continuous.
    
    Similar to case 1, we can draw open balls of radius $r>0$ around each point, i.e., $B_r(z_I, z_Q) \subset \B(z_W)$ and $B_r(z_I, z_Q') \subset \B(z_W')$, where
    \begin{align}
        \forall (z_I^*, z_Q^*) \in B_r(z_I, z_Q), \forall (z_I^\Delta, z_Q^\Delta) \in B_r(z_I, z_Q'), f(z_I^*, z_W, z_Q^*)<f(z_I^\Delta, z_W', z_Q^\Delta).
    \end{align}
    Let $H^1 = \{(z_I^*, z_Q^*) \in B_r(z_I, z_Q): z_I^*>=z_I\}$, $H^2 = \{(z_I^\Delta, z_Q^\Delta) \in B_r(z_I, z_Q'): z_I^\Delta<=z_I\}$.
    When we draw $z, z' \sim p(z)$, let $C$ denote the event that we picked $z' \in H^1 \times \{z_W\}$, $z \in H^2 \times \{z_W'\}$. Since $H^1, H^2$ have positive volume, $\Pr[C]>0$. However, $P$ does not happen whenever event $C$ happens, since $z_I<=z_I'$ but $f(z)>f(z')$, which contradicts $\Pr[P]=1$.
\end{enumerate}
Therefore we showed
\begin{align}
    \forall (z_I, z_W, z_Q), (z_I, z_W', z_Q') \in \B(Z), f(z_I, z_W, z_Q)=f(z_I, z_W', z_Q'),
\end{align}
i.e., $\gen$ satisfies $C(I \scolon p, \gen, \enc^*)$. By symmetry, $\enc$ satisfies $C(I \scolon p^*, \gen^*, \enc)$.
\end{enumerate}

\end{proof}

\subsection{Weak Supervision Impossibility Result}
\begin{theorem}\label{thm:impossible}
Weak supervision via restricted labeling, match pairing, or ranking on $s_I$ is not sufficient for learning a generative model whose latent code $Z_I$ is restricted to $S_I$.
\end{theorem}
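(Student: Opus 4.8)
The plan is to refute sufficiency by a single indistinguishability construction. Take $I=\set{1}$, $n=3$, and let the ``identity oracle'' be $h^*=(p^*,\gen^*,\enc^*)$ with $p^*=\mathrm{Uniform}[0,1]\otimes\mathcal N(\mathbf{0},\mathbf{I})$ on $\R\times\R^2$ and $\gen^*=\enc^*=\mathrm{id}$. The competing model $\tilde h=(p,\gen,\enc)$ keeps the same prior $p=p^*$ and sets $\gen(z_1,z_2,z_3)=\bigl(z_1,\ R_{z_1}(z_2,z_3)\bigr)$, where $R_{z_1}$ is the planar rotation by angle $z_1$, with $\enc=\gen^{-1}$, $\gen^{-1}(x_1,x_2,x_3)=\bigl(x_1,\ R_{x_1}^{-1}(x_2,x_3)\bigr)$. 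Using a rotation rather than a shear is essential: since $\mathcal N(\mathbf{0},\mathbf{I})$ on $\R^2$ is rotation-invariant, the conditional law of $R_{z_1}(Z_2,Z_3)$ given $Z_1=z_1$ is $\mathcal N(\mathbf{0},\mathbf{I})$ for \emph{every} $z_1$, so the deformation is invisible to all three supervision channels simultaneously. One first checks $\tilde h\in\H$: continuity of $\gen$ and $\gen^{-1}$ and $\enc\circ\gen=\mathrm{id}$ are immediate, and the support $[0,1]\times\R^2$ (a product of convex sets, with a genuine density and no discrete coordinates) satisfies \cref{ass:lebesgue,ass:zigzag,ass:continuous,ass:invertible}.

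Next I verify that $\strat(h^*)=\strat(\tilde h)$ for $\strat$ equal to restricted labeling, match pairing, and rank pairing on $S_1$. The key fact is that sampling $Z\sim p$ and setting $S:=\gen(Z)$ yields $S\sim\mathrm{Uniform}[0,1]\otimes\mathcal N(\mathbf{0},\mathbf{I})$ \emph{with first coordinate $S_1=Z_1$}; hence $Z\mapsto(\gen(Z),Z_1)$ pushes $p$ forward to the same law as $S\mapsto(\gen^*(S),S_1)=(S,S_1)$ pushes $p^*$ forward, giving restricted-labeling indistinguishability. For match pairing, rotating two independent isotropic Gaussians by a common angle leaves them independent and isotropic, so the two generated views share their first coordinate ($=z_1$) and are conditionally i.i.d.\ in the last two, matching the oracle. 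For rank pairing, the change of variables $S_1:=Z_1,(S_2,S_3):=R_{Z_1}(Z_2,Z_3)$ (and the primed analogue) realizes the oracle's i.i.d.\ paired law and preserves the comparison bit because $\1\set{Z_1\le Z_1'}=\1\set{S_1\le S_1'}$. Finally, aligning $\tilde h$ against $\enc^*=\mathrm{id}$ gives $S=\gen(Z)=(Z_1,R_{Z_1}(Z_2,Z_3))$, so $S_{\sm I}=R_{Z_1}(Z_2,Z_3)$ genuinely depends on $Z_1$; resampling $Z_1$ with $(Z_2,Z_3)$ fixed changes $S_{\sm I}$, so \cref{eq:restrictiveness} fails, i.e.\ $R(\set{1}\scolon p,\gen,\enc^*)$ is false. (Consistency holds since $S_1=Z_1$, as \cref{thm:master} demands.)

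Now suppose, for contradiction, that some algorithm $\A$ is sufficient for restrictiveness of $S_I$. Because $\strat(h^*)=\strat(\tilde h)$, $\A$ returns the \emph{same} model $m=(p_m,\gen_m,\enc_m)$ on both inputs; applying the sufficiency requirement once with oracle $h^*$ and once with oracle $\tilde h$, $m$ must satisfy $R(\set{1}\scolon p_m,\gen_m,\mathrm{id})$, $R(\set{1}\scolon p_m,\gen_m,\gen^{-1})$, and $\gen_m(Z_m)\deq S$. By \cref{ass:continuous} and the support argument in the proof of \cref{calc:R_union}, the first relation forces the last two output coordinates $[\gen_m]_{\set{2,3}}$ to be constant along the first-coordinate fibre on $\B(Z_m)$; substituting this into the second relation (whose relevant block is $R_{[\gen_m]_1(\cdot)}^{-1}[\gen_m]_{\set{2,3}}(\cdot)$), and using that a rotation fixing a nonzero vector is the identity together with $[\gen_m]_1$ taking values in $[0,1]$ (from $\gen_m(Z_m)\deq S$), forces $[\gen_m]_1$ to be constant along that fibre as well. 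Hence $\gen_m$ is independent of its first coordinate on $\B(Z_m)$; since $\B(Z_m)$ carries full $p_m$-probability and $\enc_m\circ\gen_m=\mathrm{id}$ $p_m$-a.s.\ (\cref{ass:invertible}), $Z_{m,1}$ is $p_m$-a.s.\ a deterministic function of $(Z_{m,2},Z_{m,3})$, contradicting \cref{ass:lebesgue} (coordinate $1$ is continuous, so its conditional density cannot be a point mass). The same three-line argument applies for each of the three supervision strategies, so none is sufficient.

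The main obstacle is the construction itself: one needs a single deformation of the identity oracle that simultaneously lies in $\H$, is invisible to restricted labeling, match pairing, \emph{and} rank pairing, and still breaks restrictiveness — and the obvious candidate, a coordinate shear $z_2\mapsto z_2+z_1$, fails because it shifts the Gaussian mean (so distribution matching breaks) and is discontinuous on a bounded domain; the isotropic-Gaussian-plus-rotation device is exactly what makes every check go through and is the reason $S_{\sm I}$ must be at least two-dimensional. The only other nontrivial step is converting the zero-expectation restrictiveness condition \cref{eq:restrictiveness} into the pointwise-on-$\B(Z_m)$ statements used in the final argument, but this is precisely the reasoning already carried out for \cref{calc:R_union} and can be reused verbatim.
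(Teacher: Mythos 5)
Your proof is correct and uses essentially the same counterexample device as the paper: an identity oracle paired with a generator that rotates a rotationally-invariant two-dimensional factor $(z_2,z_3)$ by the angle $z_1$, which is invisible to restricted labeling, match pairing, and rank pairing on the first factor yet breaks restrictiveness (the paper uses the uniform distribution on a disc with $s_1\in[0,2\pi)$ rather than an isotropic Gaussian, an inessential difference). The only real distinction is that you spell out the two-oracle indistinguishability argument and the resulting contradiction with invertibility, which the paper leaves implicit after exhibiting the matched distributions and the failed restrictiveness relations.
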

\begin{proof}
We construct the following counterexample. Let $n=m=3$ and $I=\{1\}$. The data generation process is $s_1 \sim \mathrm{unif}([0,2 \pi))$, $(s_2, s_3) \sim \mathrm{unif}(\{(x,y):x^2+y^2\le 1\})$, $\gen^*(s)=[s_1, s_2, s_3]$. Consider a generator $z_1 \sim \mathrm{unif}([0,2 \pi))$, $(s_2, s_3) \sim \mathrm{unif}(\{(x,y):x^2+y^2\le 1\})$, $\gen(z)=[z_1, \cos{(z_1)} z_2-\sin{(z_1)}z_3, \sin{(z_1)}z_2+\cos{(z_1)}z_3]$. Then $(x_d, s_I) \deq (x_g, z_I)$ but $R(I; p,\gen,\enc^*)$ and $R(I \scolon p^*, \gen^*, e)$ does not hold. The same counterexample is applicable for match pairing and rank pairing. 
\end{proof}

\end{document}